\documentclass[letterpaper]{article}%
\usepackage{aaai16}
\usepackage{times}
\usepackage{helvet}
\usepackage{courier}

\usepackage{amsmath}
\usepackage{amssymb}
\usepackage{graphicx}
\usepackage{url}

\newtheorem{prop}{Proposition}
\newtheorem{lemma}{Lemma}
\newtheorem{thm}{Theorem}
\newtheorem{example}{Example}

\newtheorem{cor}{Corollary}
\newtheorem{definition}{Definition}

\long\def\BOC#1\EOC{\message{(Commented text )}}
\long\def\BOCC#1\EOCC{\message{(Commented text )}}
\long\def\BOCCC#1\EOCCC{\message{(Commented text )}}
\long\def\optional#1{\empty}
\long\def\NB#1{}
\long\def\NC#1{}

\def\o{\overline}
\def\bi{\begin{itemize}}
\def\ii{\item}
\def\ei{\end{itemize}}
\def\beq{\begin{equation}}
\def\eeq#1{\label{#1}\end{equation}}
\def\ba{\begin{array}}
\def\ea{\end{array}}
\def\i#1{\hbox{\it #1\/}}
\def\mi#1{\mathit{#1}}

\def\sm{\hbox{\rm SM}}
\def\lpmln{\hbox{\rm LP}^{\rm{MLN}}}
\def\lpmln{{\rm LP}^{\rm{MLN}}}
\def\no{\i{not}}
\def\sneg{\sim\!\!}
\def\ar{\leftarrow}
\def\rar{\rightarrow}

\def\no{\i{not}}
\def\mvis{\!=\!}
\def\false{\hbox{\bf f}}
\def\true{\hbox{\bf t}}

\def\i#1{\hbox{\itshape #1\/}}

\def\qed{\quad \vrule height7.5pt width4.17pt depth0pt \medskip}
\def\smmodels{\models_{\text{\sm}}}
\def\wdbprm{W^{\prime\prime}}
\def\pdbprm{P^{\prime\prime}}
\def\smdbprm{SM^{\prime\prime}}
\def\grdtms{\textbf{\textit{At}}}

\def\L{\mathbb{L}}
\def\P{\mathbb{P}}

\begin{document}

\title{Weighted Rules under the Stable Model Semantics}

\author{Joohyung Lee and Yi Wang\\ 
School of Computing, Informatics and Decision Systems Engineering \\
Arizona State University, Tempe, USA \\
{\tt \{joolee, ywang485\}@asu.edu}
}

\maketitle

\begin{abstract}
We introduce the concept of weighted rules under the stable model semantics following the log-linear models of Markov Logic. This provides versatile methods to overcome the deterministic nature of the stable model semantics, such as resolving inconsistencies in answer set programs, ranking stable models, associating probability to stable models, and applying statistical inference to computing weighted stable models. We also present formal comparisons with related formalisms, such as answer set programs, Markov Logic, ProbLog, and P-log.
\end{abstract}

\section{Introduction} \label{sec:intro}

Logic programs under the stable model semantics \cite{gel88} is the language of Answer Set Programming (ASP). 
Many extensions of the stable model semantics have been proposed to incorporate various constructs in knowledge representation. Some of them are related to overcoming the ``crisp'' or deterministic nature of the stable model semantics by ranking stable models using weak constraints \cite{buccafurri00enhancing}, by resolving inconsistencies using Consistency Restoring rules  \cite{balduccini03logic} or possibilistic measure \cite{bauters10possibilistic}, and by assigning probability to stable models \cite{baral09probabilistic,nickles14probabilistic}. 

In this paper, we present an alternative approach by introducing the notion of weights into the stable model semantics following the log-linear models of Markov Logic \cite{richardson06markov}, a successful approach to combining first-order logic and probabilistic graphical models. 
Instead of the concept of classical models adopted in Markov Logic, language $\lpmln$ adopts stable models as the logical component. The relationship between $\lpmln$ and Markov Logic is analogous to the known relationship between ASP and SAT. 
Indeed, many technical results about the relationship between SAT and ASP naturally carry over between $\lpmln$ and Markov Logic. In particular, an implementation of Markov Logic can be used to compute ``tight'' $\lpmln$ programs, similar to the way ``tight'' ASP programs can be computed by SAT solvers. 

It is also interesting that the relationship between Markov Logic and SAT is analogous to the relationship between $\lpmln$ and ASP: the way that Markov Logic extends SAT in a probabilistic way is similar to the way that $\lpmln$ extends ASP in a probabilistic way. This can be summarized as in the following figure. (The parallel edges imply that the ways that the extensions are defined are similar to each other.)

\begin{center}
  \includegraphics[height=2cm]{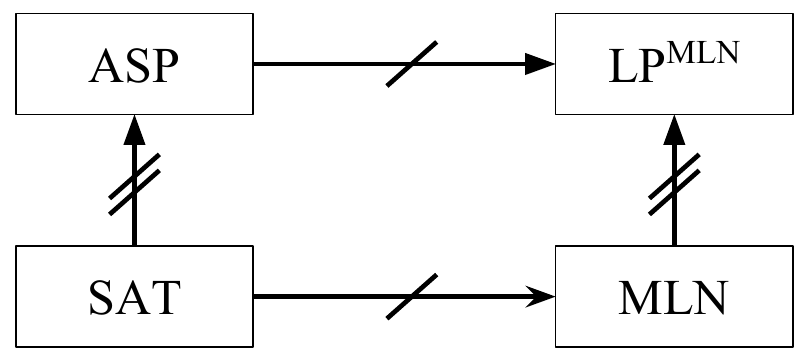}
\end{center}


Weighted rules of $\lpmln$ provides a way to resolve inconsistencies among  ASP knowledge bases, possibly obtained from different sources with different certainty levels. For example, consider the simple ASP knowledge base  $\i{KB}_1$: 
\[
\ba {lrcl}
   & \i{Bird}(x) &\ar& \i{ResidentBird}(x)  \\
   & \i{Bird}(x) &\ar& \i{MigratoryBird}(x)  \\
   &   &\ar& \i{ResidentBird}(x), \i{MigratoryBird}(x) . \\
\ea
\]
One data source $\i{KB}_2$ (possibly acquired by some information extraction module) says that $\i{Jo}$ is a $\i{ResidentBird}$:
\[
\ba {lrcl}
  & \i{ResidentBird}(\i{Jo})
\ea
\] 
while another data source $\i{KB}_3$ states that $\i{Jo}$ is a $\i{MigratoryBird}$: 
\[
\ba {lrcl}
   & \i{MigratoryBird}(\i{Jo}).
\ea
\]
The data about $\i{Jo}$ is actually inconsistent w.r.t.~$\i{KB}_1$, so under the (deterministic) stable model semantics, the combined knowledge base $\i{KB}= \i{KB}_1\cup \i{KB}_2\cup \i{KB}_3$ is not so meaningful. On the other hand, it is still intuitive to conclude that  
$\i{Jo}$ is likely a $\i{Bird}$, and may be a $\i{ResidentBird}$ or a $\i{MigratoryBird}$. 
Such reasoning is supported in $\lpmln$.

Under some reasonable assumption, normalized weights of stable models can be understood as probabilities of the stable models.  We show that ProbLog \cite{deraedt07problog,fierens15inference} can be viewed as a special case of $\lpmln$. Furthermore, we present a subset of $\lpmln$ where probability is naturally expressed and show how it captures
a meaningful fragment of P-log \cite{baral09probabilistic}. In combination of the result that relates $\lpmln$ to Markov Logic, the translation from P-log to $\lpmln$ yields an alternative, more scalable method for computing the fragment of P-log using standard implementations of Markov Logic.

The paper is organized as follows. After reviewing the deterministic stable model semantics, we define the language $\lpmln$ and demonstrate how it can be used for resolving inconsistencies. Then we relate $\lpmln$ to each of ASP, Markov Logic, and ProbLog, and define a fragment of $\lpmln$ language that allows probability to be represented in a more natural way. Next we show how a fragment of P-log can be turned into that fragment of $\lpmln$, and demonstrate the effectiveness of the translation-based computation of the P-log fragment over the existing implementation of P-log.

This paper is an extended version of  \cite{lee15aprobabilistic,lee15markov}. The proofs are available from the longer version at \url{http://reasoning.eas.asu.edu/papers/lpmln-kr-long.pdf}.




\section{Review: Stable Model Semantics}  \label{sec:prelim}


We assume a first-order signature~$\sigma$ that contains no function constants of positive arity, which yields finitely many Herbrand interpretations. 



We say that a formula is {\sl negative} if every occurrence of every atom in this formula is in the scope of negation. 

A {\em rule} is of the form
\beq
\ba l
    A\ar B\land N
\ea 
\eeq{rule}
where $A$ is a disjunction of atoms, $B$ is a conjunction of atoms, and $N$ is a negative formula constructed from atoms using conjunction, disjunction and negation.  
We identify rule~\eqref{rule} with formula $B\land N\rar A$.
We often use comma for conjunction, semi-colon for disjunction, $\no$ for negation, as widely used in the literature on logic programming. For example, $N$ could be 
\[
  \neg B_{m+1}\!\land\!\dots\!\land\!\neg B_n  
  \!\land\!\neg\neg B_{n+1}\!\land\!\dots\!\land\!\neg\neg B_p, 
\]
which can be also written as 
\[
\no\ B_{m+1}, \dots, \no\ B_n, \no\ \no\ B_{n+1},\dots, \no\ \no\ B_p.
\]

We write $\{A_1\}^{\rm ch}\ar \i{Body}$, where $A_1$ is an atom, to denote the rule $A_1\ar \i{Body}\land \neg\neg A_1$. This expression is called a ``choice rule'' in ASP.   If the head of a rule ($A$ in \eqref{rule}) is $\bot$, we often omit it and call such a rule {\em constraint}.

A {\sl logic program} is a finite conjunction of rules. A logic program is called {\sl ground} if it contains no variables.

We say that an Herbrand interpretation $I$ is a {\em model} of a ground program $\Pi$ if $I$ satisfies all implications~\eqref{rule} in~$\Pi$ (as in classical logic). Such models can be divided into two groups: ``stable'' and ``non-stable'' models, which are distinguished as follows. 
The {\em reduct} of $\Pi$ relative to $I$, denoted $\Pi^I$, consists of 
``$
  A\ar B
$''
for all rules~\eqref{rule} in $\Pi$ such that $I\models N$. 
The Herbrand interpretation $I$ is called a {\em (deterministic) stable model} of~$\Pi$ if $I$ is a minimal Herbrand model of $\Pi^I$. 
(Minimality is understood in terms of set inclusion. We identify an Herbrand
interpretation with the set of atoms that are true in it.)
%

The definition is extended to any non-ground program~$\Pi$ by identifying it with $gr_\sigma[\Pi]$, the ground program obtained from~$\Pi$ by replacing every variable with every ground term of~$\sigma$.

\section{Language $\lpmln$} \label{sec:lpmln}

\subsection{Syntax of $\lpmln$}\label{ssec:lpmln-syntax}

The syntax of $\lpmln$ defines a set of weighted rules. More precisely, an $\lpmln$ program $\Pi$ is a finite set of weighted rules $w: R$, where $R$ is a rule of the form~\eqref{rule} and $w$ is either a real number or the symbol $\alpha$ denoting the ``infinite weight.''  We call rule $w:R$ {\em soft} rule if $w$ is a real number, and {\em hard} rule if $w$ is $\alpha$. 

We say that an $\lpmln$ program is {\sl ground} if its rules contain no variables. We identify any $\lpmln$ program~$\Pi$ of signature~$\sigma$ with a ground $\lpmln$ program $gr_\sigma[\Pi]$, whose rules are obtained from the rules of $\Pi$ by replacing every variable with every ground term of~$\sigma$. 
The weight of a ground rule in $gr_\sigma[\Pi]$ is the same as the weight of the rule in~$\Pi$ from which the ground rule is obtained.
By $\o{\Pi}$ we denote the unweighted logic program obtained from $\Pi$, i.e., $\o{\Pi}=\{R\ \mid\ w:R\;\in\;\Pi\}$.

\subsection{Semantics of $\lpmln$} \label{ssec:lpmln-semantics}

A model of a Markov Logic Network (MLN) does not have to satisfy all formulas in the MLN.  For each model, there is a unique maximal subset of the formulas that are satisfied by the model, and the weights of the formulas in that subset determine the probability of the model.

Likewise, a stable model of an $\lpmln$ program does not have to be obtained from the whole program. Instead, each stable model is obtained from some subset of the program, and the weights of the rules in that subset determine the probability of the stable model.
Unlike MLNs, it may not seem obvious if there is a {\sl unique} maximal subset that derives such a stable model.
The following proposition tells us that this is indeed the case, and furthermore that the subset is exactly the set of all rules that are satisfied by~$I$. 

\begin{prop}\label{prop:sm_subset}\optional{prop:sm_subset}
For any (unweighted) logic program~$\Pi$ and any subset~$\Pi'$ of~$\Pi$, if $I$ is a stable model of~$\Pi'$ and $I$ satisfies~$\Pi$, then $I$ is a stable model of~$\Pi$ as well.
\end{prop}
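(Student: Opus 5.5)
We argue directly from the reduct-based definition of stable models given in the review section, working with ground programs (the non-ground case reduces to this by identifying $\Pi$ with $gr_\sigma[\Pi]$, since $gr_\sigma[\Pi']\subseteq gr_\sigma[\Pi]$ whenever $\Pi'\subseteq\Pi$). Stability of $I$ for $\Pi$ requires two things: that $I$ is a model of $\Pi^I$, and that no proper subset of $I$ is a model of $\Pi^I$. We establish these separately.

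\emph{Step 1: $I\models\Pi^I$.} Take any rule $A\ar B$ in $\Pi^I$. It comes from a rule $A\ar B\land N$ in $\Pi$ with $I\models N$. Since $I$ satisfies $\Pi$, it satisfies $B\land N\rar A$. Because $I\models N$, this gives $I\models B\rar A$, as required.

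\emph{Step 2: minimality.} Suppose $J\subsetneq I$ and $J\models\Pi^I$. Since $\Pi'\subseteq\Pi$, the reduct is monotone in the program: every rule of $\Pi'^I$ arises from a rule of $\Pi'$ whose negative part $N$ is true in $I$, so it is also a rule of $\Pi^I$. Hence $\Pi'^I\subseteq\Pi^I$, and $J\models\Pi'^I$. This contradicts the fact that $I$ is a minimal model of $\Pi'^I$, which holds because $I$ is a stable model of $\Pi'$. Therefore $I$ is a minimal model of $\Pi^I$, that is, a stable model of $\Pi$.

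There is no real obstacle here. The only point needing care is that in Step 2 the reduct is taken with respect to the same $I$ for both programs, which is what makes $\Pi'^I\subseteq\Pi^I$ immediate. Step 1 is where the hypothesis that $I$ satisfies all of $\Pi$, not just $\Pi'$, is genuinely used.
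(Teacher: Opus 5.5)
Your proof is correct, and it takes a more elementary route than the paper.

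The paper never works with the reduct directly. It appeals to the loop-formula characterization (Theorem~\ref{thm:lf_subset}, quoted from the literature): an interpretation $I$ satisfying $\Pi$ is stable iff $I\models LF_{\Pi}(L)$ for every nonempty set $L$ of atoms. It then observes that the disjuncts of the external support formula $ES_{\Pi'}(L)$ are among those of $ES_{\Pi}(L)$, so $LF_{\Pi'}(L)$ entails $LF_{\Pi}(L)$. The hypothesis $I\models\Pi$ is what licenses applying the characterization to $\Pi$.

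Your argument is the reduct-level counterpart of the same phenomenon. With $I$ fixed, enlarging the program enlarges $\Pi^I$, which can only eliminate candidate models $J\subsetneq I$. In the paper's version, enlarging the program adds external supports and so weakens each loop formula.

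Your version is self-contained. It uses only the definition from the review section and needs no loop-formula machinery. It should also carry over essentially verbatim to reducts that commute with conjunction, such as Ferraris-style reducts of arbitrary formulas.

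The paper's version is uniform with the rest of its appendix. There the same comparison of external support formulas is reused to relate $LF_{\overline{\Pi}}$ and $LF_{\overline{\Pi_I}}$, and to prove the completion and loop-formula results.
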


The proposition tells us that if $I$ is a stable model of a program, adding more rules to this program does not affect that $I$ is a stable model of the resulting program as long as $I$ satisfies the rules added. On the other hand, it is clear that $I$ is no longer a stable model if $I$ does not satisfy at least one of the rules added.

For any $\lpmln$ program $\Pi$,  by ${\Pi}_I$ we denote the set of rules $w: R$ in $\Pi$ such that $I\models R$, and 
by $\sm[\Pi]$ we denote the set $\{I \mid \text{$I$ is a stable model of $\o{\Pi_I}$}\}.$ 
We define the {\em unnormalized weight} of an interpretation $I$ under $\Pi$, denoted $W_\Pi(I)$, as 
\[
 W_\Pi(I) =
\begin{cases}
  exp\Bigg(\sum\limits_{w:R\;\in\; {\Pi}_I} w\Bigg) & 
      \text{if $I\in\sm[\Pi]$}; \\
  0 & \text{otherwise}. 
\end{cases}
\]
Notice that $\sm[\Pi]$ is never empty because it always contains~$\emptyset$. It is easy to check that $\emptyset$ always satisfies $\o{{\Pi}_\emptyset}$, and it is the smallest set that satisfies the reduct $(\o{{\Pi}_\emptyset})^\emptyset$.

The {\em normalized weight} of an interpretation $I$ under~$\Pi$, denoted $P_\Pi(I)$, is defined as  
\[ 
  P_\Pi(I)  = 
  \lim\limits_{\alpha\to\infty} \frac{W_\Pi(I)}{\sum_{J\in {\rm SM}[\Pi]}{W_\Pi(J)}}. 
\] 

It is easy to check that normalized weights satisfy the Kolmogorov axioms of probability. So we also call them {\em probabilities}.

We omit the subscript $\Pi$ if the context is clear. 
We say that $I$ is a {\sl (probabilistic) stable model} of $\Pi$ if $P_\Pi(I)\ne 0$.

The intuition here is similar to that of Markov Logic. For each interpretation $I$, we try to find a maximal subset (possibly empty) of~$\o{\Pi}$ for which $I$ is a stable model (under the standard stable model semantics). 
In other words, the $\lpmln$ semantics is similar to the MLN semantics except that the possible worlds are the {\em stable} models of some maximal subset of~$\o{\Pi}$, and the probability distribution is over these stable models. Intuitively, $P_\Pi(I)$ indicates how likely to draw $I$ as a stable model of some maximal subset of $\o{\Pi}$.

For any proposition $A$, $P_\Pi(A)$ is defined as 
\begin{align*}
 P_\Pi(A) = \sum_{I :\  I\models A} P_\Pi(I). 
\end{align*}


\NC{
\cmag
Due to the difference between stable models and classical models, there are subtlety here. 
When $\Pi$ is empty, the only information we can derive is the empty set. Hence $P_\Pi(p\lor \neg p)$ is 1. $P_\Pi(p\lor \sneg p)$ is 0 because the only soft stable model $\emptyset$ does not satisfy $p\lor \neg p$. [shall we move it to later?]
}


Conditional probability under $\Pi$ is defined as usual. For propositions $A$ and $B$,
\begin{align*}
 P_\Pi(A \mid B) =  \frac{P_\Pi(A\land B)}{P_\Pi(B)}.
\end{align*}

Often we are interested in stable models that satisfy all hard rules (hard rules encode definite knowledge), in which case the probabilities of stable models can be computed from the weights of the soft rules only, as described below.

%

For any $\lpmln$ program $\Pi$, by $\Pi^{\rm soft}$ we denote the set of all soft rules in~$\Pi$, and by $\Pi^{\rm hard}$ the set of all hard rules in~$\Pi$.
Let $\sm'[\Pi]$ be the set 
$$\{I \mid \text{$I$ is a stable model of $\o{\Pi_I}$ that satisfy 
$\o{{\Pi^{\rm hard}}}$ }\},$$
and let
\[
 W'_\Pi(I) =
\begin{cases}
  exp\Bigg(\sum\limits_{w:R\;\in\; (\Pi^{\rm soft})_I} w\Bigg) & 
      \text{if $I\in\sm'[\Pi]$}; \\
  0 & \text{otherwise},
\end{cases}
\]
\[
P'_\Pi(I) = 
     \frac{W'_{\Pi}(I)}
         {\sum_{J\in{\rm SM}'[\Pi]}{W'_{\Pi}(J)}}.
\]
Notice the absence of $\lim\limits_{\alpha\to\infty}$ in the definition of $P'_\Pi[I]$. Also, unlike $P_\Pi(I)$, $\sm'[\Pi]$ may be empty, in which case $P'_\Pi(I)$ is not defined. Otherwise, the following proposition tells us that the probability of an interpretation can be computed by considering the weights of the soft rules only.

\begin{prop}\label{prop:soft}
If $\sm'[\Pi]$ is not empty, for every interpretation~$I$, $P'_\Pi(I)$ coincides with $P_\Pi(I)$.
\end{prop}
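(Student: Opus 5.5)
The plan is to write every weight $W_\Pi(J)$ with $J\in\sm[\Pi]$ as a product of a hard part and a soft part, and then let $\alpha\to\infty$ in the defining ratio for $P_\Pi(I)$. For any $J\in\sm[\Pi]$, let $h(J)$ be the number of hard rules of $\Pi$ (after grounding) satisfied by $J$, and let $s(J)=\sum_{w:R\in(\Pi^{\rm soft})_J} w$. Then $W_\Pi(J)=e^{\alpha h(J)}e^{s(J)}$. Let $H$ be the total number of ground hard rules. A first observation is that $J\in\sm'[\Pi]$ iff $J\in\sm[\Pi]$ and $h(J)=H$, and that for such $J$ we have $W'_\Pi(J)=e^{s(J)}$. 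This follows directly from the definitions, since $\sm'[\Pi]$ is just $\sm[\Pi]$ restricted to interpretations satisfying $\o{\Pi^{\rm hard}}$.

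Next, divide the numerator and the denominator of the ratio by $e^{\alpha H}$. The denominator becomes
\[
\sum_{J\in\sm'[\Pi]} e^{s(J)} \;+\; \sum_{J\in\sm[\Pi]\setminus\sm'[\Pi]} e^{-\alpha(H-h(J))}e^{s(J)} .
\]
Because the signature yields finitely many Herbrand interpretations, both sums are finite. Every term of the second sum has $H-h(J)\ge 1$, so it tends to $0$ as $\alpha\to\infty$. The first sum is a constant, and it is strictly positive because $\sm'[\Pi]$ is nonempty. So the denominator tends to $\sum_{J\in\sm'[\Pi]}W'_\Pi(J)>0$.

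For the numerator there are three cases.
\begin{itemize}
\item If $I\in\sm'[\Pi]$, the numerator is exactly $e^{s(I)}=W'_\Pi(I)$, independent of $\alpha$.
\item If $I\in\sm[\Pi]\setminus\sm'[\Pi]$, the numerator is $e^{-\alpha(H-h(I))}e^{s(I)}\to 0$, which matches $W'_\Pi(I)=0$.
\item If $I\notin\sm[\Pi]$, both weights are $0$.
\end{itemize}
Since the limit of the denominator is positive, the limit of the quotient is the quotient of the limits, and this equals $P'_\Pi(I)$.

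The argument is essentially routine. The one point needing care is the positivity of the limiting denominator, which is exactly where the hypothesis that $\sm'[\Pi]\neq\emptyset$ is used, together with finiteness of the set of interpretations so that the limit can be exchanged with the sum. I would also make explicit that $\alpha$ is treated as a real parameter and that grounding is harmless: each ground hard rule contributes $\alpha$, so the count $h(J)$ is over ground instances.
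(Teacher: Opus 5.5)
Your proof is correct and takes essentially the same route as the paper's. Both arguments factor out $e^{\alpha H}$, note that the terms for interpretations violating some hard rule vanish as $\alpha\to\infty$, and use the nonemptiness of $\sm'[\Pi]$ to get a positive limit. The one cosmetic difference is in the case $I\in\sm[\Pi]\setminus\sm'[\Pi]$: the paper lower-bounds the denominator by the term of a single witness $K\in\sm'[\Pi]$, whereas you compute the full limit of the denominator once and reuse it in all three cases.
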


It follows from this proposition that if $\sm'[\Pi]$ is not empty, 
then every stable model of~$\Pi$ (with non-zero probability) should satisfy all hard rules in $\Pi$. 
\NC{
\cmag Note that Example~\ref{ex:inconsistency} does not satisfy the nonemptiness condition of $\sm'[\Pi]$, whereas the following example does. 
}


\subsection{Examples} 

The weight scheme of $\lpmln$ provides a simple but effective way to resolve certain inconsistencies in ASP programs.

\NB{
1. with alpha weight for KB2 and KB3

2. certainty factor 2 and 1. 

3. with only kb2 

[[ here it's better tod o with mvpf with UEC ]]

[[ add default rule?  0.001 {bird(x)=false}
Pr[Bird(x) = false ] = 0
Pr[Bird(x)= 
}


\begin{example}\label{ex:inconsistency}
The example in the introduction can be represented in $\lpmln$ as 
{\small
\[
\ba {lllr}
\i{KB}_1\ \ \ \ & \alpha: & \i{Bird}(x) \ar \i{ResidentBird}(x) & (r1) \\
     & \alpha: & \i{Bird}(x) \ar \i{MigratoryBird}(x)  & (r2)\\
     & \alpha: &   \ar \i{ResidentBird}(x), \i{MigratoryBird}(x) & (r3) \\[0.3em]
\i{KB}_2\ \ \ \ &  \alpha: & \i{ResidentBird}(\i{Jo}) & (r4) \\ [0.3em]
\i{KB}_3\ \ \ \ &  \alpha: & \i{MigratoryBird}(\i{Jo}) & (r5)
\ea
\]
}

Assuming that the Herbrand universe is $\{Jo\}$, the following table shows the weight and the probability of each interpretation.  

%
{\scriptsize
\begin{tabular}{| c | c | c | c | }
\hline
I & $\Pi_I$ & $W_\Pi(I)$ & $P_\Pi(I)$  \\ \hline
$\emptyset$ & $\{r_1,r_2,r_3\}$ & $e^{3\alpha}$ 
   & $0$  \\
$\{R(\i{Jo})\}$ & $\{r_2,r_3,r_4\}$ & $e^{3\alpha}$ 
  & $0$ \\
$\{M(\i{Jo})\}$ & $\{r_1,r_3,r_5\}$ & $e^{3\alpha}$ 
  & $0$ \\
$\{B(\i{Jo})\}$ & $\{r_1,r_2,r_3\}$ & $0$ 
  & $0$ \\
$\{R(\i{Jo}),B(\i{Jo})\}$ & $\{r_1,r_2,r_3,r_4\}$ & $e^{4\alpha}$ 
  & $1/3$ \\
$\{M(\i{Jo}),B(\i{Jo})\}$ & $\{r_1,r_2,r_3,r_5\}$ & $e^{4\alpha}$ 
  & $1/3$ \\
$\{R(\i{Jo}),M(\i{Jo})\}$ & $\{r_4,r_5\}$ & $e^{2\alpha}$
  & $0$ \\
$\{R(\i{Jo}),M(\i{Jo}),B(\i{Jo})\}$ & $\{r_1,r_2,r_4,r_5\}$ & $e^{4\alpha}$ 
  & $1/3$ \\
\hline
\end{tabular}
}

\smallskip\noindent
(The weight of $I=\{\i{Bird}(\i{Jo})\}$ is 0 because $I$ is not a stable model of $\o{\Pi_I}$.)
Thus we can check that 
\bi

\item $P(\i{Bird}(\i{Jo})) = 1/3 + 1/3 + 1/3 = 1 $.

\item  $P(\i{Bird}(\i{Jo}) \mid \i{ResidentBird}(\i{Jo})) = 1$.

\item  $P(\i{ResidentBird}(\i{Jo})\mid \i{Bird}(\i{Jo})) = 2/3$.
\ei



Instead of $\alpha$, one can assign different certainty levels to the additional knowledge bases, such as
{
\[
\ba {lllr}
  \i{KB}'_2 &  2: & \i{ResidentBird}(\i{Jo})   & (r4') \\[0.5em]
  \i{KB}'_3 &  1: & \i{MigratoryBird}(\i{Jo})  & (r5')
\ea 
\] 
}

Then the table changes as follows. \\[0.5em]
%
{\scriptsize 
\begin{tabular}{| c | c | c | c | }
\hline
I & $\Pi_I$ & $W_\Pi(I)$ & $P_\Pi(I)$  \\ \hline
$\emptyset$ & $\{r_1,r_2,r_3\}$ & $e^{3\alpha}$ 
   & $\frac{e^0}{e^2+e^1+e^0}$  \\
$\{R(\i{Jo})\}$ & $\{r_2,r_3,r_4'\}$ & $e^{2\alpha+2}$ 
  & $0$ \\
$\{M(\i{Jo})\}$ & $\{r_1,r_3,r_5'\}$ & $e^{2\alpha+1}$ 
  & $0$ \\
$\{B(\i{Jo})\}$ & $\{r_1,r_2,r_3\}$ & $0$ 
  & $0$ \\
$\{R(\i{Jo}),B(\i{Jo})\}$ & $\{r_1,r_2,r_3,r_4'\}$ & $e^{3\alpha+2}$ 
  & $\frac{e^2}{e^2+e^1+e^0}$ \\
$\{M(\i{Jo}),B(\i{Jo})\}$ & $\{r_1,r_2,r_3,r_5'\}$ & $e^{3\alpha+1}$ 
  & $\frac{e^1}{e^2+e^1+e^0}$ \\
$\{R(\i{Jo}),M(\i{Jo})\}$ & $\{r_4',r_5'\}$ & $e^{3}$
  & $0$ \\
$\{R(\i{Jo}),M(\i{Jo}),B(\i{Jo})\}$ & $\{r_1,r_2,r_4',r_5'\}$ & $e^{2\alpha+3}$ 
  & $0$ \\
\hline
\end{tabular}
}

\smallskip\noindent
$P(\i{Bird}(\i{Jo})) = (e^2+e^1) / (e^2+e^1+e^0) =  0.67+0.24$, so it becomes less certain, though it is still a high chance that we can conclude that $\i{Jo}$ is a Bird.

Notice that the weight changes not only affect the probability, but also the stable models (having non-zero probabilities) themselves: Instead of $\{R(\i{Jo}),M(\i{Jo}),B(\i{Jo})\}$, the empty set is a stable model of the new program.
\end{example}

Assigning a different certainty level to each rule affects the probability associated with each stable model, representing how certain we can derive the stable model from the knowledge base. This could be useful as more incoming data reinforces the certainty levels of the information.


\BOC
\begin{example}[$\lpmln$ vs. MLN]
Consider a variant of the main example from \cite{bauters10possibilistic}. We are certain that we booked a concert and that we have a long drive ahead of us unless the concert is cancelled. However, there is a 20\% chance that the concert is indeed cancelled. This example can be formalized in $\lpmln$ program $\P$ as 
\[
\ba {rrcl}
\alpha: & \i{ConcertBooked} &\ar&  \\
\alpha: & \i{LongDrive} &\ar& \i{ConcertBooked}, \no\ \i{Cancelled} \\
ln\ 0.2: & \i{Cancelled} &\ar& \\
ln\ 0.8: &   & \ar& \i{Cancelled}.
\ea
\]

Since $\sm'[\P]$ is not empty, in view of Proposition~\ref{prop:soft}, the probability of the two stable models are as follows: 
\bi
\item $I_1 = \{\i{ConcertBooked}, \i{Cancelled}\}$, with 
      $\i{Pr}_\P[I_1] = \frac{e^{ln 0.2}}{e^{ln 0.2}+e^{ln 0.8}} = 0.2$.
\item $I_2 = \{\i{ConcertBooked}, \i{LongDrive}\}$, with 
      $\i{Pr}_\P[I_2] = \frac{e^{ln 0.8}}{e^{ln 0.2}+e^{ln 0.8}} = 0.8$.
\ei

If this program is understood under the MLN semantics, say in the syntax
\[
\ba {rl}
\alpha: & \i{ConcertBooked} \\
\alpha: & \i{ConcertBooked} \land \neg \i{Cancelled} \rar \i{LongDrive} \\
ln\ 0.2: & \i{Cancelled} \\
ln\ 0.8: & \neg \i{Cancelled},
\ea
\]
there are three MLN models with non-zero probabilities:
\bi
\item $I_1 = \{\i{ConcertBooked}, \i{Cancelled}\}$ with $\i{Pr}[I_1] = 0.2 / 1.4 \simeq 0.1429$. 
\item $I_2 = \{\i{ConcertBooked}, \i{LongDrive}\}$ with $\i{Pr}[I_2] = 0.8 / 1.4 \simeq 0.5714$.
\item $I_3 = \{\i{ConcertBooked}, \i{Cancelled}, \i{LongDrive}\}$ with $\i{Pr}[I_3] = 0.2/1.4 \simeq 0.1429$.
\ei
The presence of $I_3$ is not intuitive (why have a long drive when the concert is cancelled?)
\EOC

\medskip\noindent
{\bf Remark.} In some sense, the distinction between soft rules and hard rules in $\lpmln$ is similar to the distinction CR-Prolog~\cite{balduccini03logic} makes between consistency-restoring rules (CR-rules) and standard ASP rules: some CR-rules are added to the standard ASP program part until the resulting program has a stable model. 
On the other hand, CR-Prolog has little to say when the ASP program has no stable models no matter what CR-rules are added ({\bf c.f.} Example~\ref{ex:inconsistency}).

\begin{example}\label{ex:friends}
``Markov Logic has the drawback that it cannot express (non-ground) inductive definitions'' \cite{fierens15inference} because it relies on classical models.
This is not the case with $\lpmln$. 
For instance, consider that $x$ may influence $y$ if $x$ is a friend to $y$, and the influence relation is a minimal relation that is closed under transitivity.
\[
\ba {l}
\alpha: \i{Friend}(A,B) \\ 
\alpha: \i{Friend}(B,C) \\
1:  \i{Influence}(x,y) \ar \i{Friend}(x,y) \\
\alpha: \i{Influence}(x,y) \ar \i{Influence}(x,z), \i{Influence}(z,y). 
\ea 
\]
Note that the third rule is soft: a person does not necessarily influence his/her friend. The fourth rule says if $x$ influences $z$, and $z$ influences $y$, we can say $x$ influences $y$. On the other hand, we do not want this relation to be vacuously true.

Assuming that there are only three people $A$, $B$, $C$ in the domain (thus there are $1+1+9+27$ ground rules), there are four stable models with non-zero probabilities. Let $Z = e^{9} + 2e^{8} + e^{7}$. ($\i{Fr}$ abbreviates for $\i{Friend}$ and $\i{Inf}$ for $\i{Influence}$)

\begin{itemize}
\item $I_1 = \{\i{Fr}(A,B), \i{Fr}(B,C), \i{Inf}(A,B), \i{Inf}(B,C),\\ \i{Inf}(A,C)\}$ with probability $e^{9} / Z$.

\item $I_2 = \{\i{Fr}(A,B), \i{Fr}(B,C), \i{Inf}(A,B)\}$ with probability $e^{8} / Z$.

\item $I_3 = \{\i{Fr}(A,B), \i{Fr}(B,C),  \i{Inf}(B,C)\}$ with probability $e^{8} / Z$.

\item $I_4 = \{\i{Fr}(A,B), \i{Fr}(B,C) \}$ with probability $e^{7} / Z$.
\end{itemize}

Thus we get 
\bi
\item 
$P(\i{Inf}(A,B)) = 
 P(\i{Inf}(B,C)) = (e^{9} + e^{8}) / Z = 0.7311$.

\item
$P(\i{Inf}(A,C)) = e^{9} / Z = 0.5344$.
\ei

Increasing the weight of the third rule yields higher probabilities for deriving  $\i{Influence}(A,B)$, $\i{Influence}(B,C)$, and $\i{Influence}(A,C)$. Still, the first two have the same probability, and the third has less probability than the first two. 
\end{example} 


\section{Relating $\lpmln$ to ASP}  \label{sec:relation-asp-mln}

Any logic program under the stable model semantics can be turned into an $\lpmln$ program by assigning the infinite weight to every rule.  That is, for any logic program $\Pi=\left\{R_1, \dots, R_n\right\}$, the corresponding $\lpmln$ program $\P_{\Pi}$ is 
$\left\{\alpha: R_1,\dots, \alpha:R_n\right\}$.

\begin{thm}\label{thm:lpmln-asp}\optional{thm:lpmln-asp}
For any logic program $\Pi$, the (deterministic) stable models of $\Pi$ are exactly the (probabilistic) stable models of $\P_\Pi$ whose weight is $e^{k\alpha}$, where $k$ is the number of all (ground) rules in $\Pi$. If $\Pi$ has at least one stable model, then all stable models of $\P_\Pi$ have the same probability, and are thus the stable models of $\Pi$ as well.
\end{thm}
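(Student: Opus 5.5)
The plan is to unfold the definitions for the all-hard program $\P_\Pi$ and compare the exponents of $\alpha$. Since every rule of $\P_\Pi$ has weight $\alpha$, we get $W_{\P_\Pi}(I) = e^{|(\P_\Pi)_I|\,\alpha}$ whenever $I\in\sm[\P_\Pi]$, and $0$ otherwise. Here $|(\P_\Pi)_I|$ is the number of ground rules of $\Pi$ satisfied by $I$, so it is at most $k$. It equals $k$ exactly when $I\models\Pi$, in which case $\o{(\P_\Pi)_I}=\Pi$.

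For the first claim I will prove both directions.

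If $I$ is a deterministic stable model of $\Pi$, then $I\models\Pi$. Hence $\o{(\P_\Pi)_I}=\Pi$, so $I\in\sm[\P_\Pi]$ and its weight is $e^{k\alpha}$. It remains to check that $P_{\P_\Pi}(I)\neq 0$, i.e.\ that $I$ is a probabilistic stable model. The normalizing sum has finitely many terms, since there are finitely many Herbrand interpretations. Each term has the form $e^{m\alpha}$ with $m\le k$, and at least one term (namely $I$'s own) has $m=k$. So the ratio $e^{k\alpha}/\sum_J W(J)$ tends to $1/N$, where $N\ge 1$ is the number of $J\in\sm[\P_\Pi]$ with exponent $k$. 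This is nonzero.

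Conversely, suppose $I$ is a probabilistic stable model whose weight is $e^{k\alpha}$. Then $I\in\sm[\P_\Pi]$ and $|(\P_\Pi)_I|=k$. So $I$ satisfies every rule, $\o{(\P_\Pi)_I}=\Pi$, and $I$ is a stable model of $\Pi$. Proposition~\ref{prop:sm_subset} is not even needed here, since the subset in question is all of $\Pi$.

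For the second claim, assume $\Pi$ has at least one stable model. By the argument above, the maximal exponent appearing in the denominator is $k$. The interpretations attaining it are exactly the $J\in\sm[\P_\Pi]$ with $J\models\Pi$, which are exactly the stable models of $\Pi$; call their number $N\ge1$. Any $I$ with exponent $m<k$ gets probability $\lim e^{m\alpha}/(Ne^{k\alpha}+\dots)=0$. Every stable model of $\Pi$ gets probability $1/N$.

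So the probabilistic stable models, meaning those with nonzero probability, are exactly the stable models of $\Pi$, and they all have the same probability. The only delicate step is this limit computation. It requires the denominator to be a finite sum dominated by the $e^{k\alpha}$ terms, which holds by the finiteness of Herbrand interpretations assumed in the signature.
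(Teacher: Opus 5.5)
Your proof is correct and follows essentially the paper's route. Both arguments identify the elements of $\sm[\P_\Pi]$ with full weight $e^{k\alpha}$ as exactly the stable models of $\Pi$, since $\o{(\P_\Pi)_I}=\Pi$ precisely when $I\models\Pi$, and show that these terms dominate the normalizing sum. The differences are minor: you carry out the $\alpha\to\infty$ limit directly where the paper cites Proposition~\ref{prop:soft} (whose proof is that same dominance argument in general), and you explicitly check $P_{\P_\Pi}(I)\neq 0$ in the first direction, which the paper leaves implicit until its second part.
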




\subsection{Weak Constraints and $\lpmln$} \label{ssec:weak} 

The idea of softening rules in $\lpmln$ is similar to the idea of {\em weak constraints} in ASP, which is used for certain optimization problems. 
A weak constraint has the form 
``$
  {\tt :\sim}\ \i{Body} \ \ [\i{Weight}:\i{Level}].
$''
The stable models of a program $\Pi$ (whose rules have the form \eqref{rule}) plus a set of weak constraints are the stable models of~$\Pi$ with the minimum penalty, where a penalty is calculated from \i{Weight} and \i{Level} of violated weak constraints. 
%


Since levels can be compiled into weights \cite{buccafurri00enhancing}, we consider weak constraints of the form 
\beq
  {\tt :\sim}\ \i{Body} \ \ [\i{Weight}]
\eeq{weak}
where $\i{Weight}$ is a positive integer.  We assume all weak constraints are grounded. 
The penalty of a stable model is defined as the sum of the weights of all weak constraints whose bodies are satisfied by the stable model.

Such a program can be turned into an $\lpmln$ program as follows.
Each weak constraint  \eqref{weak} is turned into
\[
  -w:\ \ \bot\ar\neg\i{Body}.
\]
The standard ASP rules are identified with hard rules in $\lpmln$.
For example, the program with weak constraints
\[
\ba {lll}
 a \lor b  & \hspace{1cm} &   :\sim a\ \ [1] \\
 c\ar b    &     & :\sim b\ \ [1] \\
           &     & :\sim c\ \ [1] 
\ea
\]
is turned into
\[
\ba {rccrc}
 \alpha: & a \lor b  & \hspace{1cm} & -1: & \bot\ar \neg a \\
 \alpha: & c \ar b   & \hspace{1cm} & -1: & \bot\ar \neg b  \\
         &           &              & -1: & \bot\ar \neg c .  \\
\ea
\]
The $\lpmln$ program has two stable models: $\{a\}$ with the normalized weight~$\frac{e^{-1}}{e^{-1}+e^{-2}}$ and $\{b,c\}$ with the normalized weight~$\frac{e^{-2}}{e^{-1}+e^{-2}}$.
The former, with the larger normalized weight, is the stable model of the original program containing the weak constraints.

\begin{prop}\label{prop: weak2lpmln}
For any program with weak constraints that has a stable model, its stable models are the same as the stable models of the corresponding $\lpmln$ program with the highest normalized weight. 
\end{prop}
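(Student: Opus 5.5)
Let $\Pi$ be the ASP program with rules $R_1,\dots,R_n$ and ground weak constraints $C_1,\dots,C_m$, where $C_j$ is ${\tt :\sim}\ \i{Body}_j\ [w_j]$. Let $\P$ be the corresponding $\lpmln$ program. Its hard part is $\{\alpha:R_i\}$ and its soft part is $\{-w_j:\ \bot\ar\neg\i{Body}_j\}$. The plan is to show that $\sm'[\P]$ is exactly the set of stable models of the ASP part $\{R_1,\dots,R_n\}$, and that $W'_\P(I)=e^{-\i{penalty}(I)}$ for each such $I$. Then Proposition~\ref{prop:soft} gives $P_\P(I)=e^{-\i{penalty}(I)}/Z$ with a common normalizer $Z$. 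Maximizing this normalized weight is the same as minimizing the penalty, which is the definition of the stable models of the program with weak constraints.

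\textbf{Step 1 (weights).} The soft rule $\bot\ar\neg\i{Body}_j$ is classically equivalent to $\i{Body}_j$. So $I$ satisfies it iff $I\models\i{Body}_j$, i.e., iff the weak constraint is violated by $I$. Hence $\sum_{w:R\in(\P^{\rm soft})_I}w=-\sum_{j:\,I\models \i{Body}_j}w_j=-\i{penalty}(I)$.

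\textbf{Step 2 (the set $\sm'[\P]$), the main obstacle.} We must show that $I$ is a stable model of $\o{\P_I}$ satisfying every $R_i$ iff $I$ is a stable model of $\{R_1,\dots,R_n\}$.

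For the right-to-left direction, let $I$ be a stable model of the $R_i$. Then $I$ satisfies $\o{\P_I}$, because $\P_I$ consists only of rules true in $I$. Since $\{R_i\}\subseteq\o{\P_I}$, Proposition~\ref{prop:sm_subset} makes $I$ a stable model of $\o{\P_I}$.

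For the left-to-right direction, write $\o{\P_I}=\{R_i\}\cup S$, where $S$ is the set of satisfied soft rules $\bot\ar\neg\i{Body}_j$. Each rule in $S$ has head $\bot$ and body $\neg\i{Body}_j$. This body is a negative formula only if $\i{Body}_j$ is built from atoms in the right way; in general we may first have to read $\neg\i{Body}_j$ as a formula with negation in front.
\begin{itemize}
\item \emph{Case $I\models\neg\i{Body}_j$.} Then $I$ violates the rule, so the rule is not in $S$.
\item \emph{Case $I\not\models\neg\i{Body}_j$.} The rule is a constraint whose negative body is false in $I$, so its reduct relative to $I$ contributes nothing.
\end{itemize}
Either way, $(\o{\P_I})^I$ equals $\{R_i\}^I$ plus nothing. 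Therefore $I$ is a minimal model of $\{R_i\}^I$ and hence a stable model of the $R_i$.

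The delicate point is justifying that $\neg\i{Body}_j$ is a negative formula, so that it lands in $N$ of form~\eqref{rule}. If a positive part were left in $B$ instead, the reduct would keep a constraint $\bot\ar B$, but that constraint is satisfied by $I$ anyway. Since $I$ satisfies the rule, such constraints never affect minimality. So in every reading, adding satisfied constraints preserves and reflects stability.

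\textbf{Step 3.} By the hypothesis, the program with weak constraints has a stable model, so by Step 2 $\sm'[\P]$ is nonempty. Proposition~\ref{prop:soft} then gives $P_\P(I)=P'_\P(I)=e^{-\i{penalty}(I)}/\sum_{J\in\sm'[\P]}e^{-\i{penalty}(J)}$ for $I\in\sm'[\P]$, and $P_\P(I)=0$ otherwise. Any stable model of the ASP part has positive probability, so the interpretations of highest normalized weight lie in $\sm'[\P]$. Within $\sm'[\P]$ the denominator is common, so these are exactly the ASP stable models of minimum penalty, as required.
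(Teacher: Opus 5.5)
Your proof is correct and takes essentially the paper's route. The paper first proves a lemma that the stable models of the ASP part are exactly the probabilistic stable models of the translated program, using $\sm'$, Proposition~\ref{prop:soft}, and the fact that the added soft rules are constraints; it then matches higher normalized weight with lower penalty, which is your Steps~2--3, while your Step~1 makes explicit the weight computation the paper leaves as ``due to how we translate'' (your hedge about negativity is unnecessary, since $\neg\i{Body}_j$ is always a negative formula in the sense of~\eqref{rule}).
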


\section{Relating $\lpmln$ to MLNs}

\subsection{Embedding MLNs in $\lpmln$} \label{ssec:relation:mln2lpmln}

Similar to the way that SAT can be embedded in ASP, 
Markov Logic can be easily embedded in $\lpmln$. More precisely, any MLN $\L$ 
can be turned into an $\lpmln$ program $\Pi_\L$ so that the models of $\L$ coincide with the stable models of $\Pi_\L$ while retaining the same probability distribution. 

$\lpmln$ program $\Pi_\L$ is obtained from $\L$ by turning each weighted formula $w:F$ into weighted rule \hbox{$w:\ \  \bot\ar\neg F$} and adding 
\[ 
   w:\ \  \{A\}^{\rm ch}  
\] 
for every ground atom $A$ of $\sigma$ and any weight $w$.
The effect of adding the choice rules is to exempt $A$ from minimization under the stable model semantics. 

\begin{thm}\label{thm:mln-lpmln}\optional{thm:mln-lpmln}
Any MLN $\L$ and its $\lpmln$ representation $\Pi_\L$ have the same probability distribution over all interpretations.
\end{thm}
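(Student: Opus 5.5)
The plan is to show that every interpretation $I$ lies in $\sm[\Pi_\L]$, and that $W_{\Pi_\L}(I)$ equals the MLN weight $W_\L(I)$ up to a factor that does not depend on $I$. After normalization and the limit $\alpha\to\infty$, the two distributions then coincide. Recall the MLN weight $W_\L(I)=\exp\big(\sum_{w:F\in\L,\ I\models F} w\big)$, where hard formulas carry weight $\alpha$, and the MLN probability is the limit of the normalized weight.

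\emph{Step 1 (every $I$ is a stable model of $\o{(\Pi_\L)_I}$).} Fix $I$. The rules of $(\Pi_\L)_I$ are of two kinds.
\begin{itemize}
\item The constraints $\bot\ar\neg F$ for those $F$ with $I\models F$.
\item All choice rules $A\ar\neg\neg A$, one for every ground atom $A$. Each choice rule is a tautology, so $I$ satisfies it and it always belongs to $(\Pi_\L)_I$.
\end{itemize}
Now form the reduct relative to $I$. A constraint $\bot\ar\neg F$ has empty positive body and a negative formula $\neg F$ as $N$. Since $I\models F$, we have $I\not\models\neg F$, so the constraint is dropped. A choice rule $A\ar\neg\neg A$ survives exactly when $I\models\neg\neg A$, i.e.\ when $A\in I$, and it becomes the fact $A$. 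Hence the reduct is $\{A\mid A\in I\}$, whose unique minimal model is $I$. Since $I$ satisfies $\o{(\Pi_\L)_I}$ by construction, $I\in\sm[\Pi_\L]$.

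One formal point needs care. $\neg F$ with $F$ an arbitrary formula is not literally of the form $N$ in \eqref{rule}, because $N$ is required to be built from atoms with conjunction, disjunction and negation. I will treat $\neg F$ as a negative formula, since every atom occurrence is under negation. If $F$ contains implications, I will first rewrite $F$ classically into an equivalent formula built only from $\land,\lor,\neg$; under an outer $\neg$, classical equivalence is preserved for the reduct. I expect this bookkeeping to be the only real obstacle.

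\emph{Step 2 (weights).} By Step 1, $W_{\Pi_\L}(I)=\exp\big(\sum_{w:F\in\L,\ I\models F} w + c\big)$. Here $c$ is the sum of the weights of all choice rules; it is the same for every $I$ because every choice rule is satisfied by every $I$. (If some choice rules are chosen with weight $\alpha$, $c$ contains a multiple of $\alpha$, but it is still independent of $I$.) So $W_{\Pi_\L}(I)=e^{c}\,W_\L(I)$.

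\emph{Step 3.} In $P_{\Pi_\L}(I)$ the sum ranges over $\sm[\Pi_\L]$, which by Step 1 is the set of all interpretations, exactly the range used for MLNs. The factor $e^c$ cancels between numerator and denominator before the limit is taken. Therefore $P_{\Pi_\L}(I)=P_\L(I)$ for every $I$.
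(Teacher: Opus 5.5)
Your proposal is correct and follows essentially the same route as the paper: the choice rules contribute a factor $e^{c}$ that does not depend on $I$ and cancels before the limit is taken, and every interpretation lies in $\sm[\Pi_\L]$, so both normalizations range over all interpretations. The only difference is that the paper obtains the fact that $I$ is a stable model of $\o{(\Pi_\L)_I}$ by citing Theorem~2 of Ferraris, Lee and Lifschitz (the stable models of $\o{(\Pi_\L)_K}$ are exactly the models of $\o{\L_K}$), whereas you verify it directly by computing the reduct, which is equally valid under the paper's definitions.
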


The embedding tells us that the exact inference in $\lpmln$ is at least as hard as the one in MLNs, which is $\#$P-hard. In fact, it is easy to see that when all rules in $\lpmln$ are non-disjunctive, counting the stable models of $\lpmln$ is in $\#$P, which yields that the exact inference for non-disjunctive $\lpmln$ programs is $\#$P-complete. Therefore, approximation algorithms, such as Gibbs sampling, may be desirable for computing large $\lpmln$ programs. The next section tells us that we can apply the MLN approximation algorithms to computing $\lpmln$ based on the reduction of the latter to the former. 



\subsection{Completion: Turning $\lpmln$ to MLN}\label{ssec:completion}

It is known that the stable models of a tight logic program coincide with the models of the program's completion \cite{erd03}. This yielded a way to compute stable models using SAT solvers. The method can be extended to $\lpmln$ so that probability queries involving the stable models can be computed using existing implementations of MLNs, such as Alchemy (\url{http://alchemy.cs.washington.edu}).


We define the {\em completion} of $\Pi$, denoted $\i{Comp}(\Pi)$, to be the MLN which is the union of $\Pi$ and the hard formula 

{\small
\[
  \alpha:\   
     A\rar\bigvee\limits_{w: A_1\lor\dots\lor A_k\ar \mi{Body} \in\ \Pi \atop A\in\{A_1,\dots, A_k\}}
    \Big(Body\land \bigwedge\limits_{A'\in\{A_1,\dots, A_k\}\setminus \{A\}} \neg A'\Big)
\]
}
for each ground atom $A$.

This is a straightforward extension of the completion from~\cite{lee03a} by simply assigning the infinite weight $\alpha$ to the completion formulas. Likewise, we say that $\lpmln$ program $\Pi$ is {\em tight} if $\o{\Pi}$ is tight according to \cite{lee03a}, i.e., the positive dependency graph of $\o{\Pi}$ is acyclic.

\begin{thm}\label{thm:lpmln-lf}\optional{thm:lpmln-lf}
For any tight $\lpmln$ program $\Pi$ such that $\sm'[\Pi]$ is not empty, 
$\Pi$ (under the $\lpmln$ semantics) and $\i{Comp}(\Pi)$ (under the MLN semantics) have the same probability distribution over
all interpretations.
\end{thm}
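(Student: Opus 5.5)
We reduce both sides to their "hard-satisfying" versions and then compare them term by term using the completion theorem for tight programs. By Proposition~\ref{prop:soft}, since $\sm'[\Pi]$ is nonempty, $P_\Pi(I)=P'_\Pi(I)$ for every $I$. Here $P'_\Pi$ is supported on $\sm'[\Pi]$, the set of $I$ that satisfy $\o{\Pi^{\rm hard}}$ and are stable models of $\o{\Pi_I}$. Each such $I$ has weight $\exp$ of the sum of the weights of the soft rules it satisfies. We will establish the analogous fact for the MLN $\i{Comp}(\Pi)$ and then show the supports and the weights agree.

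\textbf{MLN side.} Let $H$ be the hard part of $\i{Comp}(\Pi)$, namely $\o{\Pi^{\rm hard}}$ together with all completion formulas. Recall that an MLN gives $I$ the weight $\exp\big(\sum_{w:F,\ I\models F} w\big)$. Normalizing and taking $\alpha\to\infty$, the mass concentrates on interpretations satisfying the maximum number of hard formulas. If some interpretation satisfies all of $H$, the limit distribution is proportional to $\exp\big(\sum_{w:R\in\Pi^{\rm soft},\ I\models R} w\big)$ on the models of $H$, and is zero elsewhere. This is the MLN counterpart of Proposition~\ref{prop:soft}, and the argument is the same ratio computation. So it suffices to show the following claim.

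\textbf{Claim.} $I\in\sm'[\Pi]$ iff $I\models H$.

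Nonemptiness of $\sm'[\Pi]$ then gives nonemptiness of the models of $H$. The soft weights coincide because the soft formulas of $\i{Comp}(\Pi)$ are exactly the soft rules of $\Pi$, read classically. Hence the two normalized distributions coincide.

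\textbf{Proof of the claim.} This is the main step. Fix $I$ satisfying $\o{\Pi^{\rm hard}}$. Then $I$ is a stable model of $\o{\Pi_I}$ iff, by the completion theorem for tight programs \cite{erd03,lee03a}, $I$ is a model of $\i{Comp}(\o{\Pi_I})$. Two points need checking.
\begin{itemize}
\item \emph{Tightness.} $\o{\Pi_I}$ is tight, since it is a subset of the tight program $\o{\Pi}$ and its positive dependency graph is a subgraph of an acyclic graph.
\item \emph{Matching the completion formulas.} We must show that $I$ satisfies the completion formulas of $\o{\Pi_I}$ iff it satisfies those of $\o{\Pi}$. The completion of $\o{\Pi}$ has extra disjuncts only for rules $R$ with $I\not\models R$. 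Such an $R$ has $I\models \i{Body}$ and $I$ falsifying every head atom, so each extra disjunct $\i{Body}\land\bigwedge\neg A'$ is false at $I$ for any head atom $A$ with $I\models A$. (For $I\not\models A$ the completion formula for $A$ holds trivially.) Thus, evaluated at $I$, the two completions are equivalent.
\end{itemize}
Since the rules of $\o{\Pi_I}$ are satisfied by $I$ by definition, we obtain that $I\in\sm'[\Pi]$ iff $I\models\o{\Pi^{\rm hard}}$ and $I$ satisfies the completion formulas of $\o{\Pi}$, i.e., iff $I\models H$.

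The main obstacle is this $I$-relative comparison of completions, together with making the $\alpha\to\infty$ limit on the MLN side rigorous. For the latter I would mirror the proof of Proposition~\ref{prop:soft}: divide numerator and denominator by $e^{m\alpha}$, where $m$ is the number of hard formulas, and let $\alpha\to\infty$.
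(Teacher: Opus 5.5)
Your proof is correct, but it takes a somewhat different route from the paper's. The paper never works with the completion directly. It proves a more general loop-formula result in the appendix that covers arbitrary, possibly non-tight, programs. There $\mathit{Comp}(\Pi)$ is replaced by $\Pi$ plus hard loop formulas $LF_{\overline{\Pi}}(L)$ for all loops $L$. The key lemma is that $I\models LF_{\overline{\Pi}}$ iff $I\models LF_{\overline{\Pi_I}}$, proved from the loop-formula characterization of stable models. The theorem is then specialized by citing the known fact that, for tight programs, this loop-formula MLN coincides with $\mathit{Comp}(\Pi)$.

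Your $I$-relative comparison of the completions of $\overline{\Pi_I}$ and $\overline{\Pi}$ is exactly the singleton-loop case of that lemma. Your use of the completion theorem on the tight subprogram $\overline{\Pi_I}$ replaces the loop-formula theorem. The organization also differs. You show that $\mathrm{SM}'[\Pi]$ is exactly the set of models of the hard part $H$. You then let Proposition~\ref{prop:soft} and its MLN analogue (the paper's Lemma~\ref{lem:mln-hard-constraint}) dispose of the $\alpha\to\infty$ limit on both sides uniformly. The paper instead splits on whether $I$ is a stable model of $\overline{\Pi_I}$ and carries $\alpha$ through the first case. The paper's route extends to non-tight programs; yours is shorter and self-contained for the tight case, which is all the statement needs.

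One wording fix. Take a rule $R$ with $I\not\models R$. The extra disjuncts it contributes are actually \emph{true} at $I$, since its body holds and all its head atoms are false. They are harmless because they occur only in completion formulas of atoms false at $I$, where both completion formulas hold trivially. If $I\models A$, every rule with $A$ in its head is satisfied by $I$, so the two completion formulas for $A$ have identical disjuncts. Your parenthetical already covers the case $I\not\models A$, so the conclusion stands.
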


The theorem can be generalized to non-tight programs by considering loop formulas \cite{lin04}, which we skip here for brevity.

\section{Relation to ProbLog} \label{sec:relation-problog}


It turns out that $\lpmln$ is a proper generalization of ProbLog, a well-developed probabilistic logic programming language that is based on the distribution semantics by Sato~(\citeyear{sato95astatistical}).

\subsection{Review: ProbLog} \label{ssec:review-problog}

The review follows \cite{fierens15inference}.
As before, we identify a non-ground ProbLog program with its ground
instance. So for simplicity we restrict attention to ground ProbLog
programs only. 

In ProbLog, ground atoms over $\sigma$ are divided into two groups: {\em probabilistic} 
atoms and {\em derived} atoms. 
A \emph{(ground) ProbLog program} $\P$ is a tuple 
$\langle \i{PF}, \Pi\rangle$, where
\begin{itemize}
\item $\i{PF}$ is a set of ground probabilistic facts of the form
\hbox{$pr :: a$}, 
\item $\Pi$ is a set of ground rules of the following form
\[
  A\ar B_1, \dots, B_m, not\ B_{m+1}, \dots, not\ B_n
\]
where $A$, $B_1,\dots B_n$ are atoms from $\sigma$ ($0\le m\le n$), and $A$ is not a probabilistic atom.
\end{itemize}

Probabilistic atoms act as random variables and are assumed to be
independent from each other. 
A \emph{total choice} $\i{TC}$ is any subset of the probabilistic
atoms. Given a total choice $\i{TC}=\left\{a_1, \dots, a_m\right\}$, the {\em probability} of a total choice $\i{TC}$, denoted $\i{Pr}_\P(\i{TC})$, is defined as
\[ 
\ba l
  pr(a_1)\!\times\!\dots\!\times\! pr(a_m)\!\times\! 
     (1\!-\!pr(b_1))\!\times\!\dots\!\times\! (1\!-\!pr(b_n))
\ea
\] 
where $b_1, \dots, b_n$ are probabilistic atoms not belonging to~$\i{TC}$,
and each of $pr(a_i)$ and $pr(b_j)$ is the probability assigned to $a_i$ and
$b_j$ according to the set $\i{PF}$ of ground probabilistic atoms.


The ProbLog semantics is only well-defined for programs $\P=\langle
\i{PF},\Pi\rangle$ such that $\Pi\cup \i{TC}$ has a ``total'' (two-valued) well-founded model for each total choice~$\i{TC}$. 
Given such $\P$, the probability of an interpretation $I$, denoted $P_\P(I)$, is defined as $\i{Pr}_\P(\i{TC})$ if there exists a total
choice $\i{TC}$ such that $I$ is the total well-founded model of
$\Pi\cup \i{TC}$, and $0$ otherwise.  


\BOCC
\begin{thm}\label{lem: wfm_sm}
When $\Pi$ has a total well-founded model, then this model is also the single stable model of $\Pi$; conversely, if $\Pi$ has a single stable model, this model is also the well-founded model of $\Pi$.
\end{thm}

\begin{thm}
For an arbitrary ProbLog program $\P=\langle F, R\rangle$ for which
\begin{itemize}
\item every total choice $C$ leads to one and only one stable model, or equivalently (by Theorem \ref{lem: wfm_sm}),
\item every total choice $C$ leads to a total well-founded model,
\end{itemize}
 $\P$ in the sense of this section gives the same probability distribution as $P$ in the sense of \cite{fierens15inference}. 
\end{thm}
\EOCC

\BOCCC
\begin{example}
Consider the ProbLog program:
\[ 
\ba {lll}
0.6\ ::\ p  & \hspace{1cm} & r\ar p \\
0.4\ ::\ q  &  & r\ar q 
\ea 
\]
For each of the 4 total choices, the probability distribution over
ProbLog models is as follows. 

\begin{center}
{\small
\begin{tabular}{| l | l | c |}
\hline
 {Total choice} &  {ProbLog model} & {Probability} \\  \hline
 $\emptyset$ &	$\emptyset$ &  $0.24$ \\
 $\left\{p\right\}$ & $\left\{p, r\right\}$ &  $0.36$ \\
 $\left\{q\right\}$ & $\left\{q, r\right\}$ &  $0.16$ \\
 $\left\{p, q\right\}$ & $\left\{p, q, r\right\}$ &  $0.24$ \\	\hline
\end{tabular}
}
\end{center}

\end{example}
\EOCCC

\subsection{ProbLog as a Special Case of $\lpmln$} \label{ssec:problog-lpmln}



Given a ProbLog program $\P =\langle \i{PF}, \Pi\rangle$, we construct
the corresponding $\lpmln$ program ${\P'}$ as follows:
\begin{itemize}
\item For each probabilistic fact $pr :: a$ in $\P$, 
  $\lpmln$ program $\P'$ contains
(i) $ln(pr): a$ and $ln(1\!-\!pr):\ \ \ar a$\ \  if $0< pr < 1$;
(ii) $\alpha: a$\ \  if $pr=1$;
(iii) $\alpha:\ \ \ar a$\ \  if $pr = 0$.

\item For each rule $R\in \Pi$, $\P'$ contains 
  $\alpha: R$. In other words, $R$ is identified with a hard rule in
  $\P'$.
\end{itemize}


\begin{thm}\label{thm:lpmln-problog}\optional{thm:lpmln-problog}
Any well-defined ProbLog program $\P$ and its $\lpmln$ representation $\P'$ have the same probability distribution over all interpretations. 
\end{thm}

\begin{example}
Consider the ProbLog program
\[ 
\ba {lll}
0.6\ ::\ p  & \hspace{2cm} & r\ar p \\
0.3\ ::\ q  &  & r\ar q 
\ea 
\]
which can be identified with the $\lpmln$ program 
\[
\ba {llllll}
  ln(0.6):\ \ p  &  & ln(0.3):\ \ q & & \alpha:\ \ r\ar p \\
	ln(0.4):\ \ \ar p & & ln(0.7):\ \ \ar q & & \alpha:\ \ r\ar q 
\ea
\]
\end{example}

Syntactically, $\lpmln$ allows more general rules than ProbLog, such as disjunctions in the head, as well as the empty head and double negations in the body. Further, $\lpmln$ allows rules to be weighted as well as facts, and do not distinguish between probabilistic facts and derived atoms. 
Semantically, while the ProbLog semantics is based on  well-founded models, $\lpmln$ handles stable model reasoning for more general classes of programs. Unlike ProbLog which is only well-defined when each total choice leads to a unique well-founded model, $\lpmln$ can handle multiple stable models in a flexible way similar to the way MLN handles multiple models. 

\section{Multi-Valued Probabilistic Programs}

In this section we define a simple fragment of $\lpmln$ that allows us to represent probability in a more natural way. For simplicity of the presentation, we will assume a propositional signature. An extension to first-order signatures is straightforward. 

We assume that the propositional signature $\sigma$ is constructed from ``constants'' and their ``values.'' 
A {\em constant} $c$ is a symbol that is associated with a finite set $\i{Dom}(c)$, called the {\em domain}. 
The signature $\sigma$ is constructed from a finite set of constants, consisting of atoms $c\!=\!v$~\footnote{%
Note that here ``='' is just a part of the symbol for propositional atoms, and is not  equality in first-order logic. }
for every constant $c$ and every element $v$ in $\i{Dom}(c)$.
If the domain of~$c$ is $\{\false,\true\}$ then we say that~$c$ is {\em Boolean}, and abbreviate $c\mvis\true$ as $c$ and $c\mvis\false$ as~$\sneg c$. 

We assume that constants are divided into {\em probabilistic} constants and {\em regular} constants.
A multi-valued probabilistic program ${\bf \Pi}$ is a tuple $\langle \i{PF}, \Pi \rangle$, where
\begin{itemize}
\item $\i{PF}$ contains \emph{probabilistic constant declarations} of the following form:
\begin{equation}\label{eq:probabilistic-constant-declaration}
p_1: c\mvis v_1\mid\dots\mid p_n: c\mvis v_n
\end{equation}
one for each probabilistic constant $c$, where $\{v_1,\dots, v_n\}=\i{Dom}(c)$, $v_i\ne v_j$, $0\leq p_1,\dots,p_n\leq1$ and $\sum_{i=1}^{n}p_i=1$. We use $M_{\bf \Pi}(c=v_i)$ to denote $p_i$.
In other words, $\i{PF}$ describes the probability distribution over each ``random variable''~$c$. 

\item $\Pi$ is a set of rules of the form~\eqref{rule} such that $A$ contains no probabilistic constants.
\end{itemize}

The semantics of such a program ${\bf \Pi}$ is defined as a shorthand for $\lpmln$ program $T({\bf \Pi})$ of the same signature as follows.
\begin{itemize}
\item For each probabilistic constant declaration (\ref{eq:probabilistic-constant-declaration}), $T({\bf \Pi})$ contains, 
for each $i=1,\dots, n$,
(i) $ln(p_i):  c\mvis v_i$  if $0<p_i<1$; 
(ii) $\alpha:\ c\mvis v_i$ if $p_i=1$;
(iii) $\alpha:\ \ar c\mvis v_i$ if $p_i=0$.

\item  For each rule in $\Pi$ of form \eqref{rule}, $T({\bf \Pi})$ contains
\[ 
\alpha:\ \  A\ar B, N.
\] 

\ii For each constant $c$, $T({\bf \Pi})$ contains the uniqueness of value constraints
\beq
\ba {rl}
   \alpha: & \bot \ar c\mvis v_1\land c=v_2 
\ea 
\eeq{uc}
for all $v_1,v_2 \in\i{Dom}(c)$ such that $v_1\ne v_2$.
For each probabilistic constant $c$, $T({\bf \Pi})$ also contains the existence of value constraint
\beq
\ba {rl}
  \alpha: & \bot \ar \neg \bigvee\limits_{v \in \mi{Dom}(c)} c\mvis v\ .
\ea 
\eeq{ec}
This means that a regular constant may be undefined (i.e., have no values associated with it), while a probabilistic constant is always associated with some value.
\end{itemize}

\begin{example}
The multi-valued probabilistic program
{
\[ 
\ba l
  0.25: Outcome\mvis 6\mid 0.15: Outcome\mvis 5 \\
  ~~\mid 0.15: Outcome\mvis 4\mid 0.15: Outcome\mvis 3 \\
  ~~~~\mid 0.15: Outcome\mvis 2\mid 0.15: Outcome\mvis 1\\
  Win \leftarrow Outcome\mvis 6.
\ea
\] 
}
is understood as shorthand for the $\lpmln$ program
{
\[ 
\ba {rl}
 ln(0.25): & Outcome\mvis 6\\
 ln(0.15): & Outcome\mvis i \hspace{2.3cm}  (i=1,\dots, 5) \\
 \alpha:   &  Win \ar Outcome\mvis 6 \\
 \alpha:   & \bot \ar Outcome\mvis i \land Outcome\mvis j \hspace{0.2cm} (i\ne j)  \\
 \alpha:   & \bot \ar \neg\bigvee_{i=1,\dots 6} Outcome\mvis i .
\ea 
\] 
}
\end{example}

\BOCC
\begin{definition}
We say that multi-valued probabilistic program $\Pi$ containing rules~\eqref{eq:probabilistic-constant-declaration} for each probabilistic constant $c$ is {\em coherent} if $P_\Pi(c\mvis v_i) = p_i$ for each constant $c$.
\end{definition}


\begin{example} \label{ex:coherent}
Consider a machine in a theme park which randomly gives out some toy. There is a $60\%$ probability that a stuffed animal comes out from it, and a $30\%$ probability that a lightsaber comes out from it. David and Bob are the two kids who may come to the theme park. David would be happy if the toy from the machine is a stuffed animal, John would be happy if the toy from the machine is a lightsaber. 
This is formalized as the following multi-valued probabilistic program.
\[
\small
\ba c
0.6: \i{StuffedAnimal} \mid 0.4:\ \sneg\i{StuffedAnimal} \\
0.3: \i{Lightsaber} \mid 0.7:\ \sneg\i{Lightsaber}\\
\{\i{Come}(\i{David})\}^{\rm ch}\\
\{\i{Come}(\i{Bob})\}^{\rm ch} \\
\i{Happy}(\i{David}) \ar \i{StuffedAnimal}\land\i{Come}(\i{David}) \\
\i{Happy}(\i{Bob}) \ar \i{Lightsaber}\land\i{Come}(\i{Bob}).
\ea 
\]

There are $2\times 2\times 2\times 2=16$ stable models.
It can be checked that $P(\i{StuffedAnimal})=0.6$ and $P(\i{Lightsaber})=0.3$. 

\BOC
 the probability of each is listed below ($Z=(0.4\times 0.7+0.6\times 0.7 + 0.4\times 0.3 + 0.6\times 0.3)\times 4=4$):

\begin{tabular}{ l c}
  {\bf Stable Model} & {\bf Probability} \\
	$\left\{\sim stuffed\_animal, \sim lightsaber\right\}$ & $\frac{0.4\times 0.7}{Z}=0.07$ \\
  $\left\{\sim stuffed\_animal, \sim lightsaber, come(david)\right\}$ & $\frac{0.4\times 0.7}{Z}=0.07$ \\
	$\left\{\sim stuffed\_animal, \sim lightsaber, come(bob)\right\}$ & $\frac{0.4\times 0.7}{Z}=0.07$ \\
	$\left\{\sim stuffed\_animal, \sim lightsaber, come(david), come(bob)\right\}$ & $\frac{0.4\times 0.7}{Z}0.07$ \\
	$\left\{stuffed\_animal, \sim lightsaber\right\}$ & $\frac{0.6\times 0.7}{Z}=0.105$ \\
  $\left\{stuffed\_animal, \sim lightsaber, come(david), happy(david)\right\}$ & $\frac{0.6\times 0.7}{Z}=0.105$ \\
	$\left\{stuffed\_animal, \sim lightsaber, come(bob)\right\}$ & $\frac{0.6\times 0.7}{Z}=0.105$ \\
	$\left\{stuffed\_animal, \sim lightsaber, come(david), come(bob), happy(david)\right\}$ & $\frac{0.6\times 0.7}{Z}=0.105$\\
	$\left\{\sim stuffed\_animal, lightsaber\right\}$ & $\frac{0.4\times 0.3}{Z}=0.03$ \\
  $\left\{\sim stuffed\_animal, lightsaber, come(david)\right\}$ & $\frac{0.4\times 0.3}{Z}=0.03$ \\
	$\left\{\sim stuffed\_animal, lightsaber, come(bob), happy(bob)\right\}$ & $\frac{0.4\times 0.3}{Z}=0.03$ \\
	$\left\{\sim stuffed\_animal, lightsaber, come(david), come(bob), happy(bob)\right\}$ & $\frac{0.4\times 0.3}{Z}=0.03$ \\
	$\left\{stuffed\_animal, lightsaber\right\}$ & $\frac{0.6\times 0.3}{Z}=0.045$ \\
  $\left\{stuffed\_animal, lightsaber, come(david), happy(david)\right\}$ & $\frac{0.6\times 0.3}{Z}=0.045$ \\
	$\left\{stuffed\_animal, lightsaber, come(bob), happy(bob)\right\}$ & $\frac{0.6\times 0.3}{Z}=0.045$ \\
	$\left\{stuffed\_animal, lightsaber, come(david), come(bob), happy(bob), happy(david)\right\}$ & $\frac{0.6\times 0.3}{Z}=0.045$ \\
\end{tabular}
It can be checked that $Pr\left[stuffed\_animal\right]=0.6$ and $Pr\left[lightsaber\right]=0.3$. This example cannot be handled by ProbLog since each total choice leads to $4$ stable models.
\EOC
\end{example}
\EOCC


\BOC
Given a multi-valued probabilistic program ${\bf \Pi}=\langle PF, \Pi\rangle$ of signature $\sigma$, we say an interpretation $I$ is {\em consistent} if
\begin{itemize}
\item For all $c\in \sigma^{pf}({\bf \Pi})$, $c=v\in I$ for exactly one $v\in Dom(c)$;
\item For all $c$ of $\sigma$ that are not in $\sigma^{pf}({\bf \Pi})$, $c=v\in I$ for at most one $v\in Dom(c)$.
\end{itemize}
\EOC

We say an interpretation of ${\bf \Pi}$ is {\em consistent} if it satisfies the hard rules \eqref{uc} for every constant and \eqref{ec} for every probabilistic constant.
For any consistent interpretation $I$, we define the set $\i{TC}(I)$ (``Total Choice'') to be
$
\left\{c=v \mid \text{$c$ is a probabilistic constant such that $c=v\in I$} \right\}
$
and define
\[
\ba l
\sm''[{\bf \Pi}] = \{I\mid \text{$I$ is consistent} \\
  \hspace{2.5cm} \text{and is a stable model of $\Pi\cup \i{TC}(I)$}\}.
\ea
\]

For any interpretation $I$, we define
\[
W''_{{\bf \Pi}}(I)=\begin{cases}
\prod\limits_{\text{$c\mvis v\in TC(I)$}} M_{{\bf \Pi}}(c=v) & \text{if $I\in \sm''[{\bf \Pi}]$} \\
0 & \text{otherwise}
\end{cases}
\]
and
\[
P''_{{\bf \Pi}}(I)=\frac{W''_{{\bf \Pi}}(I)}{\sum_{J\in\smdbprm\left[{\bf \Pi}\right]} W''_{{\bf \Pi}}(J)}.
\]

The following proposition tells us that the probability of an interpretation can be computed from the probabilities assigned to probabilistic atoms, similar to the way ProbLog is defined.

\begin{prop}\label{thm:pdbprm}
For any multi-valued probabilistic program~${\bf \Pi}$ such that each $p_i$ in \eqref{eq:probabilistic-constant-declaration} is positive for every probabilistic constant $c$, 
if $\sm''[{\bf \Pi}]$ is not empty, then for any interpretation $I$, $P''_{\bf \Pi}(I)$ coincides with $P_{T({\bf \Pi})}(I)$.
\end{prop}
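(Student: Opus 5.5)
The plan is to reduce the statement to Proposition~\ref{prop:soft} applied to the $\lpmln$ program $T({\bf \Pi})$. Since every $p_i$ is positive, the hard rules of $T({\bf \Pi})$ are exactly three kinds: the hard versions of the rules of $\Pi$, the constraints \eqref{uc} and \eqref{ec}, and $\alpha: c\mvis v_i$ for any $p_i=1$. The soft rules are $ln(p_i): c\mvis v_i$ for $0<p_i<1$. No rule of the form $\alpha:\ \ar c\mvis v_i$ occurs.

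The first step is to show that $\sm'[T({\bf \Pi})] = \sm''[{\bf \Pi}]$.

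Take $I\in\sm'[T({\bf \Pi})]$. Then $I$ satisfies all hard rules, so it is consistent, satisfies $\Pi$, and is a stable model of $\o{T({\bf \Pi})_I}$.
\begin{itemize}
\item The facts $c\mvis v$ in $T({\bf \Pi})_I$ are exactly those with $c\mvis v\in I$ for probabilistic $c$, i.e.\ $\i{TC}(I)$. A fact $c\mvis v$ with $c\mvis v\notin I$ is violated, so it is excluded.
\item So $\o{T({\bf \Pi})_I} = \Pi\cup\i{TC}(I)\cup C$, where $C$ is the set of constraints \eqref{uc} and \eqref{ec}.
\item Adding or removing constraints satisfied by $I$ does not change whether $I$ is stable: the reduct of a constraint is a constraint, and it does not affect minimality among models.
\end{itemize}
Hence $I$ is a stable model of $\Pi\cup\i{TC}(I)$, and $I\in\sm''[{\bf \Pi}]$.

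Conversely, take $I\in\sm''[{\bf \Pi}]$. Then $I$ is consistent and a stable model of $\Pi\cup\i{TC}(I)$. Since $p_i>0$ and $I$ is consistent, the value $c\mvis v$ chosen in $I$ always has a weight, and any $\alpha: c\mvis v_i$ with $p_i=1$ must be the chosen value; otherwise \eqref{ec}/\eqref{uc} together with $p_j=0$ for $j\neq i$ would be needed. This is the one delicate spot. If $p_i=1$, all other $p_j$ are $0$, which the hypothesis excludes, so $|\i{Dom}(c)|=1$ and consistency forces $c\mvis v_i\in I$. Thus $I$ satisfies all hard rules. By the same identification, together with Proposition~\ref{prop:sm_subset}, $I$ is a stable model of $\o{T({\bf \Pi})_I}$.

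The second step compares weights for $I$ in this common set. $W'(I)=\exp\sum ln(p_i)$ over soft facts $c\mvis v_i$ satisfied by $I$, which are exactly the members of $\i{TC}(I)$ with $p_i<1$. Members with $p_i=1$ contribute a factor of $1$ to $W''$. So $W'_{T({\bf \Pi})}(I)=\prod_{c=v\in\i{TC}(I)}M_{\bf \Pi}(c\mvis v)=W''_{\bf \Pi}(I)$, and the two functions are also both $0$ outside the common set. Hence $P'_{T({\bf \Pi})}=P''_{\bf \Pi}$. Nonemptiness of $\sm''$ gives nonemptiness of $\sm'$, so Proposition~\ref{prop:soft} yields $P'=P$, which concludes the proof.

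The main obstacle is the first step: carefully justifying that $\o{T({\bf \Pi})_I}$ differs from $\Pi\cup\i{TC}(I)$ only by constraints satisfied by $I$. For that I would use the reduct definition directly, or Proposition~\ref{prop:sm_subset} in one direction plus the observation that removing satisfied constraints preserves stability.
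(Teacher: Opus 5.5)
Your proof is correct and follows the paper's route. The paper likewise first proves $\sm'[T({\bf \Pi})]=\sm''[{\bf \Pi}]$ (Lemma~\ref{lem:smprm_dbprm}), using the same observations you make: the singleton-domain hard facts lie in $\i{TC}(I)$, and the uniqueness/existence constraints are satisfied constraints. It then applies Proposition~\ref{prop:soft} and matches the weights. The one minor difference is that you note $W'_{T({\bf \Pi})}=W''_{\bf \Pi}$ as functions and so get $P'=P''$ for every $I$ at once, whereas the paper handles $I\notin\sm''[{\bf \Pi}]$ as a separate case via Lemma~\ref{lem:smdbprm}.
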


\section{P-log and $\lpmln$} \label{ssec:plog}

\subsection{Simple P-log}  \label{ssec:miniplog}

In this section, we define a fragment of P-log, which we call {\em simple P-log}.

\subsubsection{Syntax}
Let $\sigma$ be a multi-valued propositional signature as in the previous section.
A simple P-log program $\Pi$ is a tuple
\begin{equation}\label{eq:mini-p-log}
\Pi = \langle R, S, P, Obs, Act \rangle
\end{equation}
where
\begin{itemize}
\item  $R$ is a set of normal rules of the form
\beq 
  A\ar B_1, \dots, B_m, \no\ B_{m+1}, \dots, \no\ B_n.
\eeq{eq:p-log-regular-part}
Here and after we assume $A, B_1,\dots, B_n$ are atoms from~$\sigma$ ($0\leq m\leq n$).

\item $S$ is a set of \emph{random selection rules} of the form
\beq 
 [r]\ \ random(c)\ar B_1, \dots, B_m, \no\ B_{m+1}, \dots, \no\ B_n
\eeq{eq:p-log-random-selection}
where $r$ is an identifier and $c$ is a constant.

\item $P$ is a set of \emph{probability atoms (pr-atoms)} of the form
\[ 
  pr_r(c\mvis v\mid B_1, \dots, B_m, \no\ B_{m+1}, \dots, \no\ B_n)=p
\] 
where $r$ is the identifier of some random selection rule in~$S$, $c$ is a constant, and $v\in\i{Dom}(c)$, and $p\in [0, 1]$.

\item $Obs$ is a set of atomic facts of the form
$Obs(c\mvis v)$ where $c$ is a constant and $v\in\i{Dom}(c)$.

\item $Act$ is a set of atomic facts of the form
$Do(c\mvis v)$ where $c$ is a constant and $v\in\i{Dom}(c)$.

\end{itemize}

\begin{example}\label{ex:dice}
We use the following simple P-log program as our main example ($d\in\{D_1, D_2\}$, $y\in\{1,\dots 6\}$): 
{
\[
\ba c
  \i{Owner}(D_1)\mvis \i{Mike}\\
  \i{Owner}(D_2)\mvis \i{John}\\
  \i{Even}(d)\leftarrow \i{Roll}(d)\mvis y,\  y\ mod\ 2=0 \\
  \sneg \i{Even}(d)\leftarrow \no\ \i{Even}(d) \\
  \left[r(d)\right] random(\i{Roll}(d)) \\
 pr(\i{Roll}(d)\mvis 6\mid \i{Owner}(d)\mvis \i{Mike})=\frac{1}{4}.
\ea 
\]
}
\end{example}

\subsubsection{Semantics}

Given a simple P-log program $\Pi$ of the form~\eqref{eq:mini-p-log}, a (standard) ASP program $\tau(\Pi)$ with the multi-valued signature $\sigma^{\prime}$ is constructed as follows: 

\begin{itemize}

\item $\sigma'$ contains all atoms in $\sigma$, and atom $\i{Intervene}(c)\mvis\true$ (abbreviated as $\i{Intervene}(c)$) for every constant $c$ of $\sigma$; the domain of $\i{Intervene}(c)$ is $\{\true\}$.
 
\item $\tau(\Pi)$ contains all rules in $R$. 

\item For each random selection rule of the form (\ref{eq:p-log-random-selection}) with $\i{Dom}(c)=\left\{v_1,\dots, v_n\right\}$, $\tau(\Pi)$ contains the following rules:
\[
\ba l
c\mvis v_1; \dots; c\mvis v_n\ar \\
 B_1, \dots, B_m, \no\ B_{m+1}, \dots, \no\ B_n, not\ \i{Intervene}(c).
\ea
\]

\item $\tau(\Pi)$ contains all atomic facts in $\i{Obs}$ and $\i{Act}$.

\item For every atom $c\mvis v$ in $\sigma$, 
\begin{equation}\nonumber
\ar Obs(c\mvis v), \no\ c\mvis v.
\end{equation}

\item For every atom $c\mvis v$ in $\sigma$, $\tau(\Pi)$ contains
\[ 
\ba c
   c\mvis v\ar Do(c\mvis v) \\
  \i{Intervene}(c) \ar Do(c\mvis v).
\ea
\] 
\end{itemize}

\smallskip\noindent
{\bf Example~\ref{ex:dice} continued}\ \ 
{\sl 
The following is $\tau(\Pi)$ for the simple P-log program $\Pi$ in Example~\ref{ex:dice} ($x\in\{\i{Mike},\i{John}\}$, $b\in\{\true,\false\}$):
{
\[
\ba c
  \i{Owner}(D_1)\mvis \i{Mike}\\
  \i{Owner}(D_2)\mvis \i{John}\\
  \i{Even}(d) \leftarrow \i{Roll}(d)\mvis y,\ y\ mod\ 2=0\\
  \sneg \i{Even}(d)\leftarrow \no\ \i{Even}(d)\\[0.3em]
  \i{Roll}(d)\mvis 1; \i{Roll}(d)\mvis 2; \i{Roll}(d)\mvis 3;
     \i{Roll}(d)\mvis 4; \hspace{0.0cm}  \\
\hspace{0.8cm} \i{Roll}(d)\mvis 5; \i{Roll}(d)\mvis 6 \leftarrow \no\ \i{Intervene}(\i{Roll}(d)) \\[0.3em]
  \leftarrow Obs(\i{Owner}(d)\mvis x), not\ \i{Owner}(d)\mvis x\\
  \leftarrow Obs(\i{Even}(d)\mvis b), not\ \i{Even}(d)\mvis b\\
  \leftarrow Obs(\i{Roll}(d)\mvis y), not\ \i{Roll}(d)\mvis y\\[0.3em]
  \i{Owner}(d)\mvis x \leftarrow Do(\i{Owner}(d)\mvis x)\\
  \i{Even}(d)\mvis b \leftarrow Do(\i{Even}(d)\mvis b)\\
  \i{Roll}(d)\mvis y \leftarrow Do(\i{Roll}(d)\mvis y)\\[0.3em]
  \i{Intervene}(\i{Owner}(d)) \leftarrow Do(\i{Owner}(d)\mvis x)\\
  \i{Intervene}(\i{Even}(d)) \leftarrow Do(\i{Even}(d)\mvis b)\\
  \i{Intervene}(\i{Roll}(d))\leftarrow Do(\i{Roll}(d)\mvis y).
\ea 
\]
}}
\smallskip

The stable models of $\tau(\Pi)$ are called the \emph{possible worlds} of $\Pi$, and denoted by $\omega(\Pi)$. For an interpretation $W$ and an atom $c\mvis v$, we say $c\mvis v$ is \emph{possible} in $W$ with respect to $\Pi$ if $\Pi$ contains a random selection rule for $c$ 
\[ 
  [r]\ \  random(c)\ar B, 
\] 
where $B$ is a set of atoms possibly preceded with $\no$, 
and $W$ satisfies $B$. 
 We say $r$ is {\em applied} in~$W$ if $W\models B$. 

We say that a pr-atom $pr_r(c\mvis v\mid B)=p$ is {\em applied} in $W$ if $W\models B$ and $r$ is applied in $W$. 

As in \cite{baral09probabilistic}, we assume that simple P-log programs $\Pi$ satisfy the following conditions:

\begin{itemize}

\item {\bf Unique random selection rule}\ \ 
For any constant~$c$, program $\Pi$ contains at most one random selection rule for $c$ that is applied in $W$. 


\item {\bf Unique probability assignment}\ \ 
If $\Pi$ contains a random selection rule $r$ for constant $c$ that is applied in $W$, then, for any two different probability atoms 
\[ 
\ba l
  pr_r(c\mvis v_1\mid B') = p_1\\
  pr_r(c\mvis v_2\mid B'') = p_2
\ea 
\]
in $\Pi$ that are applied in $W$, we have $v_1\ne v_2$ and \hbox{$B'=B''$}.

\end{itemize}

\NB{(This is stronger than the P-log assumption)}

Given a simple P-log program $\Pi$, a possible world $W\in\omega(\Pi)$ and a constant $c$ for which $c\mvis v$ is possible in $W$, we first define the following notations: 

\begin{itemize}
\item
Since $c\mvis v$ is possible in $W$, by the unique random selection rule assumption, it follows that there is exactly one random selection rule $r$ for constant $c$ that is applied in $W$. 
Let $r_{W, c}$ denote this random selection rule. 
By the unique probability assignment assumption, if there are pr-atoms of the form 
$pr_{r_{W,c}} (c\mvis v\mid B)$ that are applied in $W$, all $B$ in those pr-atoms should be the same. 
We denote this $B$ by $B_{W,c}$.
Define $\i{PR}_{W}(c)$ as 
\[
\ba l
  \{ pr_{r_{W,c}}(c\mvis v\mid B_{W,c})=p  \in\Pi \mid v\in\i{Dom}(c)\}.
\ea
\]
if $W\not\models\i{Intervene}(c)$ and $\emptyset$ otherwise.

\item  Define $AV_{W}(c)$ as 
$$\left\{v\mid pr_{r_{W,c}}(c\mvis v\mid B_{W,c})=p\in \i{PR}_{W}(c)\right\}.$$

\item For each $v\in AV_{W}(c)$, define the {\em assigned probability} of $c\mvis v$ w.r.t. $W$, denoted by $ap_W(c\mvis v)$, as the value $p$ for which $pr_{r_{W,c}}(c\mvis v\mid B_{W,c})=p\in \i{PR}_{W}(c)$.

\item Define the {\em default probability} for $c$ w.r.t. $W$, denoted by $dp_{W}(c)$, as
\begin{equation}\nonumber
dp_W(c)=\frac{1-\sum_{v\in AV_{W}(c)}ap_W(c\mvis v)}{|\i{Dom}(c)\setminus AV_W(c)|}.
\end{equation}
\end{itemize}

For every possible world $W\in \omega(\Pi)$ and every atom $c\mvis v$ possible in $W$, the causal probability $P(W, c\mvis v)$ is defined as follows: 
\begin{equation}\nonumber
P(W,c\mvis v)=\begin{cases}
ap_W(c\mvis v) & \text{if $v\in AV_W(c)$}\\
dp_W(c)   & \text{otherwise}. 
\end{cases}
\end{equation}

The \emph{unnormalized probability} of a possible world $W$, denoted by $\hat{\mu}_{\Pi}(W)$, is defined as
\[ 
  \hat{\mu}_{\Pi}(W) = \prod_{c=v\in W \text{ and }\atop c=v \text{ is possible in } W}{P(W, c\mvis v)}.
\]
Assuming $\Pi$ has at least one possible world with nonzero unnormalized probability, the \emph{normalized probability} of $W$, denoted by $\mu_{\Pi}(W)$, is defined as
\[ 
  \mu_{\Pi}(W) = \frac{\hat{\mu}_{\Pi}(W)}{\sum_{W_i\in\omega(\Pi)}\hat{\mu}_{\Pi}(W_i)}.
\]


Given a simple P-log program $\Pi$ and a formula $A$, the probability of $A$ with respect to $\Pi$ is defined as
\[ 
  P_{\Pi}(A)=\sum_{\text{$W$ is a possible world of $\Pi$ that satisfies $A$}} \mu_{\Pi}(W).
\]

We say $\Pi$ is {\em consistent} if $\Pi$ has at least one possible world. 
\BOCC
We say $\Pi$ is {\em coherent} if the following conditions are true:
\begin{itemize}
\item $\Pi$ is consistent.
\item Let $\Pi'$ be the simple P-log program obtained from $\Pi$ by removing all observations and actions. For every random selection rule $r$
\[ 
   random(c)\leftarrow B'
\] 
and every pr-atom 
\[
  pr_r(c\mvis v\mid B)=p
\]
in $\Pi$, if $P_{\Pi}(B\cup K)$ is not equal to $0$ then $P_{\Pi'\cup {Obs(B)\cup Obs(K)}}(c\mvis v)=p$.
\end{itemize}
\EOCC

\smallskip\noindent
{\bf Example~\ref{ex:dice} continued}\ \ 
{\sl 
Given the possible world $W=\{\i{Owner}(D_1)\mvis \i{Mike},  \i{Owner}(D_2)\mvis \i{John}, \i{Roll}(D_1)\mvis 6,$ $\i{Roll}(D_2)\mvis 3, \i{Even}(D_1)\}$, the probability of $\i{Roll}(D_1)\mvis 6$ is $P(W, \i{Roll}(D_1)\mvis 6)=0.25$, the  probability of $\i{Roll}(D_2)\mvis 3$ is $\frac{1}{6}$. The unnormalized probability of $W$, i.e., $\hat{\mu}(W)=P(W, \i{Roll}(D_1)\mvis 6)\cdot P(W, \i{Roll}(D_2)\mvis 3)=\frac{1}{24}$.
}
\smallskip

The main differences between simple P-log and P-log are as follows. 
\begin{itemize}
\item  The unique probability assignment assumption in P-log is more general: it does not require the part $B'=B''$. However, all the examples in the P-log paper~\cite{baral09probabilistic} satisfy our stronger unique probability assignment assumption.

\item  P-log allows a more general random selection rule of the form 
\[ 
  \left[ r\right] random(c:\left\{x:P(x)\right\})\leftarrow B'.
\]
Among the examples in~\cite{baral09probabilistic}, only the ``Monty Hall Problem'' encoding and the ``Moving Robot Problem'' encoding use ``dynamic range $\{x: P(x)\}$'' in random selection rules and cannot be represented as simple P-log programs.
\end{itemize}


\subsection{Turning Simple P-log into Multi-Valued Probabilistic Programs} 

\begin{table*}
	\centering
\begin{minipage}{\textwidth}
{\footnotesize
		\begin{tabular}{| c | c | l | l  | l| l| }
			\hline
			\textbf{Example}& \textbf{Parameter} & \textbf{plog1} & \textbf{plog2} & \textbf{Alchemy} (default) & \textbf{Alchemy} (maxstep=5000)\\
			\hline
			 & $N_{dice}=2$ & $0.00s + 0.00s$\footnote {smodels answer set finding time + probability computing time}& $0.00s + 0.00s$\footnote{partial grounding time + probability computing time}& $0.02s + 0.21s$\footnote{mrf creating time + sampling time} & $0.02s + 0.96s$\\
			 & $N_{dice}=7$ & $1.93 s+ 31.37s$ & $0.00s + 1.24s$ & $0.13s + 0.73s$ & $0.12s + 3.39s$\\
			dice & $N_{dice}=8$ & $12.66s + 223.02s$  & $0.00s + 6.41s$ & $0.16s + 0.84s$ & $0.16s + 3.86s$\\
			 & $N_{dice}=9$ & timeout
 & $0.00s + 48.62s$ & $0.19s + 0.95s$ & $0.19s + 4.37s$\\
			 & $N_{dice}=10$ & timeout & timeout & $0.23s +  1.06s$ & $0.24s + 4.88s$\\
			 & $N_{dice}=100$ & timeout & timeout & $19.64s + 16.34 s$ & $19.55  s + 76.18 s$\\
			\hline
			 & $maxstep=5$ & $0.00 s + 0.00 s$ & segment fault & $2.34 s + 2.54 s$  & $2.3 s  + 11.75 s$\\
			 & $maxstep=10$ & $0.37 s+ 4.86 s$& segment fault & $4.78 s + 5.24 s$ & $4.74 s + 24.34 s$\\
			robot & $maxstep=12$ & $3.65+ 51.76 s$ & segment fault & $5.72 s + 6.34 s$ & $5.75 s + 29.46 s$\\
			 & $maxstep=13$ & $11.68 s +168.15 s$& segment fault & $6.2 s + 6.89 s$ & $6.2 s + 31.96 s$\\
			 & $maxstep=15$ & timeout & segment fault & $7.18 s +  7.99 s$ &$7.34 s + 37.67s$\\
			 & $maxstep=20$ & timeout & segment fault & $9.68 s + 10.78 s$ &$9.74 s  + 50.04 s$\\
			\hline
		\end{tabular}
}
	\caption{\small Performance Comparison between Two Ways to Compute Simple P-log Programs}
\label{tab:PerformanceComparisonBetweenTwoWaysToComputePLogPrograms}
\end{minipage}
\end{table*}
The main idea of the syntactic translation is to introduce auxiliary probabilistic constants for encoding the assigned probability and the default probability.

Given a simple P-log program $\Pi$, a constant $c$, a set of literals $B$,\footnote{%
A literal is either an atom $A$ or its negation $\no\ A$.}
and a random selection rule $\left[ r\right] random(c)\leftarrow B^\prime$ in $\Pi$, we first introduce several notations, which resemble the ones used for defining the P-log semantics.

\begin{itemize}
\item  We define $\i{PR}_{B, r}(c)$ as 
\[ 
\ba l
 \{pr_r(c\mvis v \mid B)=p \in\Pi  \mid v\in\i{Dom}(c) \}
\ea
\]
if $\i{Act}$ in $\Pi$ does not contain $\i{Do}(c\mvis v')$ for any $v'\in\i{Dom}(c)$ 
and $\emptyset$ otherwise.

\item We define $AV_{B,r}(c)$ as
\[ 
  \left\{v \mid  pr_r(c\mvis v \mid B)=p\in \i{PR}_{B,r}(c)\right\}.
\]

\item For each $v\in AV_{B,r}(c)$, we define the {\em assigned probability} of $c\mvis v$ w.r.t. $B,r$, denoted by $ap_{B,r}(c\mvis v)$, as the value $p$ for which $pr_r(c\mvis v \mid B)=p\in \i{PR}_{B,r}(c)$.

\item We define the {\em default probability} for $c$ w.r.t. $B$ and $r$, denoted by $dp_{B,r}(c)$, as
\[ 
dp_{B,r}(c)=\frac{1-\sum_{v\in AV_{B,r}(c)} ap_{B,r}(c\mvis v)}{|\i{Dom}(c)\setminus AV_{B,r}(c)|}.
\] 

\item For each $c\in v$, define its {\em causal probability} w.r.t.  $B$ and $r$, denoted by $P(B, r, c\mvis v)$, as
\begin{align*}
 P(B,r, c\mvis v)=  
\begin{cases}
ap_{B,r}(c\mvis v) & \text{if $v\in AV_{B,r}(c)$}\\
dp_{B,r}(c)   &  \text{otherwise}.
\end{cases}
\end{align*}
\end{itemize}

Now we translate $\Pi$ into the corresponding multi-valued probabilistic program $\Pi^{\lpmln}$ as follows:
\begin{itemize}
\item The signature of $\Pi^{\lpmln}$ is 
\begin{align*}
  \sigma' & \cup \{pf_{B,r}^c \mvis  v \mid \i{PR}_{B, r}(c)\ne\emptyset \text{ and }v\in\i{Dom}(c)\} \\
          & \cup \{pf^c_{\Box,r}\mvis v \mid \text{$r$ is a random selection rule of $\Pi$ for $c$} \\[-0.5em]
 & \hspace{5cm} \text{ and $v\in\i{Dom}(c)$}\} \\ 
          & \cup \{\i{Assigned}_r\mvis \true \mid \text{$r$ is a random selection rule of $\Pi$}\}.
\end{align*}

\item $\Pi^{\lpmln}$ contains all rules in $\tau(\Pi)$.

\item For any constant $c$, any random selection rule $r$ for $c$, and any set $B$ of literals such that $\i{PR}_{B, r}(c)\neq \emptyset$, include in $\Pi^{\lpmln}$:

\begin{itemize}
\item the probabilistic constant declaration:
{
\[
\ba l
P(B,r,c\mvis v_1):pf_{B,r}^c\mvis v_1\mid \dots \\
\hspace{3cm} \mid P(B,r,c\mvis v_n):pf_{B,r}^c\mvis v_n
\ea 
\]
}
for each probabilistic constant $pf^c_{B,r}$ of the signature, where $\{v_1,\dots, v_n\} = \i{Dom}(c)$.
The constant $pf_{B,r}^c$ is used for representing the probability distribution for $c$ when condition $B$ holds in the experiment represented by~$r$.

\item the rules
\beq 
   c\mvis v \ar B, B', pf_{B,r}^{c}\mvis v, \no\ \i{Intervene}(c).
\eeq{pr-assign}
for all $v\in\i{Dom}(c)$, where $B'$ is the body of the random selection rule~$r$. 
These rules assign $v$ to $c$ when the assigned probability distribution applies to $c\mvis v$. 

\item the rule
\[
 \i{Assigned}_r \ar B, B', \no\ \i{Intervene}(c)
\]
where $B'$ is the body of the random selection rule~$r$ (we abbreviate $\i{Assigned}_r\mvis\true$ as $\i{Assigned}_r$).
$\i{Assigned}_r$ becomes true when any pr-atoms for $c$ related to $r$ is applied.

\end{itemize}

\item For any constant $c$ and any random selection rule $r$ for $c$, include in $\Pi^{\lpmln}$:

\begin{itemize}
\item the probabilistic constant declaration
\[ 
  \frac{1}{|\i{Dom}(c)|}:pf_{\Box,r}^c\mvis v_1\mid \dots\mid \frac{1}{|\i{Dom}(c)|}:pf_{\Box,r}^c\mvis v_n
\] 
for each probabilistic constant $pf_{\Box,r}^c$ of the signature, 
where $\{v_1,\dots, v_n\}= \i{Dom}(c)$.
The constant $pf_{\Box,r}^c$ is used for representing the default probability distribution for $c$ when there is no applicable pr-atom.

\item the rules
\[ 
  c\mvis v \ar B', pf_{\Box, r}^{c}\mvis v, \no\ \i{Assigned}_r.
\] 
for all $v\in\i{Dom}(c)$, where $B'$ is the body of the random selection rule $r$. These rules assign $v$ to $c$ when the uniform distribution applies to $c\mvis v$.
\end{itemize}
\end{itemize}

\smallskip\noindent
{\bf Example~\ref{ex:dice} continued}\ \ 
{\sl 
The simple P-log program $\Pi$ in Example \ref{ex:dice} can be turned into the following multi-valued probabilistic program. In addition to $\tau(\Pi)$ we have 
{\small
\[
\ba c
0.25:pf_{O(d)=M,r(d)}^{Roll(d)}\mvis 6 \mid 0.15:pf_{O(d)=M,r(d)}^{Roll(d)}\mvis 5 \mid\\
~~~~~~~0.15:pf_{O(d)=M,r(d)}^{Roll(d)}\mvis 4 \mid 0.15:pf_{O(d)=M,r(d)}^{Roll(d)}\mvis 3 \mid\\ 
~~~~~~~0.15:pf_{O(d)=M,r(d)}^{Roll(d)}\mvis 2 \mid 0.15:pf_{O(d)=M,r(d)}^{Roll(d)}\mvis 1 \\[0.5em]
\frac{1}{6}:pf_{\Box,r(d)}^{Roll(d)}\mvis 6 \mid \frac{1}{6}:pf_{\Box,r(d)}^{Roll(d)}\mvis 5 \mid
\frac{1}{6}:pf_{\Box,r(d)}^{Roll(d)}\mvis 4 \mid \\ ~~~~~~~\frac{1}{6}:pf_{\Box,r(d)}^{Roll(d)}\mvis 3 \mid
\frac{1}{6}:pf_{\Box,r(d)}^{Roll(d)}\mvis 2 \mid \frac{1}{6}:pf_{\Box,r(d)}^{Roll(d)}\mvis 1 \\[0.5em]
 \i{Roll}(d)\mvis x\leftarrow \i{Owner}(d)\mvis \i{Mike}, pf_{O(d)=M,r(d)}^{Roll(d)}\mvis x, \\ \hspace{3cm} \no\ \i{Intervene}(\i{Roll}(d))  \\[0.3em]
 \i{Assigned}_{r(d)}\leftarrow \i{Owner}(d)\mvis \i{Mike}, \no\ \i{Intervene}(\i{Roll}(d)) \\[0.3em]
 \i{Roll}(d)\mvis x\leftarrow pf_{\Box,r(d)}^{Roll(d)}\mvis x, \no\ \i{Assigned}_{r(d)}. \\[-0.5em]
\ea
\]
}}

\begin{thm}\label{thm:p-log-to-lpmln}
For any consistent simple P-log program $\Pi$ of signature $\sigma$ and any possible world $W$ of $\Pi$, we construct a formula $F_W$ as follows.
\[
\ba {rl}
   F_W = & (\bigwedge_{c=v\in W}c\mvis v)\wedge  \\
         & (\bigwedge_{\substack{\text{$c,v:$}\\\text{$c=v$ is possible in $W$,}\\\text{ $W\models c=v$ and $\i{PR}_{W}(c)\neq \emptyset$}}}pf^{c}_{B_{W,c}, r_{W,c}}\mvis v)\ \\ 
        & \wedge (\bigwedge_{\substack{\text{$c,v:$}\\\text{$c=v$ is possible in $W$,}\\\text{ $W\models c=v$ and $\i{PR}_{W}(c)= \emptyset$}}}pf^{c}_{\Box, r_{W,c}}\mvis v)
\ea 
\] 
We have
\[
  \mu_{\Pi}(W) = P_{\Pi^{\lpmln}}(F_W),
\] 
and, for any proposition $A$ of signature $\sigma$,
\[ 
   P_\Pi(A) = P_{\Pi^{\lpmln}}(A).
\]
\end{thm}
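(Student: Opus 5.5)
The plan is to route everything through Proposition~\ref{thm:pdbprm}. It lets us compute $P_{\Pi^{\lpmln}}$ as $P''_{\Pi^{\lpmln}}$, i.e., as a ProbLog-like product over total choices restricted to consistent stable models of $\Pi^{\lpmln}\cup\i{TC}(I)$. One caveat: that proposition needs all $p_i$ positive. Where some $P(B,r,c\mvis v)=0$, I first drop the value $v$ from the domain of $pf^c_{B,r}$. This does not change the distribution, because the $\alpha$-constraint $\bot\ar pf\mvis v$ just forbids it. After this, the proof reduces to a combinatorial correspondence between consistent stable models $I$ of $\Pi^{\lpmln}\cup\i{TC}(I)$ and pairs (possible world $W$, total choice).

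First I will show that for every consistent $I\in\sm''$, its restriction $W=I\cap\sigma'$ is a stable model of $\tau(\Pi)$. The rules (\ref{pr-assign}) and the $pf_{\Box,r}$ rules only ever derive $c\mvis v$ when the body $B'$ of $r$ holds and $\i{Intervene}(c)$ is false. In that situation the disjunctive rule of $\tau(\Pi)$ is applicable, and uniqueness of value makes exactly one disjunct true. I would argue via the reduct: the reduct of the disjunctive rule relative to $W$ is satisfied minimally exactly when $c$ receives a single value. In the other direction, I take a possible world $W$ and a total choice. For each $c$ with $c\mvis v$ possible in $W$, the total choice must set the relevant auxiliary constant to $v$: $pf^c_{B_{W,c},r_{W,c}}$ if $\i{PR}_W(c)\ne\emptyset$, else $pf^c_{\Box,r_{W,c}}$. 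Here $\i{Assigned}_r$ is true exactly when $\i{PR}_W(c)\neq\emptyset$, and the unique-assignment assumption is what makes this hold. With that choice, $W$ extended by $\i{TC}$ and the $\i{Assigned}$ atoms is a stable model. All other auxiliary constants, the ones not "used" in $W$, are free: any value works, since their rules have false bodies. So the stable models $I$ satisfying $F_W$ are exactly those extending $W$ in this way.

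Next I compute weights. For $I$ extending $W$, $W''(I)$ is the product of $P(B,r,c\mvis v)$ over the used constants, times the product over the free constants. Summing over the free constants' values gives factor $1$, since each declaration sums to $1$. The used product equals $\prod P(W,c\mvis v)=\hat\mu_\Pi(W)$. For this I need $ap_W=ap_{B_{W,c},r_{W,c}}$ and $dp_W=dp_{B,r}$, which I will check case by case. Two facts make it work: $\i{PR}_W(c)$ coincides with $\i{PR}_{B_{W,c},r_{W,c}}(c)$, and when $\i{PR}_W(c)=\emptyset$ the uniform $1/|\i{Dom}(c)|$ agrees with $dp_W(c)$ (the empty-$AV$ case). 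Interventions need separate care: $\i{PR}_{B,r}$ is defined as empty when $\i{Do}(c\mvis v')\in\i{Act}$, and then $c$ is not "possible" via a firing rule, because of $\no\ \i{Intervene}(c)$ in the translated rules.

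Hence $\sum_{I\models F_W}W''(I)=\hat\mu_\Pi(W)$. Summing over all $W$ gives the normalizers equal, so $P_{\Pi^{\lpmln}}(F_W)=\mu_\Pi(W)$. Since the $F_W$ are mutually exclusive and exhaustive over stable models, and a proposition $A$ of $\sigma$ depends only on $I\cap\sigma$, we get $P_\Pi(A)=\sum_{W\models A}\mu_\Pi(W)=P_{\Pi^{\lpmln}}(A)$.

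The main obstacle is the stable-model correspondence in the first step. One must show the deterministic rules with $pf$ atoms reproduce exactly the minimality behavior of the disjunctive random rule, including interactions through $\i{Assigned}_r$ under negation. One must also show that no spurious stable model assigns $c$ through the default rule when a pr-atom applies. I would handle this with a splitting-set argument: split off the $pf$ constants (facts from $\i{TC}$), then treat $\i{Assigned}_r$ and $c$ layer by layer.
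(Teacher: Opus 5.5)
Your decomposition follows the paper's proof. Your first step is Lemma~\ref{lem:lpmln-to-plog}. The correspondence between stable models satisfying $F_W$ and assignments to the unused auxiliary constants is Lemma~\ref{lem:irre_random}. The weight computation is Corollary~\ref{lem:CausalProbability-PfAtom} together with Lemma~\ref{thm:p-log-to-lpmln-unnormalized}, and the last step equates normalizers and applies Proposition~\ref{thm:pdbprm}. Your preprocessing of zero probabilities is a genuine improvement. The paper invokes Proposition~\ref{thm:pdbprm} and Lemma~\ref{lem:smdbprm} even though $P(B,r,c\mvis v)$ can be $0$. When you drop $v$, also drop the now-vacuous rule~\eqref{pr-assign} for $v$, and read $F_W$ as false when it mentions a dropped atom; this happens exactly when $\hat\mu_\Pi(W)=0$.

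There is, however, a concrete error in how you handle interventions, and as written two of your steps fail there.

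\textbf{What the definitions actually say.} In this paper, whether $c\mvis v$ is possible in $W$ depends only on $W\models B'$. An intervention only makes $\i{PR}_W(c)$ empty. So if $\i{Do}(c\mvis v')\in\i{Act}$ and $W\models B'$, then $c\mvis v'$ \emph{is} possible. Consequently $\hat\mu_\Pi(W)$ contains the factor $dp_W(c)=1/|\i{Dom}(c)|$, and $F_W$ contains $pf^c_{\Box,r}\mvis v'$.

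On the program side, the default rule $c\mvis v\ar B', pf^c_{\Box,r}\mvis v, \no\ \i{Assigned}_r$ contains no $\no\ \i{Intervene}(c)$. Since $\i{PR}_{B,r}(c)=\emptyset$ for every $B$, there is no rule for $\i{Assigned}_r$ at all, so this rule fires.

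\textbf{Failure in your first step.} Here $c\mvis v$ can be derived while $\i{Intervene}(c)$ is true. The disjunctive rule of $\tau(\Pi)$ is then blocked by its $\no\ \i{Intervene}(c)$ and cannot supply the support. The support must instead come from $c\mvis v'\ar\i{Do}(c\mvis v')$ plus uniqueness of value. This is the separate $J\models\i{Intervene}(c)$ subcase in the paper's proof of Lemma~\ref{lem:lpmln-to-plog}.

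\textbf{Failure in your second step.} Here $pf^c_{\Box,r}$ is not free, because any value other than $v'$ violates uniqueness of value for $c$. If you count it among the unused constants, its contribution is $1/|\i{Dom}(c)|$ rather than $1$. This happens only in worlds satisfying $B'$, so in general normalization does not cancel the discrepancy.

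\textbf{Fix.} Treat this $pf^c_{\Box,r}$ as a used constant, exactly as $F_W$ already does. The factor $1/|\i{Dom}(c)|$ then appears on both sides, and the rest of your argument goes through.
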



\smallskip\noindent
{\bf Example~\ref{ex:dice} continued}\ \ 
{\sl 
For the possible world 
{\small
\[
\ba {rl}
  W= & \{\i{Roll}(D_1)\mvis 6, \i{Roll}(D_2)\mvis 3, \i{Even}(D_1), \sneg \i{Even}(D_2),\\
     &  \i{Owner}(D_1)\mvis \i{Mike}, \i{Owner}(D_2)\mvis \i{John}\}, 
\ea
\]
}
$F_W$ is 
{\small
\[
\ba {rl}
   & \i{Roll}(D_1)\mvis 6\wedge \i{Roll}(D_2)\mvis 3\wedge 
          \i{Even}(D_1)\wedge \sneg \i{Even}(D_2) \\
        & \wedge\  \i{Owner}(D_1)\mvis \i{Mike}\wedge 
           \i{Owner}(D_2)\mvis \i{John}   \\
        & \wedge\ pf_{O(D_1)=M,r}^{Roll(D_1)}\mvis 6 \land
        pf^{Roll(D_2)}_{\Box, r}\mvis 3.
\ea
\]
}
It can be seen that $\hat{\mu}_\Pi(W)=\frac{1}{4}\times \frac{1}{6}=P_{\Pi^{\lpmln}}(F_W)$.
}
\smallskip

The embedding tells us that the exact inference in simple P-log is no harder than the one in $\lpmln$. 

\subsection{Experiments}

Following the translation described above, it is possible to compute a tight P-log program by translating it to $\lpmln$, and further turn that into the MLN instance following the translation introduced in Section~\ref{ssec:relation:mln2lpmln},
and then compute it using an MLN solver.

Table \ref{tab:PerformanceComparisonBetweenTwoWaysToComputePLogPrograms}
shows the performance comparison between this method and the native P-log implementation on some examples, which are modified from the ones  from \cite{baral09probabilistic}.  P-log 1.0.0 (\url{http://www.depts.ttu.edu/cs/research/krlab/plog.php}) implements two algorithms. The first algorithm (plog1) translates a P-log program to an ASP program and uses ASP solver {\sc smodels} to find all possible worlds of the P-log program. The second algorithm (plog2) produces a partially ground P-log program relevant to the query, and evaluates partial possible worlds to compute the probability of formulas. 
{\sc alchemy} 2.0 implements several algorithms for inference and learning. Here we use MC-SAT for lazy probabilistic inference, which combines MCMC with satisfiability testing. {\sc alchemy} first creates Markov Random Field (MRF) and then perform MC-SAT on the MRF created. The default setting of {\sc alchemy} performs 1000 steps sampling. We also tested with 5000 steps sampling to produce probability that is very close to the true probability. 
The experiments were performed on an Intel Core2 Duo CPU E7600 3.06GH with 4GB RAM running Ubuntu 13.10. The timeout was for 10 minutes. 

The experiments showed the clear advantage of the translation method that uses {\sc alchemy}. It is more scalable, and can be tuned to yield more precise probability with more sampling or less precise but fast computation, by changing sampling parameters. 
The P-log implementation of the second algorithm led to segment faults in many cases.




\section{Other Related Work}  \label{ssec:other-related}









We observed that ProbLog can be viewed as a special case of $\lpmln$. This result can be extended to embed Logic Programs with Annotated Disjunctions (LPAD) in $\lpmln$ based on the fact that any LPAD program can be further turned into a ProbLog program by eliminating disjunctions in the heads \cite[Section 3.3]{gutmann11oncontinuous}.

It is known that LPAD is related to several other languages. In~\cite{vennekens04logic}, it is shown that Poole's ICL~\cite{poole97theindependent} can be viewed as LPAD, and that acyclic LPAD programs can be turned into ICL. This indirectly tells us how ICL is related to~$\lpmln$. 

CP-logic \cite{vennekens09cp-logic} is a probabilistic extension of FO(ID) \cite{denecker07inductive} that is closely related to LPAD.



PrASP \cite{nickles14probabilistic} is another probabilistic ASP language. Like P-log and $\lpmln$, probability distribution is defined over stable models, but the weights there directly represent probabilities. 
%



Similar to $\lpmln$, log-linear description logics \cite{niepert11log} follow the weight scheme of log-linear models in the context of description logics.

\section{Conclusion} \label{sec:conclusion}



Adopting the log-linear models of MLN, language $\lpmln$ provides a simple and intuitive way to incorporate the concept of weights into the stable model semantics.
While MLN is an undirected approach, $\lpmln$ is a directed approach, where the directionality comes from the stable model semantics. This makes $\lpmln$ closer to P-log and ProbLog.  On the other hand, the weight scheme adopted in $\lpmln$ makes it amenable to apply the statistical inference methods developed for MLN computation. 
More work needs to be done to find how the methods studied in machine learning will help us to compute weighted stable models. 
While a fragment of $\lpmln$ can be computed by existing implementations of and MLNs, one may design a native computation method for the general case.

The way that we associate weights to stable models is orthogonal to the way the  stable model semantics are extended in a deterministic way. Thus it is rather straightforward to extend $\lpmln$ to allow other advanced features, such as aggregates, intensional functions and generalized quantifiers.

\medskip\noindent
{\bf Acknowledgements}\ \ 
We are grateful to Michael Gelfond for many useful discussions regarding the different ideas behind P-log and $\lpmln$, and to Evgenii Balai, Michael Bartholomew, Amelia Harrison, Yunsong Meng, and the anonymous referees for their useful comments.
This work was partially supported by the National Science Foundation under Grants IIS-1319794 and IIS-1526301, and ICT R\&D program of MSIP/IITP 10044494 (WiseKB).

\fontsize{9.0pt}{10.0pt} \selectfont

\bibliographystyle{aaai}

\bibliography{bib,bib2}

\begin{thebibliography}{}

\bibitem[\protect\citeauthoryear{Balduccini and
  Gelfond}{2003}]{balduccini03logic}
Balduccini, M., and Gelfond, M.
\newblock 2003.
\newblock Logic programs with consistency-restoring rules.
\newblock In {\em International Symposium on Logical Formalization of
  Commonsense Reasoning, AAAI 2003 Spring Symposium Series},  9--18.

\bibitem[\protect\citeauthoryear{Baral, Gelfond, and
  Rushton}{2009}]{baral09probabilistic}
Baral, C.; Gelfond, M.; and Rushton, J.~N.
\newblock 2009.
\newblock Probabilistic reasoning with answer sets.
\newblock {\em TPLP} 9(1):57--144.

\bibitem[\protect\citeauthoryear{Bauters \bgroup et al\mbox.\egroup
  }{2010}]{bauters10possibilistic}
Bauters, K.; Schockaert, S.; De~Cock, M.; and Vermeir, D.
\newblock 2010.
\newblock Possibilistic answer set programming revisited.
\newblock In {\em 26th Conference on Uncertainty in Artificial Intelligence
  (UAI 2010)}.

\bibitem[\protect\citeauthoryear{Buccafurri, Leone, and
  Rullo}{2000}]{buccafurri00enhancing}
Buccafurri, F.; Leone, N.; and Rullo, P.
\newblock 2000.
\newblock Enhancing disjunctive datalog by constraints.
\newblock {\em Knowledge and Data Engineering, IEEE Transactions on}
  12(5):845--860.

\bibitem[\protect\citeauthoryear{De~Raedt, Kimmig, and
  Toivonen}{2007}]{deraedt07problog}
De~Raedt, L.; Kimmig, A.; and Toivonen, H.
\newblock 2007.
\newblock {P}rob{L}og: A probabilistic prolog and its application in link
  discovery.
\newblock In {\em IJCAI}, volume~7,  2462--2467.

\bibitem[\protect\citeauthoryear{Denecker and
  Ternovska}{2007}]{denecker07inductive}
Denecker, M., and Ternovska, E.
\newblock 2007.
\newblock Inductive situation calculus.
\newblock {\em Artificial Intelligence} 171(5-6):332--360.

\bibitem[\protect\citeauthoryear{Erdem and Lifschitz}{2003}]{erd03}
Erdem, E., and Lifschitz, V.
\newblock 2003.
\newblock Tight logic programs.
\newblock {\em TPLP} 3:499--518.

\bibitem[\protect\citeauthoryear{Fierens \bgroup et al\mbox.\egroup
  }{2015}]{fierens15inference}
Fierens, D.; Van~den Broeck, G.; Renkens, J.; Shterionov, D.; Gutmann, B.;
  Thon, I.; Janssens, G.; and De~Raedt, L.
\newblock 2015.
\newblock Inference and learning in probabilistic logic programs using weighted
  boolean formulas.
\newblock {\em TPLP} 15(03):358--401.

\bibitem[\protect\citeauthoryear{Gelfond and Lifschitz}{1988}]{gel88}
Gelfond, M., and Lifschitz, V.
\newblock 1988.
\newblock The stable model semantics for logic programming.
\newblock In Kowalski, R., and Bowen, K., eds., {\em Proceedings of
  International Logic Programming Conference and Symposium},  1070--1080.
\newblock MIT Press.

\bibitem[\protect\citeauthoryear{Gutmann}{2011}]{gutmann11oncontinuous}
Gutmann, B.
\newblock 2011.
\newblock {\em On Continuous Distributions and Parameter Estimation in
  Probabilistic Logic Programs}.
\newblock Ph.D. Dissertation, KU Leuven.

\bibitem[\protect\citeauthoryear{Lee and Lifschitz}{2003}]{lee03a}
Lee, J., and Lifschitz, V.
\newblock 2003.
\newblock Loop formulas for disjunctive logic programs.
\newblock In {\em Proceedings of International Conference on Logic Programming
  ({ICLP})},  451--465.

\bibitem[\protect\citeauthoryear{Lee and Wang}{2015}]{lee15aprobabilistic}
Lee, J., and Wang, Y.
\newblock 2015.
\newblock A probabilistic extension of the stable model semantics.
\newblock In {\em International Symposium on Logical Formalization of
  Commonsense Reasoning, AAAI 2015 Spring Symposium Series}.

\bibitem[\protect\citeauthoryear{Lee, Meng, and Wang}{2015}]{lee15markov}
Lee, J.; Meng, Y.; and Wang, Y.
\newblock 2015.
\newblock Markov logic style weighted rules under the stable model semantics.
\newblock In Technical Communications of the 31st International Conference on
  Logic Programming.

\bibitem[\protect\citeauthoryear{Lin and Zhao}{2004}]{lin04}
Lin, F., and Zhao, Y.
\newblock 2004.
\newblock {A}{S}{S}{A}{T}: Computing answer sets of a logic program by
  {S}{A}{T} solvers.
\newblock {\em Artificial Intelligence} 157:115--137.

\bibitem[\protect\citeauthoryear{Nickles and
  Mileo}{2014}]{nickles14probabilistic}
Nickles, M., and Mileo, A.
\newblock 2014.
\newblock Probabilistic inductive logic programming based on answer set
  programming.
\newblock In {\em 15th International Workshop on Non-Monotonic Reasoning (NMR
  2014)}.

\bibitem[\protect\citeauthoryear{Niepert, Noessner, and
  Stuckenschmidt}{2011}]{niepert11log}
Niepert, M.; Noessner, J.; and Stuckenschmidt, H.
\newblock 2011.
\newblock Log-linear description logics.
\newblock In {\em IJCAI},  2153--2158.

\bibitem[\protect\citeauthoryear{Poole}{1997}]{poole97theindependent}
Poole, D.
\newblock 1997.
\newblock The independent choice logic for modelling multiple agents under
  uncertainty.
\newblock {\em Artificial Intelligence} 94:7--56.

\bibitem[\protect\citeauthoryear{Richardson and
  Domingos}{2006}]{richardson06markov}
Richardson, M., and Domingos, P.
\newblock 2006.
\newblock Markov logic networks.
\newblock {\em Machine Learning} 62(1-2):107--136.

\bibitem[\protect\citeauthoryear{Sato}{1995}]{sato95astatistical}
Sato, T.
\newblock 1995.
\newblock A statistical learning method for logic programs with distribution
  semantics.
\newblock In {\em Proceedings of the 12th International Conference on Logic
  Programming (ICLP)},  715--729.

\bibitem[\protect\citeauthoryear{Vennekens \bgroup et al\mbox.\egroup
  }{2004}]{vennekens04logic}
Vennekens, J.; Verbaeten, S.; Bruynooghe, M.; and A, C.
\newblock 2004.
\newblock Logic programs with annotated disjunctions.
\newblock In {\em Proceedings of International Conference on Logic Programming
  (ICLP)},  431--445.

\bibitem[\protect\citeauthoryear{Vennekens, Denecker, and
  Bruynooghe}{2009}]{vennekens09cp-logic}
Vennekens, J.; Denecker, M.; and Bruynooghe, M.
\newblock 2009.
\newblock {CP}-logic: A language of causal probabilistic events and its
  relation to logic programming.
\newblock {\em TPLP} 9(3):245--308.

\end{thebibliography}

\def\wdbprm{W^{\prime\prime}}
\def\pdbprm{P^{\prime\prime}}
\def\smdbprm{SM^{\prime\prime}}
\def\grdtms{\textbf{\textit{At}}}

\onecolumn
{\large \bf Appendix to ``Weighted Rules under the Stable Model Semantics''}

\section{Proof of Proposition \ref{prop:sm_subset}}
We use $I\models_{SM} \Pi$ to denote ``the interpretation $I$ is a
(deterministic) stable model of the program $\Pi$''.

The proof of Proposition \ref{prop:sm_subset} uses the following
theorem, which is a special case of Theorem 2 in \cite{lee11b}. Given an ASP program $\Pi$ of signature $\sigma$ and a subset $Y$ of $\sigma$, we use $LF_{\Pi}(Y)$ to denote the loop formula of $Y$ for $\Pi$.

\begin{thm}\label{thm:lf_subset}
Let $\Pi$ be a program of a finite first-order signature $\sigma$ with no
function constants of positive arity, and let $I$ be an interpretation of $\sigma$ that satisfies $\Pi$. The following conditions are equivalent to each other:

(a) $I\smmodels \Pi$;

(b) for every nonempty finite subset $Y$ of atoms formed from constants in $\sigma$, $I$ satisfies $LF_{\Pi}(Y)$;

(c) for every finite loop $Y$ of $\Pi$, $I$ satisfies $LF_{\Pi}(Y)$.
\end{thm}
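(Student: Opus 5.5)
The plan is to prove the chain (a)$\Rightarrow$(b)$\Rightarrow$(c)$\Rightarrow$(a) directly for the ground program $gr_\sigma[\Pi]$, which is finite because $\sigma$ has no function constants of positive arity. I use the standard loop formula for disjunctive programs from \cite{lee03a}. For a nonempty set $Y$ of ground atoms, $LF_\Pi(Y)$ is
\[
\bigwedge Y \rar \bigvee \Big(B\land N\land \bigwedge_{A'\in A\setminus Y}\neg A'\Big),
\]
where the disjunction ranges over rules $A\ar B\land N$ of $\Pi$ with $A\cap Y\neq\emptyset$ and $B\cap Y=\emptyset$. Call such a disjunct, when true in $I$, an \emph{external support} of $Y$. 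The implication (b)$\Rightarrow$(c) is immediate, since every loop is a nonempty finite set of atoms. This relies on the usual convention that every singleton is a loop.

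For (a)$\Rightarrow$(b), I assume $I$ is a minimal model of $\Pi^I$ and take a nonempty $Y$. If $Y\not\subseteq I$, the antecedent of $LF_\Pi(Y)$ is false, so suppose $Y\subseteq I$ and, for contradiction, that $Y$ has no external support in $I$. I will show that $I\setminus Y$ is a model of $\Pi^I$, which contradicts minimality. Take a reduct rule $A\ar B$ coming from a rule with $I\models N$, and suppose $B\subseteq I\setminus Y$. Then $B\cap Y=\emptyset$, and since $I\models\Pi$, some head atom of $A$ is in $I$.
\begin{itemize}
\item If some head atom lies in $I\setminus Y$, the rule is satisfied by $I\setminus Y$.
\item Otherwise every head atom true in $I$ lies in $Y$, so $A\cap Y\ne\emptyset$ and all atoms of $A\setminus Y$ are false in $I$. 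The rule then gives an external support of $Y$ in $I$, a contradiction.
\end{itemize}

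For (c)$\Rightarrow$(a), $I\models\Pi$ implies $I\models\Pi^I$, so it remains to show minimality. Suppose $J\subsetneq I$ is a model of $\Pi^I$, and let $Y=I\setminus J\neq\emptyset$. Consider the positive dependency graph of $\Pi$ restricted to $Y$. It has an edge from each head atom to each positive body atom of the same rule. Choose a strongly connected component $L$ of this graph that is a sink, meaning no edge leaves $L$ to another vertex of $Y$. Then $L$ is a loop of $\Pi$ and $L\subseteq I$, so by (c) there is a rule $A\ar B\land N$ with the following properties: $I\models B\land N$, $B\cap L=\emptyset$, $A\cap L\neq\emptyset$, and $A\setminus L$ false in $I$. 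Because $L$ is a sink and $A$ meets $L$, $B$ contains no atom of $Y\setminus L$; together with $B\cap L=\emptyset$ this gives $B\subseteq I\setminus Y=J$. The rule contributes $A\ar B$ to $\Pi^I$, since $I\models N$, so $J$ contains some head atom $a\in A$. Since $J\subseteq I$ and $A\setminus L$ is false in $I$, we get $a\in L\subseteq Y=I\setminus J$, a contradiction.

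The main obstacle is the choice in (c)$\Rightarrow$(a): an arbitrary loop inside $Y$ will not do. Taking a sink strongly connected component is exactly what guarantees that the supporting rule's positive body avoids all of $Y$, not merely $L$. I also have to make sure that the conventions on loops (singletons included) and on $\Pi^I$ with disjunctive heads and negative formulas $N$ match those of \cite{lee11b}, so that the result is indeed the special case of its Theorem~2.
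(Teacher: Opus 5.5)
Your proof is correct. It differs from the paper mainly because the paper gives no argument of its own: it imports the statement as a special case of Theorem~2 of \cite{lee11b} and uses it as a black box.

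You instead prove the ground version directly for $gr_\sigma[\Pi]$. For $(a)\Rightarrow(b)$ you show that a set $Y\subseteq I$ with no external support could be deleted, leaving the smaller model $I\setminus Y$ of $\Pi^I$. For $(c)\Rightarrow(a)$ you take a sink strongly connected component $L$ of the positive dependency graph restricted to $Y=I\setminus J$. That choice is exactly what pushes the supporting rule's positive body into $J$, rather than merely outside $L$. You leave implicit that such an $L$ is a loop of $\Pi$, but this is immediate, since paths between members of one component stay inside it.

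Citing \cite{lee11b} buys brevity and the more general setting treated there. Your argument is elementary and self-contained, and it covers everything the appendix needs: Proposition~\ref{prop:sm_subset}, the lemma comparing $LF_{\o{\Pi}}$ with $LF_{\o{\Pi_I}}$, and the ProbLog and P-log lemmas all work with finite ground programs and sets of ground atoms.

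One small correction: $(b)\Rightarrow(c)$ needs only that loops are nonempty. The convention that every singleton is a loop is what you actually use in $(c)\Rightarrow(a)$, where the sink component may be a single atom with no self-edge.
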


\bigskip

\noindent{\bf Proposition~\ref{prop:sm_subset} \optional{prop:sm_subset}}\
\ 
{\sl
For any (unweighted) logic program~$\Pi$ of signature $\sigma$, and
any subset $\Pi'$ of $\Pi$, if an interpretation $I$ is a stable model of $\Pi'$ and $I$ satisfies~$\Pi$, then $I$ is a stable model of~$\Pi$ as well.
}
\vspace{3 mm}

\begin{proof}

For any subset $L$ of $\sigma$, since $I$ is a stable model of
$\Pi^\prime$, by Theorem \ref{thm:lf_subset}, $I$ satisfies
$LF_{\Pi^\prime}(L)$, that is, $I$ satisfies $L^{\wedge} \rightarrow ES_{\Pi^\prime}(L)$. It can be seen that the disjunctive terms in $ES_{\Pi^\prime}(L)$ is a subset of the disjunctive terms in $ES_{\Pi}(L)$, and thus $ES_{\Pi^\prime}(L)$ entails $ES_{\Pi}(L)$. So $I$ satisfies $L^{\wedge} \rightarrow ES_{\Pi}(L)$, which is $LF_{\Pi}(L)$, and since in addition we have $I\vDash \Pi$, $I$ is a stable model of $\Pi$.
\qed
\end{proof}

\section{Proof of Proposition \ref{prop:soft}}

\noindent{\bf Proposition~\ref{prop:soft} \optional{prop:soft}}\
\ 
{\sl
If $\sm'[\Pi]$ is not empty, for every interpretation~$I$, $P'_\Pi(I)$ coincides with $P_\Pi(I)$.
}
\vspace{3 mm}

\begin{proof}
For any interpretation $I$, by definition, we have

\begin{align}
\nonumber P_{\Pi}(I) &= \lim_{\alpha\to\infty}\frac{W_{\Pi}(I)}{\sum_{J\in SM\left[\Pi\right]}W_{\Pi}(J)}\\
\nonumber &= \lim_{\alpha\to\infty}\frac{W_{\Pi}(I)}{\sum_{J\vDash_{SM}\overline{\Pi_J}}exp(\sum_{w:F\in {\Pi}_J}w)}.
\end{align}

We notice the following fact: If an interpretation $I$ belongs to $SM^\prime\left[\Pi\right]$, then $I$ satisfies $\overline{\Pi^{hard}}$ and $I$ is a stable model of $\overline{\Pi_{I}}$. This can be seen from the fact that if $I\vDash \overline{\Pi^{hard}}$, then we have $\Pi_{I} = \Pi^{hard}\cup (\Pi^{soft})_I$.
  
\begin{itemize}
\item Suppose $I\in SM^\prime\left[\Pi\right]$, which implies that $I$ satisfies $\overline{\Pi^{hard}}$ and is a stable model of $\overline{\Pi_I}$. Then we have

\begin{align}
\nonumber P_{\Pi}(I) =& \lim_{\alpha\to\infty}\frac{exp(\sum_{w:F\in \Pi_I}w)}{\sum_{J\vDash_{SM}\overline{\Pi_J}}exp(\sum_{w:F\in {\Pi}_J}w)}.
\end{align}
Splitting the denominator into two parts: those $J$'s that satisfy $\overline{\Pi^{hard}}$ and those that do not, and extracting the weights of formulas in $\overline{\Pi^{hard}}$, we have
{\tiny
\begin{align}
\nonumber P_{\Pi}(I)=&\lim_{\alpha\to\infty}\frac{exp(|\Pi^{hard}|\cdot\alpha)\cdot exp(\sum_{w:F\in \Pi_I\setminus \Pi^{hard}}w)}{exp(|\Pi^{hard}|\cdot\alpha)\cdot\sum_{J\vDash_{SM}\overline{\Pi_J}: J \vDash \overline{\Pi^{hard}}}exp(\sum_{w:F\in \Pi_J\setminus \Pi^{hard}}w) + \sum_{J\vDash_{SM}\overline{\Pi_J}: J \nvDash \overline{\Pi^{hard}}}exp(|\Pi^{hard}\cap \Pi_J|\cdot\alpha)\cdot exp(\sum_{w:F\in \Pi_J\setminus \Pi^{hard}}w)}.
\end{align}
}
We divide both the numerator and the denominator by $exp(|\Pi^{hard}|\cdot\alpha)$.
{\tiny
\begin{align}
\nonumber P_{\Pi}(I) &= \lim_{\alpha\to\infty}\frac{exp(\sum_{w:F\in \Pi_I\setminus \Pi^{hard}}w)}{\sum_{J\vDash_{SM}\overline{\Pi_J}: J \vDash \overline{\Pi^{hard}}}exp(\sum_{w:F\in \Pi_J\setminus \Pi^{hard}}w) + \frac{\sum_{J\vDash_{SM}\overline{\Pi_J}: J \nvDash \overline{\Pi^{hard}}}exp(|\Pi^{hard}\cap \Pi_J|\cdot\alpha)\cdot exp(\sum_{w:F\in \Pi_J\setminus \Pi^{hard}}w)}{exp(|\Pi^{hard}|\cdot\alpha)}}\\
\nonumber &= \lim_{\alpha\to\infty}\frac{exp(\sum_{w:F\in \Pi_I\setminus \Pi^{hard}}w)}{\sum_{J\vDash_{SM}\overline{\Pi_J}: J \vDash \overline{\Pi^{hard}}}exp(\sum_{w:F\in \Pi_J\setminus \Pi^{hard}}w) + \sum_{J\vDash_{SM}\overline{\Pi_J}: J \nvDash \overline{\Pi^{hard}}}\frac{exp(|\Pi^{hard}\cap \Pi_J|\cdot\alpha)}{exp(|\Pi^{hard}|\cdot\alpha)}\cdot exp(\sum_{w:F\in \Pi_J\setminus \Pi^{hard}}w)}.
\end{align}
}
For $J\nvDash \overline{\Pi^{hard}}$, we note $|\Pi^{hard}\cap \Pi_J|\leq |\Pi^{hard}|-1$, so

\begin{align}
\nonumber P_{\Pi}(I) &= \frac{exp(\sum_{w:F\in \Pi_I\setminus \Pi^{hard}}w)}{\sum_{J\vDash_{SM}\overline{\Pi_J}: J \vDash \overline{\Pi^{hard}}}exp(\sum_{w:F\in \Pi_J\setminus \Pi^{hard}}w)}\\
\nonumber &= P^{\prime}_{\Pi}(I).
\end{align}

\item Suppose $I\notin SM^\prime\left[\Pi\right]$, which implies that $I$ does not satisfy $\overline{\Pi^{hard}}$ or is not a stable model of $\overline{\Pi_I}$. Let $K$ be any interpretation in $SM^\prime\left[\Pi\right]$. By definition, $K$ satisfies $\overline{\Pi^{hard}}$ and $K$ is a
  stable model of $\overline{\Pi_K}$.
\begin{itemize}
\item Suppose $I$ is not a stable model of $\overline{\Pi_I}$. Then by definition, $W_{\Pi}(I)=W^\prime_{\Pi}(I)=0$, and thus $P_{\Pi}(I)=P^\prime_{\Pi}(I)=0$.
\item Suppose $I$ is a stable model of $\overline{\Pi_I}$ but $I$ does not satisfy
  $\overline{\Pi^{hard}}$. 
\begin{align}
\nonumber P_\Pi(I) &= \lim_{\alpha\to\infty}\frac{exp(\sum_{w:F\in \Pi_I}w)}{\sum_{J\vDash_{SM}\overline{\Pi_J}}exp(\sum_{w:F\in \Pi_J}w)}.
\end{align}
Since $K$ satisfies $\overline{\Pi^{hard}}$, we have 
  $\overline{\Pi^{hard}}\subseteq \overline{\Pi_K}$. By assumption we have that $K$ is a
  stable model of $\overline{\Pi_K}$. We split the denominator into $K$ and the other interpretations, which gives
\begin{align}
\nonumber P_\Pi(I)&= \lim_{\alpha\to\infty}\frac{exp(\sum_{w:F\in \Pi_I}w)}{exp(\sum_{w:F\in \Pi_K}w)+\sum_{J\neq K:J\vDash_{SM}\overline{\Pi_J}}exp(\sum_{w:F\in \Pi_J}w)}.
\end{align}
Extracting weights from the formulas in $\Pi^{hard}$, we have
\begin{align}
\nonumber P_{\Pi}(I)&= \lim_{\alpha\to\infty}\frac{exp(|\Pi^{hard}\cap \Pi_I|\cdot\alpha)\cdot exp(\sum_{w:F\in \Pi_I\setminus \Pi^{hard}}w)}{exp(|\Pi^{hard}|\cdot\alpha)\cdot exp(\sum_{w:F\in \Pi_K\setminus \Pi^{hard}}w)+\sum_{J\neq K:J\vDash_{SM}\overline{\Pi_J}}exp(\sum_{w:F\in \Pi_J}w)}\\
\nonumber &\leq \lim_{\alpha\to\infty}\frac{exp(|\Pi^{hard}\cap \Pi_I|\cdot\alpha)\cdot exp(\sum_{w:F\in \Pi_I\setminus \Pi^{hard}}w)}{exp(|\Pi^{hard}|\cdot\alpha)\cdot exp(\sum_{w:F\in \Pi_K\setminus \Pi^{hard}}w)}.
\end{align}

Since $I$ does not satisfy $\o{\Pi^{hard}}$, we have $|\Pi^{hard}\cap \Pi_I| \leq |\Pi^{hard}|-1$, and thus

\begin{align}
\nonumber P_{\Pi}(I) &\leq \lim_{\alpha\to\infty}\frac{exp(|\Pi^{hard}\cap \Pi_I|\cdot\alpha)\cdot exp(\sum_{w:F\in \Pi_I\setminus \Pi^{hard}}w)}{exp(|\Pi^{hard}|\cdot\alpha)\cdot exp(\sum_{w:F\in \Pi_K\setminus \Pi^{hard}}w)} = 0 =P^{\prime}_{\Pi}(I).
\end{align}

\end{itemize}
\end{itemize}
\qed
\end{proof}

The following proposition establishes a useful property.
\begin{prop}\label{prop:smprm}
Given an $\lpmln$ program $\Pi$ such that $SM^\prime\left[\Pi\right]$ is not empty, and an interpretation $I$, the following three statements are equivalent:
\begin{enumerate}
\item $I$ is a stable model of $\Pi$;
\item $I\in SM^\prime\left[\Pi\right]$;
\item $P^\prime_{\Pi}(I)>0$.
\end{enumerate}
\end{prop}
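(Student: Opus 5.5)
I will prove the cycle of implications $2\Rightarrow 3\Rightarrow 1\Rightarrow 2$. Throughout I use Proposition~\ref{prop:soft}: since $\sm'[\Pi]$ is assumed nonempty, $P_\Pi(I)=P'_\Pi(I)$ for every interpretation $I$. Recall that ``$I$ is a stable model of $\Pi$'' means $P_\Pi(I)\neq 0$.

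\emph{Step 2 $\Rightarrow$ 3.} If $I\in\sm'[\Pi]$, then by definition
\[
W'_\Pi(I)=exp\Big(\sum_{w:R\in(\Pi^{\rm soft})_I}w\Big),
\]
which is strictly positive because every soft weight is a finite real number. The denominator $\sum_{J\in\sm'[\Pi]}W'_\Pi(J)$ is a finite sum of positive terms, and it is nonempty since $\sm'[\Pi]\neq\emptyset$. It is therefore positive and finite, so $P'_\Pi(I)>0$.

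\emph{Step 3 $\Rightarrow$ 1.} If $P'_\Pi(I)>0$, then Proposition~\ref{prop:soft} gives $P_\Pi(I)=P'_\Pi(I)>0$. Hence $P_\Pi(I)\neq 0$, so $I$ is a stable model of $\Pi$.

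\emph{Step 1 $\Rightarrow$ 2.} Suppose $P_\Pi(I)\neq 0$. By Proposition~\ref{prop:soft}, $P'_\Pi(I)\neq 0$, so $W'_\Pi(I)\neq 0$. By the definition of $W'_\Pi$, this forces $I\in\sm'[\Pi]$, because $W'_\Pi$ is $0$ outside $\sm'[\Pi]$. Alternatively, this direction can be read off the case analysis in the proof of Proposition~\ref{prop:soft}, where $I\notin\sm'[\Pi]$ was shown to give $P_\Pi(I)=0$.

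There is no real obstacle. The only point requiring care is Step~2 $\Rightarrow$ 3: one must note that soft weights are finite reals, so the exponential is strictly positive. One must also note that the nonemptiness hypothesis makes the normalizing constant positive, so $P'_\Pi$ is well defined. All the limit reasoning involving $\alpha$ is already contained in Proposition~\ref{prop:soft}.
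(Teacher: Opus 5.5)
Your proof is correct and takes essentially the same route as the paper. The equivalence of conditions 2 and 3 comes from $exp$ being strictly positive on finite soft weights and from $W'_\Pi$ vanishing outside $\sm'[\Pi]$, and Proposition~\ref{prop:soft} then links this to $P_\Pi$, i.e., to condition 1. The only difference is presentation: you arrange it as the cycle $2\Rightarrow 3\Rightarrow 1\Rightarrow 2$, whereas the paper proves $2\Leftrightarrow 3$ and $1\Leftrightarrow 3$ separately.
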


\begin{proof}
Firstly, it is easy to see that the second and third conditions are equivalent. We notice that $exp(x)>0$ for all $x\in (-\infty, +\infty)$. So it can be seen from the definition that $W^\prime_{\Pi}(I)>0$ if and only if $I\in SM^\prime\left[\Pi\right]$, and consequently $P^\prime_{\Pi}(I)>0$ if and only if $I\in SM^\prime\left[\Pi\right]$.

Secondly, by Proposition \ref{prop:soft}, we know that $P^\prime_{\Pi}(I)$ is equivalent to $P_{\Pi}(I)$. By definition, the first condition is equivalent to ``$P_{\Pi}(I)>0$''. So we have that the first condition is equivalent to the third condition.
\qed
\end{proof}

\bigskip

Proposition \ref{prop:smprm} does not hold if we replace ``$SM^\prime\left[\Pi\right]$'' by ``$SM\left[\Pi\right]$''.
\begin{example}
Consider the following $\lpmln$ program $\Pi$:
\begin{align}
\nonumber (r_1)\ \ \ \alpha:\ \ \ \ &p\\
\nonumber (r_2)\ \ \ 1:\ \ \ \ &q
\end{align}
and the interpretation $I=\left\{q\right\}$. $I$ belongs to $SM\left[\Pi\right]$ since $I$ is a stable model of $\o{\Pi_I}$, which contains $r_2$ only. However, $P_{\Pi}(I)=0$ since $I$ does not satisfy the hard rule $r_1$.
On the other hand, $I$ does not belong to $SM^\prime\left[\Pi\right]$.
\end{example}

\section{Proof of Theorem \ref{thm:lpmln-asp}}

\noindent{\bf Theorem~\ref{thm:lpmln-asp} \optional{thm:lpmln-asp}}\
\ 
{\sl
For any logic program $\Pi$, the (deterministic) stable models of $\Pi$ are exactly the (probabilistic) stable models of $\P_\Pi$ whose weight is $e^{k\alpha}$, where $k$ is the number of all (ground) rules in $\Pi$. If $\Pi$ has at least one stable model, then all stable models of $\P_\Pi$ have the same probability, and are thus the stable models of $\Pi$ as well.
}
\vspace{3 mm}

\begin{proof}
We notice that $\overline{(\P_{\Pi})^{hard}} = \Pi$.
We first show that an interpretation $I$ is a stable model of $\Pi$
if and only if it is a stable model of $\P_\Pi$ whose weight is
$e^{k\alpha}$. Suppose $I$ is a stable model of $\Pi$. Then $I$ is a
stable model of $\overline{(\P_{\Pi})^{hard}}$. Obviously $(\P_{\Pi})^{hard}$ is
$(\P_{\Pi})_I$. So the
weight of $I$ is $e^{k\alpha}$. 
Suppose $I$ is a stable model of $\P_\Pi$ whose weight is
$e^{k\alpha}$. Then $I$ satisfies all the rules in $\P_\Pi$, since all rules in $\P_\Pi$ contribute to its
weight, and $I$ is a stable model of
$\overline{(\P_{\Pi})_I}=\overline{(\P_{\Pi})^{hard}}$, which is equivalent to $\Pi$. So $I$ is a
stable model of $\Pi$.

Now suppose $\Pi$ has at least one stable model. It follows that $\overline{(\P_{\Pi})^{hard}}$ has some stable model.
\begin{itemize}
\item Suppose $I$ is not a stable model of $\Pi$.
\begin{itemize}
\item Suppose $I$ does not satisfy $\Pi$. Then $I\nvDash \overline{(\P_{\Pi})^{hard}}$. By Proposition \ref{prop:soft}, $P_{\P_{\Pi}}(I) = 0$, and consequently $I$ is not a stable model of $\P_{\Pi}$.
\item Suppose $I$ satisfies $\Pi$. Then $\overline{(\P_{\Pi})_I}=\Pi$ and $I$ is not a stable model of $\overline{(\P_{\Pi})_I}$. By definition, $W_{\P_{\Pi}}(I)=0$ and consequently $P_{\P_{\Pi}}(I) = 0$, which means that $I$ is not a stable model of $\P_{\Pi}$.
\end{itemize}
\item Suppose $I$ is a stable model of $\Pi$. Then $I\vDash \overline{(\P_{\Pi})^{hard}}$, $\overline{(\P_{\Pi})_I}=\Pi$ and $I$ is a stable model of $\overline{(\P_{\Pi})_I}$.

By Proposition \ref{prop:soft}, 
\begin{align}
\nonumber P_{\P_{\Pi}}(I) &= \frac{exp(\sum_{w:F\in (\P_{\Pi})_I\setminus (\P_{\Pi})^{hard}}w)}{\sum_{J\vDash_{SM}\o{(\P_{\Pi})_J}:J\vDash \overline{(\P_{\Pi})^{hard}}}exp(\sum_{w:F\in \overline{(\P_{\Pi})_J\setminus (\P_{\Pi})^{hard}}}w)}\\
\nonumber &= \frac{exp(0)}{\sum_{J\vDash_{SM}\o{(\P_{\Pi})_J}:J\vDash \o{(\P_{\Pi})^{hard}}}exp(\sum_{w:F\in \overline{(\P_{\Pi})_J\setminus (\P_{\Pi})^{hard}}}w)}.
\end{align}
It can be seen that ``$J\vDash_{SM}\overline{(\P_{\Pi})_J}:J\vDash\overline{(\P_{\Pi})^{hard}}$'' is equivalent to ``$J$ is a stable model of $\Pi$'', since $\overline{(\P_{\Pi})^{hard}} = \Pi$. Furthermore, since $\Pi\setminus \overline{(\P_{\Pi})^{hard}}=\emptyset$, we have $exp(\sum_{w:F\in \overline{(\P_{\Pi})_J\setminus (\P_{\Pi})^{hard}}}w) = exp(0)$ for all $J\vDash_{SM}\overline{\P_J}:J\vDash \overline{(\P_{\Pi})^{hard}}$. So
\begin{align}
\nonumber P_{\P_{\Pi}}(I) &= \frac{exp(0)}{\sum_{J\vDash_{SM} \Pi}exp(0)}\\
\nonumber &= \frac{1}{k}
\end{align}
where $k$ is the number of stable models of $\Pi$.
\end{itemize}
\qed
\end{proof}

\section{Proof of Proposition \ref{prop: weak2lpmln}}

To facilitate the proof, we introduce a formal definition of ASP programs with weak constraints, as follows.

An \emph{ASP program with weak constraints} is a pair
\[
\langle \Pi, \i{CONSTR}\rangle,
\]
where $\Pi$ is a set of standard ASP rules of the form (\ref{rule}), and $\i{CONSTR}$ is a set of weak constraints $C$ of the following form
\beq
	{\tt :\sim}\ \i{Body} \ \ [\i{Weight}],
\eeq{weak2}
where $Weight$ is a positive integer, and $\i{Body}$ is a set of
literals. We will refer to $\i{Body}$ by $Body(C)$, and $\i{Weight}$
by $Weight(C)$. The {\em penalty} that $I$ receives, denoted as $Penalty(I)$, is defined as
\[
Penalty(I)=\sum_{C\in \mi{CONSTR}: I\vDash Body(C)}Weight(C).
\]
The {\em stable models} of an ASP program with weak constraints $\langle \Pi, \i{CONSTR}\rangle$ are the elements of the following set
\[
\left\{I\mid \text{$I\smmodels \Pi$ and there does not exists $J\neq I$ such that $J\smmodels \Pi$ and $Penalty(J)<Penalty(I)$}\right\}.
\]

By $\langle \Pi, \i{CONSTR}\rangle^{\lpmln}$ we denote the following $\lpmln$ program:
\[
\left\{\alpha: R\mid R\in\Pi\right\}\cup\left\{-Weight(C): \bot\leftarrow Body(C)\mid C\in \i{CONSTR}\right\}.
\]

For any interpretation $K$, let $\i{CONSTR}_K$ denote the following set:
\[
\left\{\bot\leftarrow Body(C)\mid C\in \i{CONSTR}, K\vDash \neg Body(C)\right\}
\]

\begin{lemma}\label{lem:weak2lpmln}
For any program with weak constraints $\langle \Pi, \i{CONSTR}\rangle$ that has a stable model, an interpretation $I$ is a stable model of $\Pi$ if and only if $I$ is a stable model of $\langle \Pi, \i{CONSTR}\rangle^{\lpmln}$.
\end{lemma}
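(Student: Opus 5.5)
The plan is to reduce the claim to Proposition~\ref{prop:smprm} applied to $\P=\langle \Pi, \i{CONSTR}\rangle^{\lpmln}$. The hard part of $\P$ is $\{\alpha:R\mid R\in\Pi\}$, so $\o{\P^{\rm hard}}=\Pi$. The soft part consists of the rules $-Weight(C):\ \bot\leftarrow Body(C)$. Since the program with weak constraints has a stable model, $\Pi$ has some stable model $I_0$, and first I will show $I_0\in\sm'[\P]$. Once that holds, $\sm'[\P]\neq\emptyset$, and Proposition~\ref{prop:smprm} says that $I$ is a stable model of $\P$ if and only if $I\in\sm'[\P]$. It then remains to show that $I\in\sm'[\P]$ if and only if $I\smmodels\Pi$.

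The core step is this characterization of $\sm'[\P]$. Take any $I$ that satisfies $\Pi$. Then $\o{\P_I}=\Pi\cup\i{CONSTR}_I$, where $\i{CONSTR}_I$ is the set of constraints $\bot\leftarrow Body(C)$ satisfied by $I$, i.e.\ those with $I\models\neg Body(C)$.

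For the forward direction, suppose $I\smmodels\Pi$. Then $I$ satisfies $\o{\P_I}$, and $\Pi\subseteq\o{\P_I}$. Proposition~\ref{prop:sm_subset} then gives that $I$ is a stable model of $\o{\P_I}$. Since $I$ also satisfies $\o{\P^{\rm hard}}=\Pi$, we get $I\in\sm'[\P]$. Taking $I=I_0$ gives the nonemptiness needed above.

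For the converse, suppose $I\in\sm'[\P]$, so $I\models\Pi$ and $I$ is a stable model of $\Pi\cup\i{CONSTR}_I$. Adding constraints (rules with head $\bot$) never creates new stable models, and removing them preserves stability for interpretations that satisfy them. Concretely:
\begin{itemize}
\item the reduct of a constraint relative to $I$ is either absent or a constraint $\bot\leftarrow B$, and $I$ satisfies it;
\item so any model of the reduct of $\Pi$ is a model of the reduct of $\Pi\cup\i{CONSTR}_I$ exactly when it satisfies these extra constraints;
\item so minimality of $I$ among models of $(\Pi\cup\i{CONSTR}_I)^I$ implies minimality among models of $\Pi^I$.
\end{itemize}
The last point needs care, because a smaller model $J\subset I$ of $\Pi^I$ might violate a positive constraint $\bot\leftarrow B$ of the reduct and therefore fail to be a model of the larger reduct. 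Here I use that $B$ is a conjunction of atoms: if $J\subseteq I$ and $J\models B$, then $I\models B$, which contradicts $I$ satisfying that constraint. Hence $J$ does satisfy the extra constraints, and $I\smmodels\Pi$.

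I expect this converse direction to be the main obstacle, specifically the monotonicity argument just given. Everything else follows from Proposition~\ref{prop:sm_subset} and Proposition~\ref{prop:smprm}. I will also note that the translation in this appendix uses $\bot\leftarrow Body(C)$, whereas the main text uses $\bot\leftarrow\neg Body$. The argument works for either form, since all it needs is that the soft rules are constraints.
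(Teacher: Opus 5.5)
Your proof is correct and follows the paper's route. You pass through Proposition~\ref{prop:smprm}, note that for $I\models\Pi$ the program $\o{\P_I}$ is $\Pi\cup\i{CONSTR}_I$, and use that constraints satisfied by $I$ can be added (via Proposition~\ref{prop:sm_subset}) or dropped without affecting stability; you also make explicit two points the paper leaves implicit: that $\sm'[\P]\neq\emptyset$ (required by Proposition~\ref{prop:smprm}) follows from $\Pi$ having a stable model, and the monotonicity argument behind ``$\i{CONSTR}_I$ contains constraints only.''
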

\begin{proof}
($\Rightarrow$) Since $\i{CONSTR}_I$ consists of constraints only, we can derive from the fact that $I$ is a stable model of $\Pi$ that $I$ is a stable model of $\Pi\cup \i{CONSTR}_I$, which is $
\o{(\langle \Pi, \i{CONSTR}\rangle^{\lpmln})^{hard}}\cup\o{((\langle \Pi, \i{CONSTR}\rangle^{\lpmln})^{soft})_I}$. So $I\in SM^{\prime}\left[\langle \Pi, \i{CONSTR}\rangle^{\lpmln}\right]$ and by Proposition \ref{prop:smprm}, $I$ is a stable model of $\langle \Pi, \i{CONSTR}\rangle^{\lpmln}$.

($\Leftarrow$) Consider any stable model $I$ of $\langle \Pi, \i{CONSTR}\rangle^{\lpmln}$. By Proposition \ref{prop:smprm}, $I\in SM^{\prime}\left[\langle \Pi, \i{CONSTR}\rangle^{\lpmln}\right]$. This means $I$ is a stable model of $\o{(\langle \Pi, \i{CONSTR}\rangle^{\lpmln})^{hard}}\cup\o{((\langle \Pi, \i{CONSTR}\rangle^{\lpmln})^{soft})_I}$, which is equivalent to $\Pi\cup \i{CONSTR}_I$. Since $\i{CONSTR}_I$ contains constraints only, $I$ is a stable model of $\Pi$.
\qed
\end{proof}

\bigskip

\noindent{\bf Proposition~\ref{prop: weak2lpmln} \optional{prop: weak2lpmln}}\
\ 
{\sl
For any program with weak constraints that has a stable model, its stable models are the same as the stable models of the corresponding $\lpmln$ program with the highest normalized weight. 
}
\vspace{3 mm}

\begin{proof}
\BOCC
For any program with weak constraints that has a stable model, let $I$ be any one of its stable models. Since $I$ is a stable model of $\Pi$, by definition, we have: 1) $I$ is a stable model of the set of standard ASP rules in $\Pi$, 2) there is no stable model of the set of standard ASP rules in $\Pi$ with penalty smaller then $I$. Due to how we construct the corresponding $\lpmln$ program, from 1) we have that $I$ must be the stable model of the corresponding $\lpmln$ program (By Theorem \ref{thm:lpmln-asp}); from 2) we have that there is no stable model of the corresponding $\lpmln$ program with smaller weights obtained from soft rules, which implies that $I$ is the stable model of the corresponding $\lpmln$ program with the highest normalized weight, since all soft rules in the corresponding $\lpmln$ program have negative weights. 
\EOCC

($\Rightarrow$) For any program with weak constraints $\langle \Pi, \i{CONSTR}\rangle$ that has a stable model, let $I$ be any one of its stable models. Since $I$ is a stable model of $\langle \Pi, \i{CONSTR}\rangle$, by definition, we have:
\begin{enumerate}
\item $I\smmodels \Pi$;
\item There does not exist $J\neq I$ such that $J\smmodels \Pi$ and $Penalty(J)<Penalty(I)$.
\end{enumerate}
From the first condition, by Lemma \ref{lem:weak2lpmln}, it follows that $I$ is a stable model of $\langle \Pi, \i{CONSTR}\rangle^{\lpmln}$.

Now we show that there does not exist any $J\neq I$ such that $J$ is a stable model of $\langle \Pi, \i{CONSTR}\rangle^{\lpmln}$ and $P_{\langle \Pi, \mi{CONSTR}\rangle^{\lpmln}}(J)>P_{\langle \Pi, \mi{CONSTR}\rangle^{\lpmln}}(I)$. Assume, for the sake of contradiction, that such $J$ exists. Then $J$ must be a stable model of $\Pi$ by Lemma \ref{lem:weak2lpmln}. Since $P_{\langle \Pi, \mi{CONSTR}\rangle^{\lpmln}}(J)>P_{\langle \Pi, \mi{CONSTR}\rangle^{\lpmln}}(I)$, due to how we translate $\langle \Pi, \i{CONSTR}\rangle$ to $\langle \Pi, \i{CONSTR}\rangle^{\lpmln}$, $Penalty(J)<Penalty(I)$, which is a contradiction to the second condition. So such $J$ does not exist.

So $I$ is a stable model of $\langle \Pi, \i{CONSTR}\rangle^{\lpmln}$ with the highest normalized weight.
 
($\Leftarrow$) Let $I$ be any stable model of $\langle \Pi, \i{CONSTR}\rangle^{\lpmln}$ with the highest normalized weight. 
\begin{itemize}
\item {\bf $I\smmodels \Pi$}: Since $I$ is a stable model of $\langle \Pi, \i{CONSTR}\rangle^{\lpmln}$, by Lemma \ref{lem:weak2lpmln}, $I$ is a stable model of $\Pi$.

\item {\bf There does not exist any $J$ s.t. $J\smmodels \Pi$ and $Penalty(J)<Penalty(I)$}: Suppose, to the contrary, that there exists such $J$. By Lemma \ref{lem:weak2lpmln}, $J$ is a stable model of $\langle \Pi, \i{CONSTR}\rangle^{\lpmln}$. Since $Penalty(J)<Penalty(I)$, $P_{\langle \Pi, \mi{CONSTR}\rangle^{\lpmln}}(J) > P_{\langle \Pi, \mi{CONSTR}\rangle^{\lpmln}}(I)$. This is a contradiction to the fact that $I$ is a stable model of $\langle \Pi, \i{CONSTR}\rangle^{\lpmln}$ with the highest normalized weight. So there cannot exist such $J$.
\end{itemize}

In conclusion, $I$ is a stable model of $\langle \Pi, \i{CONSTR}\rangle$.
\qed
\end{proof}

\section{Proof of Theorem \ref{thm:mln-lpmln} and Theorem \ref{thm:lpmln-lf}}

The following is a review of MLN from~\cite{richardson06markov}, slightly reformulated in order to facilitate our discussion. 

A \emph{Markov Logic Network (MLN)} $\L$ of signature $\sigma$ is a
finite set of pairs $\langle F, w\rangle$ (also written as a
``weighted formula'' $w: F$), where $F$ is a first-order formula of
$\sigma$ and $w$ is either a real number or a symbol $\alpha$ denoting
the ``hard weight.'' 
We say that $\L$ is {\em ground} if its formulas contain no variables.

We first define the semantics for ground MLNs. 
For any ground MLN $\L$ of signature $\sigma$ and any Herbrand
interpretation $I$ of~$\sigma$, we define $\L_I$ to be the set of
formulas in~$\L$ that are satisfied by $I$. The \emph{weight} of an
interpretation $I$ under~$\L$, denoted $W_\L(I)$, is defined as
{\small
\[ 
  W_\L(I) = exp\Bigg(\sum_{w:F\;\in\;\L\atop ~~~F\;\in\; \L_I} w\Bigg). 
\] 
}
The probability of $I$ under $\L$, denoted $P_\L(I)$, is defined as
\[ 
  P_\L(I) = \lim_{\alpha\to\infty} \frac{W_\L(I)}{\sum_{J\in PW}{W_\L(J)}}, 
\] 
where $PW$ (``Possible Worlds'') is the set of all Herbrand interpretations of
$\sigma$. We say that $I$ is a {\em model} of $\L$ if $\i{P}_\L(I)\ne 0$.

The definition is extended to any non-ground MLN by identifying it with
its {\sl ground instance}. 
Any MLN $\L$ of signature $\sigma$ can be identified with the ground
MLN, denoted $gr_\sigma[\L]$, by turning each formula in $\L$ into a set
of ground formulas.
The weight of each ground formula in $gr_\sigma[\L]$ is the same as the weight of the formula in $\L$ from which it is obtained.

Given a signature $\sigma$, we use $\grdtms(\sigma)$ to denote the set of all ground atoms that can be constructed from symbols in $\sigma$.

\bigskip

\noindent{\bf Theorem~\ref{thm:mln-lpmln} \optional{thm:mln-lpmln}}\
\ 
{\sl
Any MLN $\L$ and its $\lpmln$ representation $\Pi_\L$ have the same probability distribution over all interpretations.
}
\vspace{3 mm}

\begin{proof}
We show that for any interpretation $I$, $P_\L(I)=P_{\Pi_\L}(I)$. For a set of atoms ${\bf p}$, let ${\bf Choice}({\bf p})$ denote the set of weighted rules $\bigcup_{p\in {\bf p}}\left\{w: p \leftarrow not\ not\ p\right\}$. 

\begin{align}
\nonumber P_\L(I) &= \lim_{\alpha\to\infty}\frac{W_\L(I)}{\sum_{J\in PW}W_\L(J)}\\
\nonumber &= \lim_{\alpha\to\infty}\frac{exp(\sum_{w:F\in \L_I}w)}{\sum_{J\in PW}exp(\sum_{w:F\in \L_J}w)}.
\end{align}

Multiplying the weight of every interpretation by $exp(|\grdtms(\sigma)|\cdot w)$, we have

\begin{align}
\nonumber P_\L(I) &= \lim_{\alpha\to\infty}\frac{exp(|\grdtms(\sigma)|\cdot w)\cdot exp(\sum_{w:F\in \L_I}w)}{\sum_{J\in PW}exp(|\grdtms(\sigma)|\cdot w)\cdot exp(\sum_{w:F\in \L_J}w)}\\
\nonumber &= \lim_{\alpha\to\infty}\frac{exp(\sum_{w:F\in \L_I\cup {\bf Choice}(\grdtms(\sigma))}w)}{\sum_{J\in PW}exp(\sum_{w:F\in \L_J\cup {\bf Choice}(\grdtms(\sigma))}w)}.
\end{align}

Clearly ${\bf Choice}(\grdtms(\sigma))$ is a set of tautologies, and it can be seen from the construction of $\Pi_\L$ that $(\Pi_\L)_K=\L_K \cup {\bf Choice}(\grdtms(\sigma))$ for any interpretation $K$. So

\begin{align}
\nonumber P_\L(I) &= \lim_{\alpha\to\infty}\frac{exp(\sum_{w:F\in(\Pi_\L)_I}w)}{\sum_{J\in PW}exp(\sum_{w:F\in (\Pi_\L)_J}w)}.
\end{align}

By Theorem 2 in \cite{fer09}, for any interpretation $K$, the stable models of $\overline{(\Pi_\L)_K}$ are exactly the models of $\overline{\L_K}$. Since $K$ itself is a model of $\overline{\L_K}$, $K$ is a stable model of $\o{(\Pi_\L)_K}$. So 

\begin{align}
\nonumber P_\L(I) &= \lim_{\alpha\to\infty}\frac{W_{\Pi_\L}(I)}{\sum_{J\in SM\left[\Pi\right]}W_{\Pi_\L}(J)}\\
\nonumber &= P_{\Pi_\L}(I).
\end{align}
\qed
\end{proof}

\bigskip

We prove a more general version of Theorem \ref{thm:lpmln-lf} here, which is Theorem 4 in \cite{lee15aprobabilistic}. 

For a (deterministic) logic program $\Pi$, we use $LF_{\Pi}$ to denote the set $\left\{LF_{\Pi}(L) \mid \text{$L$ is a loop of $\Pi$}\right\}$.

\begin{lemma}\label{lem: lf_subset}
For any $\lpmln$ program $\Pi$ and any interpretation $I$ of the underlying signature $\sigma$, $I\vDash LF_{\overline{\Pi}}$ if and only if $I\vDash LF_{\overline{\Pi_I}}$.
\BOCC
For any $\lpmln$ program $\Pi$ and any interpretation $I$ of the underlying signature, $I\vDash LF_{\overline{\Pi}}$ if and only if $I$ is a stable model of $\overline{\Pi_I}$.
\EOCC
\end{lemma}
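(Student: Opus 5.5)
The plan is to compare the loop formulas of $\overline{\Pi}$ and of $\overline{\Pi_I}$ loop by loop, and show that each loop formula takes the same truth value in $I$ for both programs. Recall that for a loop $L$, $LF_{\Pi}(L)$ is $L^{\wedge}\rightarrow ES_{\Pi}(L)$. Here $ES_{\Pi}(L)$ is the disjunction of the formulas
\[
\i{Body}\wedge\bigwedge_{A'\in\{A_1,\dots,A_k\}\setminus L}\neg A'
\]
over all rules $A_1\lor\dots\lor A_k\ar\i{Body}$ whose head meets $L$ and whose positive body is disjoint from $L$.

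First, the loops of the two programs need not coincide, because the positive dependency graph of $\overline{\Pi_I}$ is a subgraph of that of $\overline{\Pi}$. To avoid this mismatch I will quantify over all nonempty sets of atoms $Y$ rather than over loops. Theorem~\ref{thm:lf_subset} says that, for an interpretation satisfying the program, the loop-formula conditions (b) and (c) are equivalent. I will use the "every nonempty set $Y$" form of the condition for both programs. If that theorem cannot be used directly (it presupposes that $I$ satisfies the program), I will instead argue by hand that loop formulas of non-loops are implied by those of loops, which is the standard argument underlying that theorem.

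The key step, for a fixed $Y$ with $I\models Y^{\wedge}$, is to show $I\models ES_{\overline{\Pi}}(Y)$ iff $I\models ES_{\overline{\Pi_I}}(Y)$. The direction from $\overline{\Pi_I}$ to $\overline{\Pi}$ is immediate, since every disjunct of the former is a disjunct of the latter. For the converse, suppose $I$ satisfies a disjunct contributed by a rule $R\in\overline{\Pi}$. Then $I\models\i{Body}(R)$, and $I$ makes false every head atom outside $Y$. I must show $R\in\overline{\Pi_I}$, that is, $I\models R$. Since the head of $R$ meets $Y$ and $I\models Y^{\wedge}$, $I$ satisfies some head atom of $R$, so $I\models R$. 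Hence every disjunct of $ES_{\overline{\Pi}}(Y)$ that is true in $I$ already occurs in $ES_{\overline{\Pi_I}}(Y)$. When $I\not\models Y^{\wedge}$, both loop formulas are trivially true in $I$.

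The main obstacle is the mismatch between loops and arbitrary sets described above. I expect to resolve it by using the equivalence of (b) and (c) separately for each program, applied to $I$ as appropriate. Where $I$ fails to satisfy $\overline{\Pi}$, I will either restate the claim in terms of arbitrary sets $Y$ or check that a loop of $\overline{\Pi}$ whose formula is violated contains a loop of $\overline{\Pi_I}$ (restricting to a strongly connected component of the subgraph) whose formula is also violated.
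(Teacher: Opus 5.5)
Your proposal is correct and follows the paper's proof essentially step for step: you pass from loops to arbitrary nonempty sets via Theorem~\ref{thm:lf_subset}, observe that when $I\models Y^{\wedge}$ every rule contributing a disjunct to $ES_{\overline{\Pi}}(Y)$ has a head atom true in $I$ and so lies in $\overline{\Pi_I}$, and use inclusion of disjuncts for the other direction. The hypothesis issue you flag is one the paper passes over silently, and your by-hand fallback resolves it. The standard argument for (c)$\Rightarrow$(b), which takes a terminal strongly connected component of the positive dependency graph restricted to $Y$, never uses that $I$ satisfies the program; moreover, for ($\Rightarrow$) you do not need it at all, since every loop of $\overline{\Pi_I}$ is already a loop of $\overline{\Pi}$.
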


\begin{proof}
($\Rightarrow$) Suppose $I\vDash LF_{\overline{\Pi}}$. Consider any subset $K$ of $\sigma$. There are two possible cases:

\begin{itemize}
\item $I\nvDash K^{\wedge}$. In this case, $K^{\wedge}\rightarrow ES_{\o{\Pi_I}}(K)$ is trivially satisfied by $I$.
\item $I\vDash K^{\wedge}$. Since $I\vDash LF_{\overline{\Pi}}$, by Theorem \ref{thm:lf_subset}, we have
\[
K^{\wedge}\rightarrow\bigvee_{\substack{A\cap K\neq \emptyset\\ A\leftarrow B\wedge N\in \overline{\Pi}\\ B\cap K=\emptyset}}(B\wedge N\wedge\bigwedge_{b\in A\setminus K}\neg b)
\]
is satisfied by $I$. Consider the rules which contribute to the external support for $K$ in $\o{\Pi}$, i.e., $A\leftarrow B\wedge N\in \o{\Pi}$ such that $A\cap K\neq \emptyset$ and $B\cap K=\emptyset$. Since $A\cap K \neq \emptyset$ and $I \vDash K^{\wedge}$, we get $I\vDash A^{\vee}$. So all these rules are satisfied by $I$ and thus they all belong to $\o{\Pi_I}$, which means
\[
\bigvee_{\substack{A\cap K\neq \emptyset\\ A\leftarrow B\wedge N\in \overline{\Pi}\\ B\cap K=\emptyset}}(B\wedge N\wedge\bigwedge_{b\in A\setminus K}\neg b)=\bigvee_{\substack{A\cap K\neq \emptyset\\ A\leftarrow B\wedge N\in \overline{\Pi_I}\\ B\cap K=\emptyset}}(B\wedge N\wedge\bigwedge_{b\in A\setminus K}\neg b).
\]
So
\[
K^{\wedge}\rightarrow\bigvee_{\substack{A\cap K\neq \emptyset\\ A\leftarrow B\wedge N\in \overline{\Pi_I}\\ B\cap K=\emptyset}}(B\wedge N\wedge\bigwedge_{b\in A\setminus K}\neg b)
\]
i.e., 
\[
K^{\wedge}\rightarrow ES_{\o{\Pi_I}}(K)
\]
is satisfied by $I$.

\end{itemize}
In conclusion, $I$ satisfies $K^{\wedge}\rightarrow ES_{\o{\Pi_I}}(K)$ for all subsets $K$ of $\sigma$. By Theorem \ref{thm:lf_subset}, $I\vDash LF_{\overline{\Pi_I}}$. 


\BOCC
Suppose, to the contrary, that $I$ is not a stable model of $\overline{\Pi_I}$. According to \cite{fer06}, $I$ is a stable model of $\overline{\Pi_I}$ if and only if $I\vDash \overline{\Pi_I}\cup LF_{\overline{\Pi_I}}$. By the definition of $\overline{\Pi_I}$, we have $I\vDash \overline{\Pi_I}$. It follows that $I\nvDash LF_{\overline{\Pi_I}}$. Since $I\nvDash LF_{\overline{\Pi_I}}$, there is one subset $L$ of the underlying signature such that $I \vDash L^{\wedge}$ and 

\[
ES_{\overline{\Pi_I}}(L)^I=(\bigvee_{\substack{A\cap L\neq \emptyset\\ A\leftarrow B\wedge N\in \overline{\Pi_I}\\ B\cap L=\emptyset}}(B\wedge N\wedge\bigwedge_{b\in A\setminus L}\neg b))^I=false.
\]

Since $I\vDash LF_{\overline{\Pi}}$, we have

\[
ES_{\overline{\Pi}}(L)^I=(\bigvee_{\substack{A\cap L\neq \emptyset\\ A\leftarrow B\wedge N\in \overline{\Pi}\\ B\cap L=\emptyset}}(B\wedge N\wedge\bigwedge_{b\in A\setminus L}\neg b))^I=true.
\]
Since $\Pi_I\subseteq \Pi$, we have that
\[
\begin{split}
\left\{B\wedge N\wedge \bigwedge_{b\in A\setminus L}\neg b  \mid  A\cap L \neq\emptyset, A\leftarrow B\wedge N\in \overline{\Pi_I}, B\cap L=\emptyset\right\}\\
\subseteq \left\{B\wedge N \wedge \bigwedge_{b\in A\setminus L}\neg b  \mid  A\cap L \neq\emptyset, A\leftarrow B\wedge N\in \overline{\Pi}, B\cap L=\emptyset\right\},
\end{split}
\]
and thus $I\vDash B\wedge N\wedge \bigwedge_{b\in A\cap L}\neg b$ for some
\[
\begin{split}
(B\wedge N\wedge \bigwedge_{b\in A\cap L}\neg b)\in \left\{(B\wedge N \wedge \bigwedge_{b\in A\cap L} \neg b) \mid  A\cap L\neq \emptyset, A\leftarrow B\wedge N\in \overline{\Pi}, B\cap L=\emptyset\right\}\setminus \\
\left\{(B\wedge N \wedge \bigwedge_{b\in A\cap L}\neg b)  \mid  A\cap L\neq \emptyset, A\leftarrow B\wedge N\in \overline{\Pi_I}, B\cap L=\emptyset\right\}.
\end{split}
\]
Since there is some $a\in L$ such that $a\in A$ and $I\vDash L^{\wedge}$, $I$ satisfies $A$; Since $I\vDash B\wedge N\wedge \bigwedge_{b\in A:b \neq a}\neg b$, $I$ satisfies $B\wedge N$. Therefore, $I\vDash B\wedge N\rightarrow A$, which is a contradiction to $B\wedge N\rightarrow A\notin \Pi_I$. So the assumption that $I$ is not a stable model of $\overline{\Pi_I}$ cannot be true. Therefore, $I$ is a stable model of $\overline{\Pi_I}$.
\EOCC

($\Leftarrow$) (The reasoning is similar to the proof of Proposition \ref{prop:sm_subset}) Suppose $I$ satisfies $ LF_{\overline{\Pi_I}}$. For all subsets $L$ of $\sigma$, since $I\vDash LF_{\overline{\Pi_I}}$, by Theorem \ref{thm:lf_subset}, $I\vDash L^{\wedge} \rightarrow ES_{\overline{\Pi_I}}(L)$. Since $\Pi_I\subseteq \Pi$, it can be seen that the disjunctive terms in $ES_{\overline{\Pi_I}}(L)$ is a subset of the disjunctive terms in $ES_{\overline{\Pi}}(L)$, and thus $ES_{\overline{\Pi_I}}(L)$ entails $ES_{\overline{\Pi}}(L)$. So $I\vDash L^{\wedge} \rightarrow ES_{\overline{\Pi}}(L)$. So $I\vDash LF_{\overline{\Pi}}$.
\qed
\end{proof}
\BOCC
\noindent{\bf Theorem~\ref{thm:lpmln-lf} \optional{thm:lpmln-lf}}\
\ 
{\sl
$\Pi$ (under the $\lpmln$ semantics) and $\i{Comp}(\Pi)$ (under the MLN semantics) have the same probability distribution over
all interpretations.
}
\vspace{3 mm}
\EOCC

\begin{lemma}\label{lem:mln-hard-constraint}
Let $\L$ be an MLN, and let 
$\L^{hard}$ be the hard formulas in $\L$. Let $\o{\L^{hard}}$ be the set of formulas obtained from $\L^{hard}$ by dropping all weights. When $\o{\L^{hard}}$ is
satisfiable, 
\begin{itemize}
\item if $I$ satisfies $\o{\L^{hard}}$, 
\[ \i{P}_\L(I) = \frac{exp(\sum_{w:F\in\L_I\setminus \L^{hard}} w)}
     {\sum_{J\in PW: J\models \o{\L^{hard}}}exp(\sum_{w:F\ \in\ \L_J\setminus \L^{hard}} w)}
\]
\item otherwise, $\i{P}_\L(I)= 0$.\footnote{This proposition does not hold when $\o{\L^{hard}}$ is not satisfiable. For example, consider $\L=\left\{\alpha:p, \alpha:\leftarrow p\right\}$ and $I=\left\{p\right\}$. $I\nvDash \o{\L^{hard}}$ but $P_\P(I)=\frac{exp(\alpha)}{exp(\alpha)+exp(\alpha)}=0.5$.}
\end{itemize}
\end{lemma}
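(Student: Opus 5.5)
The argument mirrors the proof of Proposition~\ref{prop:soft}, with all interpretations in $PW$ taking the place of $\sm[\Pi]$. Fix $I$ and write $h=|\L^{hard}|$. For any $J$, split the weight of $J$ into a hard part and a soft part:
\[
W_\L(J)=exp(|\L^{hard}\cap\L_J|\cdot\alpha)\cdot exp\Big(\sum_{w:F\in\L_J\setminus\L^{hard}}w\Big).
\]
Then split the denominator $\sum_{J\in PW}W_\L(J)$ into two groups. The first group is the interpretations $J$ with $J\models\o{\L^{hard}}$; for these the hard part is exactly $exp(h\alpha)$. The second group is the remaining $J$; for these $|\L^{hard}\cap\L_J|\le h-1$. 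The soft parts do not depend on $\alpha$, and $PW$ is finite, so these finitely many terms can be compared uniformly.

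\emph{Case 1: $I\models\o{\L^{hard}}$.} Divide numerator and denominator by $exp(h\alpha)$. The numerator becomes the soft part of $I$. The first group of the denominator becomes $\sum_{J\models\o{\L^{hard}}}exp(\sum_{w:F\in\L_J\setminus\L^{hard}}w)$. Each term of the second group becomes at most $exp(-\alpha)$ times a constant, so it tends to $0$ as $\alpha\to\infty$. The first group is a fixed positive number, because $\o{\L^{hard}}$ is satisfiable and so the group is nonempty. Hence the limit of the quotient is the quotient of the limits, which is exactly the claimed formula.

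\emph{Case 2: $I\not\models\o{\L^{hard}}$.} Fix some $K\models\o{\L^{hard}}$, which exists by satisfiability. Bound the denominator from below by the single term $W_\L(K)=exp(h\alpha)\cdot c_K$ with $c_K>0$. The numerator is at most $exp((h-1)\alpha)\cdot c_I$. So $P_\L(I)\le \lim_{\alpha\to\infty} e^{-\alpha}c_I/c_K=0$, and since probabilities are nonnegative, $P_\L(I)=0$.

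The only delicate point is justifying the exchange of limit and quotient in Case 1. This is where satisfiability of $\o{\L^{hard}}$ is essential: it guarantees the surviving part of the denominator is strictly positive. The footnoted counterexample shows the statement fails without this assumption.
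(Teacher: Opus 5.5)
Your proposal is correct and follows the paper's proof essentially step for step: you split the denominator by whether $J$ satisfies the hard formulas, divide by $e^{h\alpha}$ in the first case, and bound the denominator below by a single satisfying $K$ in the second. One small correction to your closing remark: in Case 1 the first group is nonempty because it contains $I$ itself, so the satisfiability hypothesis is actually needed only in Case 2 to supply $K$, and Case 2 is exactly where the footnoted counterexample fails.
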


\vspace{3 mm}

\begin{proof}

For any interpretation $I$, by definition, we have

\begin{align}
\nonumber P_\L(I) &= \lim_{\alpha\to\infty}\frac{W_\L(I)}{\sum_{J\in PW}W_\L(J)}\\
\nonumber &= \lim_{\alpha\to\infty}\frac{W_\L(I)}{\sum_{J\in PW}exp(\sum_{w:F\in \L_J}w)}.
\end{align}

\begin{itemize}
\item Suppose $I$ satisfies $\o{\L^{hard}}$. We have

\begin{align}
\nonumber P_\L(I) =& \lim_{\alpha\to\infty}\frac{exp(\sum_{w:F\in \L_I}w)}{\sum_{J\in PW}exp(\sum_{w:F\in \L_J}w)}.
\end{align}
Splitting the denominator into two parts: those $J$ that satisfies $\o{\L^{hard}}$ and those that do not, and extracting the weight of formulas in $\L^{hard}$, we have
{\tiny
\begin{align}
\nonumber P_\L(I)=&\lim_{\alpha\to\infty}\frac{exp(|\L^{hard}|\cdot\alpha)\cdot exp(\sum_{w:F\in \L_I\setminus \L^{hard}}w)}{exp(|\L^{hard}|\cdot\alpha)\cdot\sum_{J \vDash \o{\L^{hard}}}exp(\sum_{w:F\in \L_J\setminus \L^{hard}}w) + \sum_{J \nvDash \o{\L^{hard}}}exp(|\L^{hard}\cap \L_J|\cdot\alpha)\cdot exp(\sum_{w:F\in \L_J\setminus \L^{hard}}w)}.
\end{align}
}
We divide both the numerator and the denominator by $exp(|\L^{hard}|\cdot\alpha)$.
\begin{align}
\nonumber P_\L(I) &= \lim_{\alpha\to\infty}\frac{exp(\sum_{w:F\in \L_I\setminus \L^{hard}}w)}{\sum_{J \vDash \o{\L^{hard}}}exp(\sum_{w:F\in \L_J\setminus \L^{hard}}w) + \frac{\sum_{J \nvDash \o{\L^{hard}}}exp(|\L^{hard}\cap \L_J|\cdot\alpha)\cdot exp(\sum_{w:F\in \L_J\setminus \L^{hard}}w)}{exp(|\L^{hard}|\cdot\alpha)}}\\
\nonumber &= \lim_{\alpha\to\infty}\frac{exp(\sum_{w:F\in \L_I\setminus \L^{hard}}w)}{\sum_{J \vDash \o{\L^{hard}}}exp(\sum_{w:F\in \L_J\setminus \L^{hard}}w) + \sum_{J \nvDash \o{\L^{hard}}}\frac{exp(|\L^{hard}\cap \L_J|\cdot\alpha)}{exp(|\L^{hard}|\cdot\alpha)}\cdot exp(\sum_{w:F\in \L_J\setminus \L^{hard}}w)}.
\end{align}

For $J\nvDash \o{\L^{hard}}$, we have $|\L^{hard}\cap \L_J|\leq |\L^{hard}|-1$, so

\begin{align}
\nonumber P_\L(I) &= \frac{exp(\sum_{w:F\in \L_I\setminus \L^{hard}}w)}{\sum_{J \vDash \o{\L^{hard}}}exp(\sum_{w:F\in \L_J\setminus \L^{hard}}w)}.
\end{align}

\item Suppose $I$ does not satisfy $\o{\L^{hard}}$. Since $\o{\L^{hard}}$ is satisfiable, there is at least one interpretation that satisfies $\o{\L^{hard}}$. Let $K$ denote any such interpretation. We have

\begin{align}
\nonumber P_\L(I) &= \lim_{\alpha\to\infty}\frac{exp(\sum_{w:F\in \L_I}w)}{\sum_{J\in PW}exp(\sum_{w:F\in \L_J}w)}.
\end{align}
Splitting the denominator into $K$ and the other interpretations, we have
\begin{align}
\nonumber P_\L(I)&= \lim_{\alpha\to\infty}\frac{exp(\sum_{w:F\in \L_I}w)}{exp(\sum_{w:F\in \L_K}w)+\sum_{J\neq K}exp(\sum_{w:F\in \L_J}w)}.
\end{align}
Extracting the weight from formulas in $\L^{hard}$, we have
\begin{align}
\nonumber P_\L(I)&= \lim_{\alpha\to\infty}\frac{exp(|\L^{hard}\cap \L_I|\cdot\alpha)\cdot exp(\sum_{w:F\in \L_I\setminus \L^{hard}}w)}{exp(|\L^{hard}|\cdot\alpha)\cdot exp(\sum_{w:F\in \L_K\setminus \L^{hard}}w)+\sum_{J\neq K}exp(\sum_{w:F\in \L_J}w)}\\
\nonumber &\leq \lim_{\alpha\to\infty}\frac{exp(|\L^{hard}\cap \L_I|\cdot\alpha)\cdot exp(\sum_{w:F\in \L_I\setminus \L^{hard}}w)}{exp(|\L^{hard}|\cdot\alpha)\cdot exp(\sum_{w:F\in \L_K\setminus \L^{hard}}w)}.
\end{align}

Since $I$ does not satisfy $\o{\L^{hard}}$, $|\L^{hard}\cap \L_I| \leq |\L^{hard}|-1$, and thus

\begin{align}
\nonumber P_\L(I) &\leq \lim_{\alpha\to\infty}\frac{exp(|\L^{hard}\cap \L_I|\cdot\alpha)\cdot exp(\sum_{w:F\in \L_I\setminus \L^{hard}}w)}{exp(|\L^{hard}|\cdot\alpha)\cdot exp(\sum_{w:F\in \L_K\setminus \L^{hard}}w)} = 0.
\end{align}

\end{itemize}
\qed
\end{proof}

For any $\lpmln$ program $\Pi$, define MLN program $\L_{\Pi}$ to be the union of $\Pi$ and $\left\{\alpha: LF_{\overline{\Pi}}(L)\mid \text{$L$ is a loop of $\overline{\Pi}$}\right\}$.

\bigskip

\BOCC
\noindent{\bf Theorem 4}\
\ 
{\sl
For any $\lpmln$ program $\Pi$ such that
\[
\left\{R \mid \alpha: R\in \Pi\right\}\cup\left\{LF_{\overline{\Pi}}(L)\mid \text{$L$ is a loop of $\overline{\Pi}$}\right\}
\]
is satisfiable, $\Pi$ and $\L_{\Pi}$ have the same probability distribution over all interpretations, and consequently, the stable models of $\Pi$ and the models of $\L_{\Pi}$ coincide.
}
\EOCC

\begin{lemma}\label{lem:sm_lf}
For any $\lpmln$ program $\Pi$ and any interpretation $I$, if $I\in SM^{\prime}\left[\Pi\right]$, then $I\vDash LF_{\overline{\Pi}}$.
\end{lemma}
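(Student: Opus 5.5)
The plan is a short chain: unfold the definition of $SM'[\Pi]$, apply Theorem~\ref{thm:lf_subset} to the deterministic program $\overline{\Pi_I}$, and then transfer the loop formulas from $\overline{\Pi_I}$ to $\overline{\Pi}$ using Lemma~\ref{lem: lf_subset}.

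\textbf{Step 1.} Suppose $I\in SM'[\Pi]$. By definition, $I$ is a stable model of $\overline{\Pi_I}$ and $I$ satisfies $\overline{\Pi^{hard}}$. Only the first conjunct is needed. A stable model of a program satisfies that program (and in any case $I\models\overline{\Pi_I}$ by construction of $\Pi_I$). So the standing hypothesis of Theorem~\ref{thm:lf_subset} holds with $\overline{\Pi_I}$ in place of $\Pi$.

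\textbf{Step 2.} Direction (a)$\Rightarrow$(c) of Theorem~\ref{thm:lf_subset} then gives that $I$ satisfies $LF_{\overline{\Pi_I}}(L)$ for every loop $L$ of $\overline{\Pi_I}$. We need slightly more. Lemma~\ref{lem: lf_subset} is phrased with $LF_{\overline{\Pi}}$ taken over loops of $\overline{\Pi}$, and its proof works with arbitrary nonempty subsets. So I will instead use (a)$\Rightarrow$(b): for every nonempty set $Y$ of atoms, $I\models Y^\wedge\rightarrow ES_{\overline{\Pi_I}}(Y)$. By (b)$\Leftrightarrow$(c), this is the same as $I\models LF_{\overline{\Pi_I}}$.

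\textbf{Step 3.} Apply the ($\Leftarrow$) direction of Lemma~\ref{lem: lf_subset} to obtain $I\models LF_{\overline{\Pi}}$. That direction rests on the monotonicity argument already used for Proposition~\ref{prop:sm_subset}. Since $\Pi_I\subseteq\Pi$, every disjunct of $ES_{\overline{\Pi_I}}(L)$ is also a disjunct of $ES_{\overline{\Pi}}(L)$. Hence $ES_{\overline{\Pi_I}}(L)$ entails $ES_{\overline{\Pi}}(L)$, and so $I\models L^\wedge\rightarrow ES_{\overline{\Pi}}(L)$ for every loop $L$ of $\overline{\Pi}$.

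\textbf{Main obstacle.} The only delicate point is the mismatch between loops of $\overline{\Pi_I}$ and loops of $\overline{\Pi}$. A loop of $\overline{\Pi}$ need not be a loop of the smaller program $\overline{\Pi_I}$. Going through condition (b) of Theorem~\ref{thm:lf_subset}, which quantifies over all nonempty sets of atoms, removes this issue: the implication is available for every such $L$, loop of $\overline{\Pi}$ or not. With that in place, the remaining steps are direct citations.
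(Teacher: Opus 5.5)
Your proposal is correct and follows essentially the same route as the paper. From $I\in SM'[\Pi]$ you get that $I$ is a stable model of $\overline{\Pi_I}$, hence $I\models LF_{\overline{\Pi_I}}$ by Theorem~\ref{thm:lf_subset}, and the ($\Leftarrow$) direction of Lemma~\ref{lem: lf_subset} then gives $I\models LF_{\overline{\Pi}}$. Your detour through condition (b) and the re-derived external-support monotonicity only unpack what the paper's proof of Lemma~\ref{lem: lf_subset} already does, so it is harmless but not needed.
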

\begin{proof}
Suppose $I\in SM^{\prime}\left[\Pi\right]$, then $I\smmodels \o{\Pi^{hard}}\cup\o{(\Pi^{soft})_I}$, which implies $I\smmodels \o{\Pi_{I}}$, and further implies $I \vDash LF_{\o{\Pi_{I}}}$. By Lemma \ref{lem: lf_subset}, $I\vDash LF_{\overline{\Pi}}$.
\qed
\end{proof}

\noindent{\bf Theorem 4}\
\ 
{\sl
For any $\lpmln$ program $\Pi$ such that $SM^{\prime}\left[\Pi\right]$ is not empty, $\Pi$ and $\L_{\Pi}$ have the same probability distribution over all interpretations, and consequently, the stable models of $\Pi$ and the models of $\L_{\Pi}$ coincide.
}
\vspace{3 mm}

\begin{proof}
We will show that $P_{\Pi}(I) = P_{\L_{\Pi}}(I)$ for all interpretations $I$. Since $SM^{\prime}\left[\Pi\right]$ is not empty, by Lemma \ref{lem:sm_lf}, there exists at least one interpretation $J$ such that $J\vDash LF_{\overline{\Pi}}$.


\begin{itemize}
\item Suppose $I$ is a stable model of $\overline{\Pi_I}$. By definition,

\begin{align}
\nonumber P_{\L_{\Pi}}(I) &= \lim_{\alpha\to\infty}\frac{exp(\sum_{r_i\in \overline{(\L_{\Pi})_I}}w_i)}{\sum_{J\in PW}exp(\sum_{r_i\in \overline{(\L_{\Pi})_J}}w_i)}.
\end{align}
Splitting the denominator into interpretations that satisfy $LF_{\overline{\Pi}}$ and those that do not, we get
\begin{align}
\nonumber P_{\L_{\Pi}}(I) &= \lim_{\alpha\to\infty}\frac{exp(\sum_{r_i\in \overline{(\L_{\Pi})_I}}w_i)}{\sum_{J\vDash LF_{\overline{\Pi}}}exp(\sum_{r_i\in \overline{(\L_{\Pi})_J}}w_i) + \sum_{J\nvDash LF_{\overline{\Pi}}}exp(\sum_{r_i\in \overline{(\L_{\Pi})_J}}w_i)}.
\end{align}
Extracting the weights from the formulas in $LF_{\overline{\Pi}}$, we get
{\tiny\begin{align} 
\nonumber P_{\L_{\Pi}}(I) &= \lim_{\alpha\to\infty}\frac{exp(|LF_{\overline{\Pi}}|\cdot\alpha)\cdot exp(\sum_{r_i\in \overline{(\L_{\Pi})_I}\setminus LF_{\overline{\Pi}}}w_i)}{\sum_{J\vDash LF_{\overline{\Pi}}}exp(|LF_{\o{\Pi}}|\cdot\alpha)\cdot exp(\sum_{r_i\in \o{(\L_\Pi)_J}\setminus LF_{\overline{\Pi}}}w_i) + \sum_{J\nvDash LF_{\overline{\Pi}}}exp(|\o{(\L_{\Pi})_J}\cap LF_{\overline{\Pi}}|\cdot\alpha)\cdot exp(\sum_{r_i\in \overline{(\L_{\Pi})_J}\setminus LF_{\overline{\Pi}}}w_i)}.
\end{align}}
Dividing both the numerator and the denominator by $exp(|LF_{\Pi}|\cdot\alpha)$, we have
{\tiny\begin{align}
\nonumber P_{\L_{\Pi}}(I) &= \lim_{\alpha\to\infty}\frac{exp(\sum_{r_i\in \overline{(\L_{\Pi})_I}\setminus LF_{\overline{\Pi}}}w_i)}{\sum_{J\vDash LF_{\overline{\Pi}}}exp(\sum_{r_i\in \o{(\L_{\Pi})_J}\setminus LF_{\overline{\Pi}}}w_i) + \sum_{J\nvDash LF_{\overline{\Pi}}}\frac{exp(|\o{(\L_{\Pi})_J}\cap LF_{\overline{\Pi}}|\cdot\alpha)}{exp(|LF_{\overline{\Pi}}|\cdot\alpha)}\cdot exp(\sum_{r_i\in \o{(\L_{\Pi})_J}\setminus LF_{\overline{\Pi}}}w_i)}.
\end{align}}
For those $J$ that do not satisfy $LF_{\overline{\Pi}}$, $|\o{(\L_\Pi)_J}\cap LF_{\overline{\Pi}}|\leq |LF_{\overline{\Pi}}| - 1$. So $\lim_{\alpha\to\infty}\frac{exp(|\overline{(\L_{\Pi})_J}\cap LF_{\overline{\Pi}}|\cdot\alpha)}{exp(|LF_{\overline{\Pi}}|\cdot\alpha)}=0$. Consequently
\begin{align}
\nonumber P_{\L_{\Pi}}(I) &= \frac{exp(\sum_{r_i\in \overline{(\L_\Pi)_I}\setminus LF_{\overline{\Pi}}}w_i)}{\sum_{J\vDash LF_{\overline{\Pi}}}exp(\sum_{r_i\in \o{(\L_{Pi})_J}\setminus LF_{\overline{\Pi}}}w_i)}.
\end{align}
From the construction of $\L_{\Pi}$ it can be easily seen that $\overline{(\L_{\Pi})_K}\setminus LF_{\overline{\Pi}} = \overline{\Pi_K}$ for all interpretations $K$. So
\begin{align}
\nonumber P_{\L_{\Pi}}(I) &= \frac{exp(\sum_{r_i\in \overline{\Pi_I}}w_i)}{\sum_{J\vDash LF_{\overline{\Pi}}}exp(\sum_{r_i\in \overline{\Pi_J}}w_i)}.
\end{align}
By Lemma \ref{lem: lf_subset}, for any $J\vDash LF_{\overline{\Pi}}$, we have $J\vDash LF_{\overline{\Pi_J}}$ and thus $J$ is a stable model of $\overline{\Pi_J}$. So
\begin{align}
\nonumber P_{\L_{\Pi}}(I) &= \frac{exp(\sum_{r_i\in \overline{\Pi_I}}w_i)}{\sum_{J\vDash_{SM} \overline{\Pi_J}}exp(\sum_{r_i\in \overline{\Pi_J}}w_i)}\\
\nonumber &= \frac{W_{\Pi}(I)}{\sum_{J\in SM\left[\Pi\right]}W_{\Pi}(J)}\\
\nonumber &= P_{\Pi}(I).
\end{align}

\item Suppose $I$ is not a stable model of $\overline{\Pi_I}$. Then $P_{\Pi}(I)=0$. On the other hand, since $I\vDash \overline{\Pi_I}$ by definition, it must be the case that $I\nvDash LF_{\overline{\Pi_I}}$. By Lemma \ref{lem: lf_subset}, $I \nvDash LF_{\overline{\Pi}}$. So there is at least one subset $L$ of $\sigma$ such that $I \nvDash LF_{\overline{\Pi}}(L)$. Clearly $\alpha: LF_{\overline{\Pi}}(L) \in \L_{\Pi}$ and $LF_{\overline{\Pi}}(L)\in \overline{(\L_{\Pi})^{hard}}$. So $I\nvDash \overline{(\L_\Pi)^{hard}}$. From the construction of $\L_{\Pi}$ we can see that $\overline{(\L_{\Pi})^{hard}}=\overline{\Pi^{hard}}\cup LF_{\overline{\Pi}}$. 
Since $SM^\prime\left[\Pi\right]$ is not empty, there is at least one interpretation $J$ such that $J\smmodels \o{\Pi^{hard}}\cup \o{(\Pi^{soft})_J}$. This interpretation $J$ satisfies $LF_{\o{\Pi^{hard}\cup (\Pi^{soft})_J}}$. By Lemma \ref{lem:sm_lf}, $J$ satisfies $LF_{\o{\Pi}}$. So $J$ satisfies $\overline{\Pi^{hard}}\cup LF_{\overline{\Pi}}$ and thus $(\L_{\Pi})^{hard}$ is satisfiable. By Lemma \ref{lem:mln-hard-constraint}, $P_{\L_{\Pi}}(I)=0$.
\end{itemize}
\qed
\end{proof}

The above theorem is a more general version of Theorem \ref{thm:lpmln-lf} because, for any tight program $\Pi$, $Comp(\Pi)$ coincide with $\L_{\Pi}$. This result can be found in \cite{fer09}.

\section{Proof of Theorem \ref{thm:lpmln-problog}}

In this section and the next section, we write $\sum_{x}f(x)$, where $f$ is some function over a Boolean variable, as a shorthand of $\sum_{x\in\left\{true, false\right\}}f(x)$, and write $\sum_{x_1, \dots, x_m}f(x_1, \dots, x_m)$ as a shorthand of $\sum_{x_1}\sum_{x_2}\dots\sum_{x_m}f(x_1, \dots, x_m)$.

Given a ProbLog program $\P$, let $PA_\P$ denote the set of all probabilistic atoms in $\P$. We say a subset $TC$ of $PA_\P$ is the total choice of an interpretation $I$ if for all $p\in TC$, $I\vDash p$ and for all $q\in PA_\P\setminus TC$, $I\nvDash q$.

\begin{lemma}\label{lem: maginal_one}
For any ProbLog program $\P$,
\begin{equation}\nonumber
\sum_{TC\subseteq PA_\P}Pr_\P(TC) = 1.
\end{equation}
\end{lemma}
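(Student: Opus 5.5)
We prove this by recognizing the sum as the expansion of a product of binomial factors. Let $PA_\P=\{a_1,\dots,a_m\}$ and write $pr(a_i)$ for the probability attached to $a_i$ in $\i{PF}$. The key observation is that a total choice $TC\subseteq PA_\P$ corresponds bijectively to a truth assignment $(x_1,\dots,x_m)\in\{true,false\}^m$, where $x_i=true$ iff $a_i\in TC$. Under this correspondence, the definition of $Pr_\P(TC)$ becomes the product $\prod_{i=1}^m f_i(x_i)$, where $f_i(true)=pr(a_i)$ and $f_i(false)=1-pr(a_i)$.

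With the shorthand introduced just before the lemma, the sum therefore rewrites as
\[
\sum_{TC\subseteq PA_\P}Pr_\P(TC)=\sum_{x_1,\dots,x_m}\prod_{i=1}^m f_i(x_i).
\]

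The plan is to factor this sum by induction on $m$. Since $f_m(x_m)$ depends only on $x_m$, the innermost summation over $x_m$ can be pulled out:
\[
\sum_{x_1,\dots,x_m}\prod_{i=1}^{m} f_i(x_i)=\Big(\sum_{x_1,\dots,x_{m-1}}\prod_{i=1}^{m-1}f_i(x_i)\Big)\cdot\big(pr(a_m)+(1-pr(a_m))\big).
\]
The second factor equals $1$, so the expression reduces to the same sum over $m-1$ atoms, and the induction hypothesis finishes the argument.

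The base case is $m=0$. There the only total choice is $\emptyset$, and its probability is the empty product, which is $1$.

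The only step requiring care is the bijection between subsets and assignments, together with the matching of the product in the definition of $Pr_\P(TC)$ to $\prod_i f_i(x_i)$. Everything else is the distributive law. No earlier result of the paper is needed.
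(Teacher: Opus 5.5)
Your proof is correct and follows essentially the same route as the paper. Both identify total choices with Boolean assignments, write $Pr_\P(TC)$ as a product of per-atom factors, and use the distributive law to factor the sum into $\prod_i\big(pr(a_i)+(1-pr(a_i))\big)=1$; your explicit induction on $m$, with the empty-product base case, just makes rigorous the one-line ``rearranging'' step in the paper.
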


\begin{proof}
Suppose $PA_\P=\left\{a_1, a_2, \dots, a_k\right\}$.
\begin{align}
\nonumber &\sum_{TC\subseteq PA_\P}Pr_\P(TC)\\
\nonumber =&\sum_{TC\subseteq PA_\P}(\prod_{a_i\in TC}p_i\cdot\prod_{a_j\in PA_\P\setminus TC}(1-p_j)).
\end{align}
Let ${\bf p_i}(x)$ where $x\in\left\{true, false\right\}$ be defined as
\[
{\bf p_i}(x)=\begin{cases}\nonumber p_i&if\ x=true\\1-p_i&if\ x=false\end{cases}.
\]
Clearly $\sum_{a}{\bf p_i}(a)=1$ for any $i\in\left\{1, \dots, k\right\}$. $\sum_{TC\subseteq PA_\P}Pr_\P(TC)$ can be rewritten as
\begin{align}
\nonumber &\sum_{TC\subseteq PA_\P}Pr_\P(TC)\\
\nonumber =& \sum_{{\bf a}_1, {\bf a}_2, \dots, {\bf a}_k}{\bf p_i}({\bf a}_i)\cdot\dots\cdot {\bf p_i}({\bf a}_i)
\end{align}
where ${\bf a}_1, {\bf a}_2, \dots, {\bf a}_k$ are Boolean variables representing whether or not $a_i\in TC$, i.e., ${\bf a_i}=true$ if $a_i\in TC$, ${\bf a_i}=false$ otherwise. Rearranging the equation we have
\begin{align}
\nonumber &\sum_{C\subseteq PA_\P}Pr_\P\left[C\right]\\
\nonumber =& \sum_{{\bf a}_1}{\bf p_1}({\bf a}_1)\sum_{{\bf a}_2}{\bf p_2}({\bf a}_2)\cdot\sum_{{\bf a}_k}{\bf p_k}({\bf a}_k)\\
\nonumber =& 1.
\end{align}
\end{proof}

\begin{thm}\label{lem: wfm_sm}
When $\Pi$ has a total well-founded model, then this model is also the single stable model of $\Pi$.
\end{thm}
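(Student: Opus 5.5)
We prove the statement for ground normal programs $\Pi$ (the ProbLog setting: rules $A\ar B_1,\dots,B_m,\no\ B_{m+1},\dots,\no\ B_n$) via the standard alternating-fixpoint view of the well-founded semantics. For an interpretation $J$, let $\Gamma_\Pi(J)$ denote the least Herbrand model of the reduct $\Pi^J$. By the definition in the review section, $I$ is a stable model of $\Pi$ iff $\Gamma_\Pi(I)=I$. The operator $\Gamma_\Pi$ is antimonotone: if $J\subseteq J'$, then $\Pi^{J'}\subseteq\Pi^J$, so $\Gamma_\Pi(J')\subseteq\Gamma_\Pi(J)$. Hence $\Gamma_\Pi^2$ is monotone on the finite lattice of sets of ground atoms. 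By Van Gelder's characterization, the true atoms of the well-founded model form the least fixpoint $T$ of $\Gamma_\Pi^2$, and the non-false atoms form $U=\Gamma_\Pi(T)$, which is the greatest fixpoint of $\Gamma_\Pi^2$. We take this characterization as the definition we work with, citing it rather than reproving it.

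\emph{The well-founded model is stable.} Totality means $T=U$, so $\Gamma_\Pi(T)=U=T$, and $T$ is a stable model of $\Pi$.

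\emph{Uniqueness.} Let $S$ be any stable model. Then $\Gamma_\Pi(S)=S$, so $S$ is also a fixpoint of $\Gamma_\Pi^2$. Since $T$ is the least and $U$ the greatest such fixpoint, $T\subseteq S\subseteq U=T$, giving $S=T$. So the well-founded model is the single stable model.

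\emph{Main obstacle.} The delicate step is justifying that the well-founded model coincides with the pair (lfp $\Gamma_\Pi^2$, gfp $\Gamma_\Pi^2$), in particular that $\Gamma_\Pi(T)$ is the greatest fixpoint. This follows because $\Gamma_\Pi$ maps fixpoints of $\Gamma_\Pi^2$ to fixpoints of $\Gamma_\Pi^2$ and reverses inclusion: for any fixpoint $X$ we have $T\subseteq X$, hence $\Gamma_\Pi(X)\subseteq\Gamma_\Pi(T)$, and since $\Gamma_\Pi(X)$ is itself a fixpoint ranging over all fixpoints as $X$ does, $\Gamma_\Pi(T)$ contains every fixpoint. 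If one prefers the original unfounded-set definition of the well-founded model, we would instead cite the known equivalence of the two definitions (Van Gelder 1993) rather than reprove it here.
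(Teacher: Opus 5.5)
Your proof is correct, but it argues something the paper does not: the paper's entire proof of this statement is ``Proven in'' plus a citation of Van Gelder, Ross and Schlipf (1991). There the result comes from the unfounded-set definition of the well-founded semantics. Roughly, stable models are total fixpoints of the well-founded operator, and the well-founded model is its least fixpoint.

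You instead pass to the alternating-fixpoint characterization, after which the argument is two lines of order theory. Totality gives $\Gamma_\Pi(T)=T$. Any stable $S$ satisfies $T\subseteq S$ by leastness, and $S=\Gamma_\Pi(S)\subseteq\Gamma_\Pi(T)=T$ by antimonotonicity.

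This makes the mechanism transparent, but it relocates the reliance on the literature rather than removing it. The substantive content is now the equivalence between the alternating fixpoint and the well-founded model, which you cite (Van Gelder 1993).

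For the same reason, your ``main obstacle'' paragraph points at the wrong step. That $\Gamma_\Pi(T)$ is the greatest fixpoint of $\Gamma_\Pi^2$ is the easy lattice-theoretic part, and uniqueness needs only $S\subseteq\Gamma_\Pi(T)$. The delicate step is exactly the cited identification of this fixpoint pair with the well-founded model.

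Your restriction to constraint-free normal programs is the right one. With the paper's general bodies containing $\neg\neg$, $\Gamma_\Pi$ is no longer antimonotone. However, the theorem is only applied to $TC\cup\Pi$ for ProbLog programs, which are normal.
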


\begin{proof}
Proven in \cite{van91}.
\end{proof}

\begin{lemma}\label{lem:lpmln-problog}
Let $\P$ be any ProbLog program that does not contain any probabilistic atom for which the probability is $0$ or $1$. $\P$ and its $\lpmln$
representation $\P^\prime$ have the same probability distribution over
all interpretations. 
\end{lemma}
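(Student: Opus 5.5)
Since all probabilities $pr$ lie strictly between $0$ and $1$, every hard rule of $\P'$ comes from $\Pi$. Each probabilistic atom $a$ contributes exactly one of its two soft rules to $(\P'^{\rm soft})_I$: $ln(pr):a$ if $I\models a$, and $ln(1-pr):\ \ar a$ otherwise. Hence for any $I$ with total choice $TC$ we get
\[
exp\Big(\sum_{w:R\in(\P'^{\rm soft})_I} w\Big)=\prod_{a\in TC}pr(a)\prod_{b\in PA_\P\setminus TC}(1-pr(b))=Pr_\P(TC).
\]
The plan is to apply Proposition~\ref{prop:soft}, computing $P_{\P'}$ as $P'_{\P'}$ via $\sm'[\P']$. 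Then I will show that $\sm'[\P']$ is exactly the set of total well-founded models of $\Pi\cup TC$, ranging over all total choices $TC$, and that distinct $TC$ yield distinct such models. The denominator $\sum_{J\in\sm'[\P']}W'_{\P'}(J)$ then equals $\sum_{TC}Pr_\P(TC)=1$ by Lemma~\ref{lem: maginal_one}, so $P'_{\P'}(I)=Pr_\P(TC(I))$ whenever $I\in\sm'[\P']$ and $0$ otherwise. This is precisely $P_\P(I)$.

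The key step is characterizing $\sm'[\P']$. An interpretation $I$ lies in $\sm'[\P']$ iff $I$ is a stable model of $\o{\P'_I}=\Pi\cup\{a\mid a\in TC\}\cup\{\bot\ar b\mid b\in PA_\P\setminus TC\}$, where $TC$ is $I$'s total choice; this uses the observation from the proof of Proposition~\ref{prop:soft} that $\P'_I=\P'^{\rm hard}\cup(\P'^{\rm soft})_I$ once $I\models\Pi$. Constraints do not affect stability beyond being satisfied, and $I$ satisfies them by the choice of $TC$. So the condition reduces to: $I$ is a stable model of $\Pi\cup TC$ and $I\cap PA_\P=TC$.

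Next I handle the other direction. For each $TC$, well-definedness of $\P$ gives a total well-founded model $M_{TC}$ of $\Pi\cup TC$. By Theorem~\ref{lem: wfm_sm}, $M_{TC}$ is the unique stable model of $\Pi\cup TC$. Since probabilistic atoms never occur in rule heads of $\Pi$, minimality forces $M_{TC}\cap PA_\P=TC$. So $M_{TC}\in\sm'[\P']$, distinct total choices give distinct models, and every member of $\sm'[\P']$ is some $M_{TC}$ by the uniqueness just established.

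Finally, because $\sm'[\P']$ contains $M_\emptyset$, it is nonempty, so Proposition~\ref{prop:soft} applies. I expect the main obstacle to be the head-restriction argument: showing a stable model of $\Pi\cup TC$ contains exactly $TC$ among the probabilistic atoms, and justifying that the added constraints $\bot\ar b$ neither destroy nor create stable models. Both follow from the reduct definition. Beyond those, the argument is bookkeeping.
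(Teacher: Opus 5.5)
Your proof is correct and takes essentially the same route as the paper: both reduce $P_{\P'}$ to $P'_{\P'}$ via Proposition~\ref{prop:soft}, identify the members of $\sm'[\P']$ with the unique stable models (equivalently, the total well-founded models) of $\Pi\cup TC$ using Theorem~\ref{lem: wfm_sm}, and collapse the denominator to $1$ with Lemma~\ref{lem: maginal_one}. The differences are minor. You handle the case where $I$ is not a total well-founded model through your exact characterization of $\sm'[\P']$ (adding constraints that $I$ satisfies preserves stability), whereas the paper uses a loop-formula subcase; and you make explicit the head-restriction argument giving $M_{TC}\cap PA_\P=TC$, which the paper only asserts.
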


\begin{proof}
Since $\P$ is a well-defined ProbLog program, for all $TC\subseteq PA_\P$, $TC\cup \Pi$ has one total well-founded model. Let $TC(I)$ denote the total choice of $I$.
\begin{itemize}
\item Suppose $I$ is the total well-founded model of $TC(I)\cup \Pi$. According to the definition,
\begin{align}
\nonumber P_\P(I) &= Pr_\P(TC(I))\\
\nonumber &= \prod_{a_i\in TC(I)}p_i\cdot\prod_{b_j\in PA_\P\setminus TC(I)}(1-p_j).
\end{align}

By Theorem \ref{lem: wfm_sm}, $I$ is also the unique stable model of $TC(I)\cup \Pi$. It can be seen that $I$ is the only stable model of $TC(I)\cup \Pi\cup \left\{\leftarrow p \mid p\notin TC(I)\right\}$, which is $\o{\P^\prime_{I}}$. Clearly $\o{(\P^\prime)^{hard}}=\Pi\subseteq\o{\P^\prime_I}$ and consequently $I\smmodels \o{(\P^\prime)^{hard}}\cup \o{(\P^\prime)^{soft}_I}$. By Proposition \ref{prop:soft},

\begin{align}
\nonumber P_{\P^\prime}(I) &= \frac{exp(\sum_{F_i\in \o{\P^\prime_I}\setminus \o{(\P^\prime)^{hard}}}w_i)}{\sum_{J\in SM^\prime\left[\P^\prime\right]}exp(\sum_{F_i\in \o{\P^\prime_J}\setminus \o{(\P^\prime)^{hard}}}w_i)}\\
\nonumber &= \frac{exp(\sum_{a_i\in PA_\P:I\vDash a_i}ln(p_i)+\sum_{a_i\in PA_\P:I\nvDash a_i}ln(1-p_i))}{\sum_{J\in SM^\prime\left[\P^\prime\right]}exp(\sum_{a_i\in PA_\P:J\vDash a_i}ln(p_i)+\sum_{a_i\in PA_\P:J\nvDash a_i}ln(1-p_i))}\\
\nonumber &= \frac{\prod_{a_i\in TC(I)}p_i\prod_{a_i\notin TC(I)}(1-p_i)}{\sum_{J\in SM^\prime\left[\P^\prime\right]}\prod_{a_i\in TC(J)}p_i\prod_{a_i\notin TC(J)}(1-p_i)}.
\end{align}
Clearly for every $J$ such that $J\in SM^\prime\left[\P^\prime\right]$, there is a total choice $TC(J)$. And since the ProbLog program $\P$ is well-defined, for every total choice $C$ there is a total well-founded model of $C\cup \Pi$. By Theorem \ref{lem: wfm_sm}, this means for every total choice $C$ there is a unique stable model of $C\cup \Pi$. It can be seen that this stable model is also the unique stable model of $C\cup \Pi\cup \left\{\neg p \mid p\notin TC(I)\right\}$. So
\begin{align}
\nonumber P_{\P^\prime}(I) &= \frac{\prod_{a_i\in TC(I)}p_i\prod_{a_i\notin TC(I)}(1-p_i)}{\sum_{TC^{\prime}\subseteq PA_\P}\prod_{a_i\in C}p_i\prod_{a_i\notin C}(1-p_i)}\\
\nonumber &= \frac{\prod_{a_i\in TC(I)}p_i\prod_{a_i\notin TC(I)}(1-p_i)}{\sum_{TC^{\prime}\subseteq PA_\P}Pr_\P(TC^\prime)}.
\end{align}
By Lemma \ref{lem: maginal_one}, the denominator equals $1$, so
\begin{align}
\nonumber P_{\P^\prime}(I) &= \prod_{a_i\in TC(I)}p_i\prod_{a_i\notin TC(I)}(1-p_i)\\
\nonumber &= P_\P(I).
\end{align}

\item Suppose $I$ is not the total well-founded model of $TC(I)\cup \Pi$. Then $P_\P(I) = 0$. Since $\P$ is well-defined. The total well-founded model $J$ of $TC(I)\cup \Pi$ exists and by Theorem \ref{lem: wfm_sm}, $J$ is also the unique stable model of $TC(I)\cup \Pi$. It must be the case that $I\neq J$ and thus $I$ cannot be a stable model of $TC(I)\cup \Pi$. There are following two cases:
\begin{itemize}
\item Suppose $I\nvDash TC(I)\cup \Pi$. Since $TC(I)$ is the total choice of $I$, $I \vDash TC(I)$. It follows that $I \nvDash \Pi$, i.e., there is at least one rule $F\in \Pi$ such that $I\nvDash F$. According to the definition, $\alpha:F\in (\P^\prime)^{hard}$. By Proposition \ref{prop:soft}, $P_{\P^\prime}(I) = 0$.
\item Suppose $I\vDash TC(I)\cup \Pi$ but $I$ is not a stable model of $TC(I)\cup \Pi$. By Theorem \ref{thm:lf_subset}, it follows that there must be at least one loop $L$ of $\o{\Pi}$ such that $I^\vDash L^\wedge$ but $I\nvDash ES_{TC(I)\cup \Pi}(L)$.
It can be seen that
\[
\P^\prime_{I}=TC(I)\cup \Pi\cup \left\{\leftarrow p\mid p\notin TC(I)\right\}.
\]
It can be seen that $ES_{\P^{\prime}_{I}}(L) = ES_{TC(I)\cup \Pi}(L)$.
It follows that $I\nvDash_{SM} \o{\P^\prime_{I}}$. So $W_{\P^\prime}(I)=0$ and thus $P_{\P^\prime}(I)=0$.
\end{itemize}
\end{itemize}
\end{proof}

\bigskip
\noindent{\bf Theorem~\ref{thm:lpmln-problog} \optional{thm:lpmln-problog}}\
\ 
{\sl
Any well-defined ProbLog program $\P$ and its $\lpmln$
representation $\P^\prime$ have the same probability distribution over
all interpretations. 
}
\vspace{3 mm}


\begin{proof}
We first convert $\P=\langle PF, \Pi\rangle$ into a ProbLog program that does not contain any probabilistic atom for which the probability is $0$ or $1$ as follows.
\begin{itemize}
\item For each probabilistic atom $p$ such that $pr(p) = 0$:
\begin{itemize}
\item Remove all the rules in $\Pi$ where $p$ occurs in the body positively (i.e., as the literal $p$);
\item Remove all the literals $\no\ p$ that occurs in $\Pi$.
\end{itemize}
\item For each probabilistic atom $q$ such that $pr(q) = 1$:
\begin{itemize}
\item Remove all the literals $p$ that occurs in $\Pi$;
\item Remove all the rules in $\Pi$ where $p$ occurs in the body negatively (i.e., as the literal $\no\ p$).
\end{itemize}
\end{itemize}

Let $\P^\prime$ denote the program obtained from $\P$ as above. Clearly $\P^\prime$ specifies the same probability distribution as $\P$, if we restrict attention to atoms other than those atoms for which the probability is $0$ or $1$. By Lemma \ref{lem:lpmln-problog}, $\P^\prime$ and its $\lpmln$
representation $\o{\P^\prime}$ have the same probability distribution over
all interpretations. From the construction of $\P^\prime$, it can be seen that $\P^\prime$ specifies the same probability distribution as $\o{\P^\prime}$ if we restrict attention to atoms other than those atoms for which the probability is $0$ or $1$. Also it is clearly that those atoms in $\P$ for which the probability is $0$ or $1$ have exactly the same constant truth values ass these atoms in $\P^\prime$. So $\P$ and its $\lpmln$
representation $\P^\prime$ have the same probability distribution over
all interpretations. 
\end{proof}

\section{Proof of Theorem \ref{thm:pdbprm}}

\BOCC
Given a multi-valued probabilistic program $\langle PF, \Pi\rangle$, we use $\sigma(\langle PF, \Pi\rangle)$ to denote all constants of its signature. We use $\sigma^{pf}(\langle PF, \Pi\rangle)$ to denote the set of all probabilistic constants in $\langle PF, \Pi\rangle$. We identify a multi-valued probabilistic program with its $\lpmln$ translation.
We say an interpretation $I$ is {\em consistent} if
\begin{itemize}
\item For all $c\in \sigma^{pf}(\langle PF, \Pi\rangle)$, $c=v\in I$ for exactly one $v\in Dom(c)$;
\item For all $c\in \sigma(\langle PF, \Pi\rangle)\setminus\sigma^{pf}(\langle PF, \Pi\rangle)$, $c=v\in I$ for at most one $v\in Dom(c)$.
\end{itemize}
For a consistent interpretation $I$, we further define the set $TC(I)$ to be the set
\[
\left\{c=v \mid \text{$c\in \sigma^{pf}(\langle PF, \Pi\rangle)$ and $c=v\in I$} \right\}.
\]

We define
\[
\smdbprm\left[\langle PF, \Pi\rangle\right] = \left\{I\mid \text{$I$ is consistent and $I\smmodels \Pi\cup TC(I)$}\right\}.
\]
For any interpretation $I\in \smdbprm\left[\langle PF, \Pi\rangle\right]$ and any $c\in \sigma^{pf}(\langle PF, \Pi\rangle)$, we denote $v$ as $c^I$ if $c=v\in I$. 
\EOCC

Given a multi-valued probabilistic $\lpmln$ program ${\bf \Pi}=\langle PF, \Pi\rangle$, we use $\sigma^{pf}({\bf \Pi})$ to denote the set of all probabilistic constants in ${\bf \Pi}$. It can be seen that, if we have $M_{{\bf \Pi}}(c=v)>0$ for all constants $c$ and $v\in Dom(c)$, then given a consistent interpretation $I$, we have ${\bf \Pi}^{\rm hard}=UEC\cup \Pi\cup SINGLE$, where
\begin{align}
\nonumber UEC &= \left\{\bot \leftarrow c=v_1\wedge c=v_2 \mid \text{$c$ is a constants of $\sigma$ and $v_1,v_2\in Dom(c), v_1\neq v_2$}\right\}\cup\\
\nonumber & \left\{\bot\leftarrow \neg \bigvee_{v\in Dom(c)} c=v \mid \text{$c\in \sigma^{pf}({\bf \Pi})$}\right\},
\end{align}
and
\[
SINGLE = \left\{c=v\mid M_{{\bf \Pi}}(c=v)=1\right\},
\]
and $({\bf \Pi}^{\rm soft})_I=TC(I)\setminus SINGLE$.

\begin{lemma}\label{lem:smprm_dbprm}
For any multi-valued probabilistic program ${\bf \Pi}=\langle PF, \Pi\rangle$, for which $\smdbprm\left[{\bf \Pi}\right]$ is not empty and $M_{{\bf \Pi}}(c=v)>0$ for all constants $c$ and $v\in Dom(c)$, and any interpretation $I$, $I$ belongs to $SM^\prime\left[T({\bf \Pi})\right]$ if and only if $I$ belongs to $\smdbprm\left[{\bf \Pi}\right]$.
\end{lemma}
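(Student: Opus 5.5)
We prove the two directions separately, working directly from the definitions of $SM'[T({\bf \Pi})]$ and $\smdbprm[{\bf \Pi}]$. We use the decomposition just stated: for a consistent $I$ we have $T({\bf \Pi})^{\rm hard}=UEC\cup\Pi\cup SINGLE$ and $(T({\bf \Pi})^{\rm soft})_I=TC(I)\setminus SINGLE$. Both membership conditions reduce to stable-model conditions on programs built from $\Pi$, the constraints in $UEC$, and facts $c=v$. The key observation is that, for consistent $I$, the program $\overline{T({\bf \Pi})^{\rm hard}}\cup\overline{(T({\bf \Pi})^{\rm soft})_I}$ equals $\Pi\cup UEC\cup TC(I)$ up to duplicate facts. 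The reason is that every atom in $SINGLE$ that $I$ satisfies already lies in $TC(I)$. Moreover, each soft fact $ln(p_i): c=v_i$ with $c=v_i\notin I$ is not satisfied by $I$, so it drops out of $(T({\bf \Pi})^{\rm soft})_I$.

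For ($\Rightarrow$), assume $I\in SM'[T({\bf \Pi})]$. Then $I$ satisfies $\overline{T({\bf \Pi})^{\rm hard}}$, and in particular the uniqueness and existence constraints (\ref{uc}) and (\ref{ec}), so $I$ is consistent. As in the proof of Proposition~\ref{prop:soft}, $I$ is a stable model of $\overline{T({\bf \Pi})_I}=\overline{T({\bf \Pi})^{\rm hard}}\cup\overline{(T({\bf \Pi})^{\rm soft})_I}$. By the key observation this program is $\Pi\cup UEC\cup TC(I)$. The members of $UEC$ are constraints with empty head and body of the form $B\land N$ with $B$ empty and $N$ negative, after rewriting $\bot\ar c\mvis v_1\land c\mvis v_2$ appropriately. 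Removing constraints that $I$ satisfies preserves stability: the reduct loses only rules with head $\bot$, which do not affect minimal models among interpretations satisfying them. Hence $I$ is a stable model of $\Pi\cup TC(I)$, and so $I\in\smdbprm[{\bf \Pi}]$.

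For ($\Leftarrow$), assume $I$ is consistent and a stable model of $\Pi\cup TC(I)$. Since $I$ is consistent, it satisfies $UEC$. The facts in $SINGLE$ are satisfied as well: if $M_{\bf \Pi}(c=v)=1$, then every other value of $c$ has probability $0$, which is excluded by the positivity hypothesis. So $SINGLE$ is in fact empty, or, if the authors intend $p_i=1$ only for singleton domains, trivially contained in $TC(I)$ by existence of value. Adding the satisfied constraints $UEC$ to $\Pi\cup TC(I)$ keeps $I$ stable, by Proposition~\ref{prop:sm_subset}. The result is exactly $\overline{T({\bf \Pi})^{\rm hard}}\cup\overline{(T({\bf \Pi})^{\rm soft})_I}$, so $I\in SM'[T({\bf \Pi})]$.

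The main obstacle is the step in ($\Rightarrow$) that removes constraints while preserving stability. Proposition~\ref{prop:sm_subset} goes in the other direction, so we argue it directly from the reduct. The reduct of a constraint whose body is satisfied by $I$ contributes a rule $\bot\ar B$ with $I\not\models B$. Minimality of $I$ among models of the reduct of $\Pi\cup TC(I)$ then follows, because any smaller model $J\subset I$ of that reduct also satisfies $\bot\ar B$: here $B$ is empty or positive, and $I\not\models B$ together with $J\subseteq I$ gives $J\not\models B$. Alternatively we can appeal to the loop-formula characterization of Theorem~\ref{thm:lf_subset}, since constraints contribute nothing to external supports. We also have to handle the bookkeeping of $SINGLE$ and the soft facts carefully.
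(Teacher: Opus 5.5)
Your proposal is correct and follows the paper's proof. Both directions rest on identifying $\overline{T({\bf \Pi})^{\rm hard}}\cup\overline{(T({\bf \Pi})^{\rm soft})_I}$ with $\Pi\cup UEC\cup TC(I)$ via $SINGLE\subseteq TC(I)$; this inclusion holds because $SINGLE$ consists exactly of facts $c=v$ with $\i{Dom}(c)=\{v\}$, which existence of value forces into $I$. Then you drop or add the satisfied constraints of $UEC$, and you justify that step in more detail than the paper. The only slip is your aside that the $UEC$ constraints have empty positive body, which is false for the uniqueness constraints, but your final reduct argument explicitly handles positive $B$, so nothing breaks.
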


\begin{proof}
It can be seen that
\begin{align}
\nonumber  &\o{T({\bf \Pi})^{\rm hard}}\cup \o{(T({\bf \Pi})^{\rm soft})_I}\\
\nonumber =& \Pi\cup UEC\cup SINGLE\cup (TC(I)\setminus SINGLE).
\end{align}

($\Rightarrow$) Suppose $I$ belongs to $SM^\prime\left[T({\bf \Pi})\right]$. By definition, $I$ satisfies $\o{T({\bf \Pi})^{hard}}$, which contains $UEC$. Obviously since $I$ satisfies $UEC$, $I$ is consistent. For those $c=v\in SINGLE$, it must be the case that $Dom(c)=\left\{v\right\}$. In this case, we have $c=v\in I$ since $I$ is consistent. So $SINGLE \subseteq TC(I)$ and thus $SINGLE\cup (TC(I)\setminus SINGLE)=TC(I)$. So we have 
\begin{align}
\nonumber &\o{T({\bf \Pi})^{\rm hard}}\cup \o{(T({\bf \Pi})^{\rm soft})_I}\\
\nonumber =& \Pi\cup UEC\cup TC(I).
\end{align}
and since $I$ is a stable model of $\o{T({\bf \Pi})^{\rm hard}}\cup \o{(T({\bf \Pi})^{\rm soft})_I}$, $I$ is a stable model of $\Pi\cup UEC\cup TC(I)$. It follows that $I$ is a stable model of $\Pi\cup TC(I)$ since $UEC$ contains constraints only. Since in addition we have $I$ is consistent, $I$ belongs to $\smdbprm\left[{\bf \Pi}\right]$.

($\Leftarrow$) Suppose $I$ belongs to $\smdbprm\left[{\bf \Pi}\right]$. By definition, $I$ is consistent, and $I$ is a stable model of $\Pi\cup TC(I)$. Clearly $I$ satisfies $UEC$ since $I$ is consistent. Since $UEC$ contains constraints only, $I$ is a stable model $\Pi\cup TC(I)\cup UEC$. For those $c=v\in SINGLE$, it must be the case that $Dom(c)=\left\{v\right\}$. In this case, we have $c=v\in I$ since $I$ is consistent. So $SINGLE \subseteq TC(I)$ and thus $SINGLE\cup (TC(I)\setminus SINGLE)=TC(I)$. So we have
\begin{align}
\nonumber &\Pi\cup UEC\cup TC(I)\\
\nonumber &=\Pi\cup UEC\cup SINGLE\cup (TC(I)\setminus SINGLE)\\
\nonumber &=\o{{\bf \Pi}^{\rm hard}}\cup \o{({\bf \Pi}^{\rm soft})_I}\\
\end{align}
So $I$ is a stable model of $\o{T({\bf \Pi})^{\rm hard}}\cup \o{(T({\bf \Pi})^{\rm soft})_I}$, and by definition $I$ belongs to $SM^\prime\left[T({\bf \Pi})\right]$.
\BOCC

For those $c=v\in SINGLE$, it must be the case that $Dom(c)=\left\{v\right\}$. In this case, we have $c=v\in I$ since $I$ is consistent. So $SINGLE \subseteq TC(I)$ and thus $SINGLE\cup (TC(I)\setminus SINGLE)=TC(I)$. So we have 
\begin{align}
\nonumber &\o{{\bf \Pi}^{\rm hard}}\cup \o{({\bf \Pi}^{\rm soft})_I}\\
\nonumber =& \Pi\cup UEC\cup TC(I).
\end{align}

Clearly, $I$ is consistent if and only if $I$ satisfies $UEC$. Since $UEC$ consists of constraints only, ``$I$ is consistent and $I$ is a stable model of $\Pi\cup TC(I)$'' is equivalent to ``$I$ is a stable model of $UEC\cup\Pi\cup TC(I)$'', which is further equivalent to ``$I$ is a stable model of $\o{{\bf \Pi}^{\rm hard}}\cup \o{({\bf \Pi}^{\rm soft})_I}$''.
So $I\in\smdbprm\left[{\bf \Pi}\right]$ if and only if $I\in SM^{\prime}\left[{\bf \Pi}\right]$.
\EOCC
\qed
\end{proof}

\bigskip

\begin{lemma}\label{lem:smdbprm}
For any multi-valued probabilistic program ${\bf \Pi}=\langle PF, \Pi\rangle$, for which $\smdbprm\left[{\bf \Pi}\right]$ is not empty and $M_{{\bf \Pi}}(c=v)>0$ for all constants $c$ and $v\in Dom(c)$, and any interpretation $I$, $I$ is a stable model of $T({\bf \Pi})$ if and only if $I\in \smdbprm\left[{\bf \Pi}\right]$.
\end{lemma}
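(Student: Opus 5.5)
The plan is to chain Lemma~\ref{lem:smprm_dbprm} with Proposition~\ref{prop:smprm}, applied to the $\lpmln$ program $T({\bf \Pi})$. Proposition~\ref{prop:smprm} states that, whenever $SM'[T({\bf \Pi})]$ is nonempty, an interpretation $I$ is a (probabilistic) stable model of $T({\bf \Pi})$ iff $I\in SM'[T({\bf \Pi})]$. Lemma~\ref{lem:smprm_dbprm} states that, under exactly the hypotheses of the present lemma ($\smdbprm[{\bf \Pi}]\neq\emptyset$ and every $M_{\bf \Pi}(c=v)>0$), $I\in SM'[T({\bf \Pi})]$ iff $I\in\smdbprm[{\bf \Pi}]$. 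So the only thing to supply is the nonemptiness hypothesis needed for Proposition~\ref{prop:smprm}.

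First, I obtain nonemptiness of $SM'[T({\bf \Pi})]$. By assumption $\smdbprm[{\bf \Pi}]$ is nonempty; pick some $K$ in it. By the ($\Leftarrow$) direction of Lemma~\ref{lem:smprm_dbprm}, $K\in SM'[T({\bf \Pi})]$, so $SM'[T({\bf \Pi})]\neq\emptyset$.

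Second, fix an arbitrary interpretation $I$. By Proposition~\ref{prop:smprm}, which is now applicable, $I$ is a stable model of $T({\bf \Pi})$ (that is, $P_{T({\bf \Pi})}(I)\neq 0$) iff $I\in SM'[T({\bf \Pi})]$. By Lemma~\ref{lem:smprm_dbprm}, the latter holds iff $I\in\smdbprm[{\bf \Pi}]$. Composing the two equivalences gives the claim.

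I do not expect a real obstacle here. The only point needing care is the order of reasoning: the nonemptiness of $SM'[T({\bf \Pi})]$ must be derived from that of $\smdbprm[{\bf \Pi}]$ before Proposition~\ref{prop:smprm} is invoked. This is because Proposition~\ref{prop:smprm} genuinely fails without that hypothesis, as the example following it shows. One should also check that the positivity assumption on $M_{\bf \Pi}$ is exactly what Lemma~\ref{lem:smprm_dbprm} requires, so no additional case analysis for probabilities $0$ is needed.
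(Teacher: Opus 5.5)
Your proposal is correct and follows the paper's proof, which likewise just composes Lemma~\ref{lem:smprm_dbprm} with Proposition~\ref{prop:smprm}. You also derive the nonemptiness of $SM'[T({\bf \Pi})]$ from that of $SM''[{\bf \Pi}]$, which supplies a hypothesis of Proposition~\ref{prop:smprm} that the paper uses without stating it.
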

\bigskip
\begin{proof}

By Lemma \ref{lem:smprm_dbprm}, $I$ belongs to $SM^\prime\left[T({\bf \Pi})\right]$ if and only if $I$ belong to $\smdbprm\left[{\bf \Pi}\right]$.
By Proposition \ref{prop:smprm}, $I$ is a stable model of $T({\bf \Pi})$ if and only if $I\in SM^{\prime}\left[T({\bf \Pi})\right]$. So $I$ is a stable model of $T({\bf \Pi})$ if and only if $I\in \smdbprm\left[{\bf \Pi}\right]$.

\BOCC
($\Rightarrow$) Suppose $I$ is a stable model of ${\bf \Pi}$, i.e., $P_{{\bf \Pi}}(I)>0$. 

\begin{itemize}
\item Clearly $I$ must be consistent, otherwise either the existence constraints on those constants $c\in \sigma^{pf}({\bf \Pi})$ 
\[
\bot \leftarrow \neg \bigvee_{v\in Dom(c)}c=v
\]
or the uniqueness constraints on all constants
\[
\bot \leftarrow c=v_1 \wedge v_2
\]
would be violated. Since both of them are hard constraints in ${\bf \Pi}$, by Proposition \ref{prop:soft}, $P_{{\bf \Pi}}(I)=0$, and thus $I$ would not be a stable model, which is a contradiction.

\item Furthermore, $I$ must be a stable model of $\Pi\cup TC(I)$.
\begin{itemize}
\item $I$ satisfies $\Pi\cup TC(I)$: For $I$ to be a stable model of ${\bf \Pi}$, $I$ must be a stable model of $\o{{\bf \Pi}_I}$ and it must be the case that $\Pi\cup TC(I)\subseteq\o{{\bf \Pi}_I}$ since all rules in $\Pi$ are hard rules and by definition $I\vDash TC(I)$. 
\item There does not exist any $J\subset I$ such that $J\vDash (\Pi\cup TC(I))^I$: Assume, to the contrary, that such $J$ exists. Consider the reduct of $\o{{\bf \Pi}_I}$ w.r.t. $I$. All the rule in $\o{{\bf \Pi}_I}\setminus \Pi\cup TC(I)$ are turned into $\top$ in $\o{{\bf \Pi}_I}^I$ because they are constraints satisfied by $I$. So the reduct of $\o{{\bf \Pi}_I}$ w.r.t. $I$ and the reduct of $\Pi\cup TC(I)$ w.r.t. $I$ are equivalent. Since there is an interpretation $J\subset I$ such that $J\vDash (\Pi\cup TC(I))^I$, we have $J\vDash \o{{\bf \Pi}_I}^I$ as well. This means $I$ is not a stable model of ${\bf \Pi}$, which is a contradiction. 
\end{itemize}
\end{itemize}
In conclusion, $I\in \smdbprm\left[{\bf \Pi}\right]$.

($\Leftarrow$) Suppose we have $I\in SM^{\prime\prime}{\bf \Pi}$, i.e., $I$ is consistent and $I\smmodels \Pi\cup TC(I)$.
\begin{itemize}
\item $I\vDash {\bf \Pi}^{\rm hard}$: Since $I$ is consistent, $I$ satisfies all the hard rules in $\o{{\bf \Pi}}\setminus \Pi$ (The fact that all the probabilities are positive ensures that no probabilistic constant declaration will be turned into hard rules, unless for probabilistic constants with singleton domain, in which case, the hard rule from its declaration must be satisfied for $I$ to be consistent). Since $I\smmodels \Pi\cup TC(I)$, $I\vDash\Pi$. So $I$ satisfies all the hard rules in $\o{\langle PF, \Pi\rangle}$.
\item $I$ is a stable model of ${\bf \Pi}^{\rm hard}\cup ({\bf \Pi}^{\rm soft})_I$: It can be seen that 
\[
{\bf \Pi}^{\rm hard}\cup ({\bf \Pi}^{\rm soft})_I = \Pi\cup TC(I)\cap UEC,
\]
where $UEC$ denotes the hard constraints in ${{\bf \Pi}}^{\rm hard}\setminus\Pi$.
Since $I\vDash UEC$, all the constraints in $UEC$ turn into $\top$ in $({\bf \Pi}^{\rm hard}\cup ({\bf \Pi}^{\rm soft})_I)^I$. So $({\bf \Pi}^{\rm hard}\cup ({\bf \Pi}^{\rm soft})_I)^I$ is equivalent to $(\Pi\cup TC(I))^I$. Since there does not exists any $J\subset I$ such that $J\vDash (\Pi\cup TC(I))^I$, there does not exists any $J\subset I$ such that $J\vDash ({\bf \Pi}^{\rm hard}\cup ({\bf \Pi}^{\rm soft})_I)^I$.
\end{itemize}
Consequently, by Proposition \ref{prop:soft}, $P_{{\bf \Pi}}(I)>0$ and thus $I$ is a stable model of ${\bf \Pi}$.
\EOCC

\qed
\end{proof}

\bigskip

Lemma \ref{lem:smdbprm} does not hold when $M_{{\bf \Pi}}(c=v)=0$ for some constant $c$ and $v\in Dom(c)$.
\begin{example}
Consider the following multi-valued probabilistic $\lpmln$ ${\bf \Pi}$:
\begin{align}
\nonumber &1: c=1 \mid 0: c=2\\
\nonumber &p
\end{align}
which translates into
\begin{align}
\nonumber \alpha\ \ \ &:\ \ \ \ c=1\\
\nonumber \alpha\ \ \ &:\ \ \ \ \bot\leftarrow c=2\\
\nonumber \alpha\ \ \ &:\ \ \ \ p.
\end{align}
The interpretation $I=\left\{c=2, p\right\}$ belongs to the set $\smdbprm\left[{\bf \Pi}\right]$. However, it is not a stable model of $T({\bf \Pi})$, since one hard rule is violated.
\end{example}



\bigskip
\noindent{\bf Theorem~\ref{thm:pdbprm} \optional{thm:pdbprm}}\
\ 
{\sl
For any multi-valued probabilistic program~${\bf \Pi}$ such that each $p_i$ in \eqref{eq:probabilistic-constant-declaration} is positive for every probabilistic constant $c$, 
if $\sm''[{\bf \Pi}]$ is not empty, then for any interpretation $I$, $P''_{\bf \Pi}(I)$ coincides with $P_{T({\bf \Pi})}(I)$.
\vspace{3 mm}

\bigskip

\begin{proof}
\begin{itemize}
\item Suppose $I\in \smdbprm\left[{\bf \Pi}\right]$. By Lemma \ref{lem:smprm_dbprm}, we have $I\in SM^{\prime}\left[{\bf \Pi}\right]$. By Proposition \ref{prop:soft}, we have
\begin{align}
\nonumber P_{T({\bf \Pi})}(I) &= P_{T({\bf \Pi})}^{\prime}(I)\\
\nonumber &= \frac{W^{\prime}_{T({\bf \Pi})}(I)}{\sum_{J\in SM^{\prime}\left[T({\bf \Pi})\right]}W^{\prime}_{T({\bf \Pi})}(J)}\\
\nonumber &= \frac{exp(\sum_{w:R\in T({\bf \Pi})_I}w)}{\sum_{J\in SM^{\prime}\left[T({\bf \Pi})\right]}exp(\sum_{w:R\in T({\bf \Pi})_J}w)}\\
\nonumber &= \frac{\prod_{w:R\in T({\bf \Pi})_I}exp(w)}{\sum_{J\in SM^{\prime}\left[T({\bf \Pi})\right]}\prod_{w:R\in T({\bf \Pi})_J}exp(w)}\\
\nonumber &= \frac{\prod_{\text{$c\in\sigma^{pf}({\bf \Pi})$ and $c^I=v$}}M_{{\bf \Pi}}(c=v)}{\sum_{J\in SM^{\prime}\left[T({\bf \Pi})\right]}\prod_{\text{$c\in\sigma^{pf}({\bf \Pi})$ and $c^J=v$}}M_{{\bf \Pi}}(c=v)}\\
\nonumber &=\pdbprm_{{\bf \Pi}}(I)
\end{align}
\item Suppose $I\notin \smdbprm\left[{\bf \Pi}\right]$. By Lemma \ref{lem:smdbprm}, $I$ is not a stable model of $T({\bf \Pi})$, so $P_{T({\bf \Pi})}(I)=0$. On the other hand, $\pdbprm_{{\bf \Pi}}(I) = 0$ since $\wdbprm_{{\bf \Pi}}(I) = 0$.
\end{itemize}
\qed
\end{proof}

\section{Proof of Theorem \ref{thm:p-log-to-lpmln}}
It can be easily seen from the definition of $P(B, r, c=v)$ and the definition of $P(W, c=v)$ that the following two lemmas hold:
\begin{lemma}\label{lem:W-to-B-c}
For any mini P-log program $\Pi$, any possible world $W$ of $\Pi$, any constant $c$ and any $v\in Dom(c)$ such that $c=v$ is possible in $W$, we have
\begin{equation}\nonumber
P(W, c=v)=P(B_{W,c}, r_{W,c}, c=v)
\end{equation}
\end{lemma}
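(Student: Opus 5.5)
The plan is to unfold the two definitions side by side and check that each ingredient coincides. Fix $W$, $c$ and $v$ with $c\mvis v$ possible in $W$. Possibility gives a random selection rule applied in $W$, and the unique random selection rule assumption makes this rule unique; call it $r=r_{W,c}$. The first step is to treat the intervention case. If $W\models\i{Intervene}(c)$, then $\i{PR}_W(c)=\emptyset$. Since $W$ is a stable model of $\tau(\Pi)$, $\i{Intervene}(c)$ can only be supported by a fact $Do(c\mvis v')\in\i{Act}$, so $\i{PR}_{B,r}(c)=\emptyset$ for every $B$ as well. In that case $AV_W(c)=AV_{B_{W,c},r}(c)=\emptyset$, and both causal probabilities reduce to $1/|\i{Dom}(c)|$.

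Conversely, if $W\not\models \i{Intervene}(c)$, then $\i{Act}$ contains no $Do(c\mvis v')$, because the rule $\i{Intervene}(c)\ar Do(c\mvis v')$ would otherwise force $\i{Intervene}(c)$ into $W$. Hence $\i{PR}_{B,r}(c)$ is simply the set of pr-atoms $pr_r(c\mvis v'\mid B)=p$ in $\Pi$.

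Next, take $B=B_{W,c}$. I would show that $\i{PR}_W(c)=\i{PR}_{B_{W,c},r}(c)$. Both sets are defined as $\{pr_r(c\mvis v'\mid B_{W,c})=p\in\Pi\}$, so they agree once the intervention condition matches, which the previous step establishes. Here $B_{W,c}$ is well defined by the unique probability assignment assumption whenever some pr-atom for $r$ is applied in $W$. If none is applied, I would read $B_{W,c}$ as a condition under which no pr-atoms of $r$ apply, so that $\i{PR}_W(c)=\emptyset$; the identity then has to be read with $\i{PR}_{B,r}(c)=\emptyset$. This matches the case split in Theorem~\ref{thm:p-log-to-lpmln}, where the $pf_{\Box}$ constant is used. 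Once the PR-sets agree, $AV_W(c)=AV_{B_{W,c},r}(c)$. For each $v$ in this common set, $ap_W(c\mvis v)=ap_{B_{W,c},r}(c\mvis v)$, because both take the same value $p$ from the same pr-atom, and unique probability assignment guarantees there is only one such atom per value. The default probabilities $dp$ are then given by the same formula over the same sets, so they are equal.

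Finally, both $P(W,c\mvis v)$ and $P(B_{W,c},r_{W,c},c\mvis v)$ are defined by the same case split on whether $v\in AV$, so they coincide. The main obstacle I expect is the bookkeeping when no pr-atom is applied in $W$, since $B_{W,c}$ is then not literally defined. I would handle it by adopting the convention that in this case both sides denote the uniform default $1/|\i{Dom}(c)|$.
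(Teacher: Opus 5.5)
Your proof is correct and follows the same route as the paper, which only states that the lemma is immediate from the definitions of $P(W,c\mvis v)$ and $P(B,r,c\mvis v)$. Your unfolding is more careful than the paper's in two places: you use stability of $W$ to show that $W\not\models\i{Intervene}(c)$ exactly when $\i{Act}$ contains no $Do(c\mvis v')$, and you point out that a convention is needed when $B_{W,c}$ is undefined because no pr-atom of $r_{W,c}$ is applied in $W$.
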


\begin{lemma}\label{lem:PR-W-to-B-c}
For any mini P-log program $\Pi$, any possible world $W$ of $\Pi$, any constant $c$ and any $v\in Dom(c)$ such that $c=v$ is possible in $W$, we have
\begin{equation}\nonumber
PR_W(c)=PR_{B_{W,c}, r_{W,c}}(c).
\end{equation}
\end{lemma}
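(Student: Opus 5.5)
The statement is Lemma~\ref{lem:PR-W-to-B-c}, the identity $PR_W(c)=PR_{B_{W,c},r_{W,c}}(c)$. I would prove it by unfolding both definitions and matching them clause by clause. Fix a possible world $W$ and a constant $c$ such that $c\mvis v$ is possible in $W$. By the unique random selection rule assumption, $r_{W,c}$ is well defined. By the unique probability assignment assumption, so is $B_{W,c}$ whenever some pr-atom for $r_{W,c}$ is applied in $W$. Both sides are defined as either a set of pr-atoms of $\Pi$ or $\emptyset$, so I would split into two cases according to intervention.

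\textbf{Case 1: $W\models \i{Intervene}(c)$.} Then $PR_W(c)=\emptyset$ by definition. Since $W$ is a stable model of $\tau(\Pi)$, the atom $\i{Intervene}(c)$ must be supported. The only rules of $\tau(\Pi)$ with $\i{Intervene}(c)$ in the head are $\i{Intervene}(c)\ar Do(c\mvis v')$, and the $Do$-atoms occur only as facts from $\i{Act}$. Hence $\i{Act}$ contains some $Do(c\mvis v')$, and so $PR_{B_{W,c},r_{W,c}}(c)=\emptyset$ as well.

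\textbf{Case 2: $W\not\models\i{Intervene}(c)$.} By the converse of the same argument, $\i{Act}$ contains no $Do(c\mvis v')$. The reason is that the fact $Do(c\mvis v')$ together with the rule $\i{Intervene}(c)\ar Do(c\mvis v')$ would force $W\models \i{Intervene}(c)$. Then both sides are
\[
\{pr_{r_{W,c}}(c\mvis v\mid B_{W,c})=p\in\Pi \mid v\in\i{Dom}(c)\},
\]
where the right-hand side is obtained by instantiating $B:=B_{W,c}$ and $r:=r_{W,c}$ in the definition of $PR_{B,r}(c)$. The two sets coincide literally.

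\textbf{Degenerate subcase and main obstacle.} The only real subtlety is the case where no pr-atom for $r_{W,c}$ is applied in $W$, so that $B_{W,c}$ is not determined by an applied pr-atom. Here I would adopt the convention, implicit in the paper, that $B_{W,c}$ is then a body with no pr-atoms in $\Pi$. The natural choice is the placeholder $\Box$, for which $PR_{\Box,r}(c)=\emptyset$. With that convention both sides are empty, and the identity holds trivially. Fixing this convention consistently with the use of $pf^c_{\Box,r}$ in $F_W$ is the step I expect to need the most care.

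Everything else is direct unfolding of the definitions. The same case analysis also gives Lemma~\ref{lem:W-to-B-c}, since $AV$, $ap$ and $dp$ are computed from $PR$ by the same formulas on both sides.
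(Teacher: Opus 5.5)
Your proof is correct and is essentially the paper's own argument; the paper only asserts that the identity ``can be easily seen from the definitions.'' You reconcile the two guards explicitly ($W\not\models \mathit{Intervene}(c)$ in $\mathit{PR}_W(c)$ versus ``$\mathit{Act}$ contains no $\mathit{Do}(c=v')$'' in $\mathit{PR}_{B,r}(c)$) via supportedness and satisfaction of the rules in the stable model $W$ of $\tau(\Pi)$, and you add the $\Box$ convention for the case where no pr-atom for $r_{W,c}$ is applied; together these supply exactly the details the paper leaves implicit.
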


Furthermore, the following corollary can be derived:
\begin{cor}\label{lem:CausalProbability-PfAtom}
For any mini P-log program $\Pi$, any possible world $W$ of $\Pi$, any constant $c$ and any $v\in Dom(c)$ such that $c=v$ is possible in $W$ and $W\vDash c=v$, we have
\begin{itemize}
\item If $PR_{W}(c)\neq \emptyset$, then
\begin{equation}\nonumber
P(W, c=v) = M_{\Pi^{\lpmln}}(pf^{c}_{B_{W,c}, r_{W,c}}=v);
\end{equation}

\item If $PR_{W}(c)= \emptyset$, then
\begin{equation}\nonumber
P(W, c=v) = M_{\Pi^{\lpmln}}(pf^{c}_{\Box, r_{W,c}}=v).
\end{equation}

\end{itemize}
\end{cor}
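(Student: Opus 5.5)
The corollary follows by chaining Lemma~\ref{lem:W-to-B-c} and Lemma~\ref{lem:PR-W-to-B-c} with the way the probabilistic constant declarations of $\Pi^{\lpmln}$ are built. Fix $W$, $c$, $v$ with $c\mvis v$ possible in $W$ and $W\models c\mvis v$. Write $r=r_{W,c}$, which exists by the unique random selection rule assumption, and $B=B_{W,c}$. Lemma~\ref{lem:W-to-B-c} gives $P(W,c\mvis v)=P(B,r,c\mvis v)$. It then remains to identify $P(B,r,c\mvis v)$ with the value $M_{\Pi^{\lpmln}}$ assigns to the right auxiliary atom. We split into two cases according to whether $\i{PR}_W(c)$ is empty.

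\emph{Case $\i{PR}_W(c)\neq\emptyset$.} By Lemma~\ref{lem:PR-W-to-B-c}, $\i{PR}_{B,r}(c)=\i{PR}_W(c)\neq\emptyset$. Hence the signature of $\Pi^{\lpmln}$ contains the probabilistic constant $pf^c_{B,r}$. The translation then includes the declaration
\[
P(B,r,c\mvis v_1):pf^c_{B,r}\mvis v_1\mid\dots\mid P(B,r,c\mvis v_n):pf^c_{B,r}\mvis v_n ,
\]
with $\{v_1,\dots,v_n\}=\i{Dom}(c)$. By the definition of $M$, this gives $M_{\Pi^{\lpmln}}(pf^c_{B,r}\mvis v)=P(B,r,c\mvis v)=P(W,c\mvis v)$.

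\emph{Case $\i{PR}_W(c)=\emptyset$.} By Lemma~\ref{lem:PR-W-to-B-c}, $\i{PR}_{B,r}(c)=\emptyset$, so $AV_{B,r}(c)=\emptyset$. The default probability therefore reduces to $dp_{B,r}(c)=\frac{1-0}{|\i{Dom}(c)|}$. Since $v\notin AV_{B,r}(c)$, we get $P(B,r,c\mvis v)=\frac{1}{|\i{Dom}(c)|}$. The constant $pf^c_{\Box,r}$ is declared for every random selection rule $r$ for $c$, with uniform weights $\frac{1}{|\i{Dom}(c)|}$. So $M_{\Pi^{\lpmln}}(pf^c_{\Box,r}\mvis v)=\frac{1}{|\i{Dom}(c)|}=P(W,c\mvis v)$.

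The only delicate point is the second case. There one must confirm that an empty $\i{PR}_W(c)$ forces $AV_{B,r}(c)=\emptyset$, rather than merely that no pr-atom is applied. Emptiness arises either because $W\models\i{Intervene}(c)$ or because no pr-atom applies. In the intervention case, $\i{PR}_{B,r}(c)$ is also defined as $\emptyset$, since $\i{Act}$ then contains some $Do(c\mvis v')$. Lemma~\ref{lem:PR-W-to-B-c} packages exactly this correspondence, so I would rely on it and simply check that the resulting default-probability formula reduces to the uniform value.
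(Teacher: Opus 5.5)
Your proposal is correct and matches the paper's argument. The paper gives no separate proof: it derives the corollary directly from Lemmas~\ref{lem:W-to-B-c} and~\ref{lem:PR-W-to-B-c} together with the probabilistic constant declarations for $pf^c_{B,r}$ and $pf^c_{\Box,r}$ in $\Pi^{\lpmln}$, exactly as you do. In your second case you could skip the lemmas: $\i{PR}_W(c)=\emptyset$ gives $AV_W(c)=\emptyset$, hence $P(W,c\mvis v)=dp_W(c)=1/|\i{Dom}(c)|$ directly, which also avoids $B_{W,c}$ being undefined when no pr-atom is applied.
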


For any interpretation $I$ of $\Pi$, we define the set $SM_{\Pi}(I)$ of stable models of $\Pi^{\lpmln}$ as follows:
\begin{equation}\nonumber
SM_{\Pi}(I)=\left\{J\mid \text{$J$ is a (probabilistic) stable model of $\Pi^{\lpmln}$ such that $J\vDash F_I$}\right\}.\footnote{The formula $F_I$ is defined in Theorem \ref{thm:p-log-to-lpmln}.}
\end{equation}

\bigskip

The proof of the next lemma uses a restricted version of the splitting theorem in \cite{fer09a}, which is reformulated as follows:

\begin{thm}\label{thm:splitting}
Let $\Pi_1$, $\Pi_2$ be two finite ground programs where rules are of the form (\ref{rule}), and ${\bf p}$, ${\bf q}$ be disjoint tuples of distinct atoms. If
\begin{itemize}
\item Each strongly connected component of the dependency graph of $\Pi_1\cup\Pi_2$ w.r.t. ${\bf p}\cup{\bf q}$ is a subset of ${\bf p}$ or a subset of ${\bf q}$.
\item No atom in ${\bf p}$ has a strictly positive occurrence in $\Pi_2$, and
\item No atom in ${\bf q}$ has a strictly positive occurrence in $\Pi_1$.
\end{itemize}
then an interpretation $I$ of $\Pi_1\cup \Pi_2$ is a stable model of $\Pi_1\cup \Pi_2$ relative to ${\bf p}\cup{\bf q}$ if and only if $I$ is a stable model of $\Pi_1$ relative to ${\bf p}$ and $I$ is a stable model of $\Pi_2$ relative to ${\bf q}$.
\end{thm}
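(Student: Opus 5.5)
The plan is to reduce everything to loop formulas, via the relativized form of Theorem~\ref{thm:lf_subset}. That is, I will use the version of \cite{lee11b} in which the intensional atoms are restricted to a set ${\bf r}$: an interpretation $I$ is a stable model of a program $\Pi$ relative to ${\bf r}$ iff $I\models\Pi$ and $I\models LF_\Pi(Y)$ for every loop $Y\subseteq{\bf r}$ of $\Pi$. Here loops are taken in the positive dependency graph of $\Pi$ restricted to ${\bf r}$. With this characterization, the statement splits into two parts. The first is satisfaction: $I\models\Pi_1\cup\Pi_2$ iff $I\models\Pi_1$ and $I\models\Pi_2$, which is trivial. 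The second is a comparison of the loops and loop formulas of the three programs $\Pi_1\cup\Pi_2$, $\Pi_1$ and $\Pi_2$.

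\emph{Step 1: no loop straddles ${\bf p}$ and ${\bf q}$.} Every loop $Y\subseteq{\bf p}\cup{\bf q}$ of $\Pi_1\cup\Pi_2$ is contained in ${\bf p}$ or in ${\bf q}$. The reason is that $Y$ is strongly connected in the positive dependency graph. That graph is a subgraph of the dependency graph used in the first hypothesis, so $Y$ lies inside a single strongly connected component of the latter. By the first hypothesis, that component is a subset of ${\bf p}$ or of ${\bf q}$.

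\emph{Step 2: loops inside ${\bf p}$ match, and so do their loop formulas.} Let $Y\subseteq{\bf p}$. By the second hypothesis, no atom of ${\bf p}$ occurs strictly positively in $\Pi_2$, so in particular no rule of $\Pi_2$ has an atom of ${\bf p}$ in its head. Consequently the edges among atoms of ${\bf p}$ in the positive dependency graph of $\Pi_1\cup\Pi_2$ are exactly those contributed by $\Pi_1$. Hence the loops of $\Pi_1\cup\Pi_2$ contained in ${\bf p}$ are exactly the loops of $\Pi_1$ relative to ${\bf p}$. For the same reason, every rule contributing a disjunct to $ES_{\Pi_1\cup\Pi_2}(Y)$ (a rule whose head meets $Y$) belongs to $\Pi_1$, so $ES_{\Pi_1\cup\Pi_2}(Y)=ES_{\Pi_1}(Y)$ and therefore $LF_{\Pi_1\cup\Pi_2}(Y)=LF_{\Pi_1}(Y)$. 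The symmetric argument, using the third hypothesis, handles $Y\subseteq{\bf q}$ and $\Pi_2$.

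Combining the two steps: $I$ satisfies all loop formulas of $\Pi_1\cup\Pi_2$ for loops within ${\bf p}\cup{\bf q}$ iff it satisfies those of $\Pi_1$ for loops within ${\bf p}$ and those of $\Pi_2$ for loops within ${\bf q}$. Together with the satisfaction part, this gives the stated equivalence.

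The main obstacle is justifying the relativized loop-formula theorem for the rule form~\eqref{rule}, including disjunctive heads and negative formulas $N$. For disjunctive heads, the external support formula includes the conjuncts $\neg b$ for $b\in A\setminus Y$, and these must be checked to be unchanged when passing from $\Pi_1\cup\Pi_2$ to $\Pi_1$; they are, since the same rules appear. For negative formulas $N$, atoms occurring there create no positive edges. I would take the relativized theorem directly from \cite{lee11b}, where non-intensional atoms are treated as fixed, rather than reprove it. If that turned out to be unavailable, the fallback would be a direct reduct argument. That argument would show that a $J\subset I$ agreeing with $I$ outside ${\bf p}\cup{\bf q}$ and satisfying $(\Pi_1\cup\Pi_2)^I$ can be modified on ${\bf q}$ or on ${\bf p}$ alone, guided by the topological order of the strongly connected components, to witness non-minimality for $\Pi_1$ or for $\Pi_2$.
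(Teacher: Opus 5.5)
Your argument is correct, but it takes a different route from the paper, which gives no proof of its own: it presents the statement as a restricted reformulation of the symmetric splitting theorem of Ferraris, Lee, Lifschitz and Palla, established there in the general first-order setting of the SM operator.

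You instead stay in the finite ground setting and reduce everything to loop formulas on loops, i.e., clause (c) of Theorem~\ref{thm:lf_subset}, which is where the nontrivial fact that loops suffice, even for disjunctive heads, is packaged. This makes it visible where each hypothesis enters:
\begin{itemize}
\item The SCC condition is what allows you to work with loops. The all-subsets clause (b) would not decompose, since arbitrary sets can straddle ${\bf p}$ and ${\bf q}$.
\item The conditions on strictly positive occurrences, which for rules of form~\eqref{rule} are exactly head occurrences, make both the induced dependency graph and $ES(Y)$ inside ${\bf p}$ (resp.\ ${\bf q}$) depend only on $\Pi_1$ (resp.\ $\Pi_2$).
\end{itemize}
Your route also matches the loop-formula style of the rest of the appendix. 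What the cited theorem buys in exchange is generality (non-ground programs, arbitrary formulas, infinite domains) without any loop-formula machinery.

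The obstacle you flag follows directly from Theorem~\ref{thm:lf_subset}, so your fallback reduct argument is not needed. Given $\Pi$ and ${\bf r}$, add the rule $a\leftarrow\neg\neg a$ for every atom $a\notin{\bf r}$.
\begin{itemize}
\item These rules are tautologies, and the reduct with respect to $I$ gains exactly the facts $I\setminus{\bf r}$. Hence the stable models of the enlarged program are precisely the stable models of $\Pi$ relative to ${\bf r}$.
\item The added rules have empty positive bodies, so they create no positive edges. Their heads avoid ${\bf r}$, so they leave $ES(Y)$ unchanged for $Y\subseteq{\bf r}$.
\item Any loop containing some $a\notin{\bf r}$ gains the disjunct $\neg\neg a$ in its external support, so its loop formula holds trivially.
\end{itemize}
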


\begin{lemma}\label{lem:irre_random}
Given a mini P-log program $\Pi$ and a possible world $I$ of $\Pi$, let $AIRRE_{\Pi}(I)$ denote the set of all assignments of the constants in the set
\[
IRRE_{\Pi}(I)=\sigma^{pf}(\Pi^{\lpmln})\setminus \left\{pf^{c}_{\Box, r_{I,c}}\mid \substack{\text{$c=v,I:$}\\\text{$c=v$ is possible in $I$}, \\\text{$I\vDash c=v$} \\\text{and $PR_{I}(c)= \emptyset$}}\right\}\setminus\left\{pf^{c}_{B_{I,c}, r_{I,c}}\mid \substack{\text{$c=v,I:$}\\\text{$c=v$ is possible in $I$}, \\\text{$I\vDash c=v$} \\\text{and $PR_{I}(c)\neq \emptyset$}}\right\}.
\].

There is a 1-1 correspondence between $SM_{\Pi}(I)$ and $AIRRE_{\Pi}(I)$.

\end{lemma}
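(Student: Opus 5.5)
The plan is to construct an explicit map $\Phi$ from $AIRRE_{\Pi}(I)$ to $SM_{\Pi}(I)$ and show that it is a bijection. The guiding idea is that, once $F_I$ is fixed, the ``relevant'' probabilistic constants (those in $\sigma^{pf}(\Pi^{\lpmln})\setminus IRRE_{\Pi}(I)$) are pinned to the values dictated by $I$. Each irrelevant probabilistic constant $pf$ then occurs in the bodies of rules~\eqref{pr-assign} (or their default counterparts) that are inactive in $I$. Those rules are inactive because their condition $B$ or $B'$ fails, because $\i{Intervene}(c)$ holds, or because the $\Box$/$B$ branch that was taken is the other one. Hence the irrelevant constants can take any value without affecting the rest.

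Given an assignment $a\in AIRRE_{\Pi}(I)$, define
\[
\Phi(a)= I\cup\{pf^c_{B_{I,c},r_{I,c}}\mvis v,\ pf^c_{\Box,r_{I,c}}\mvis v \text{ as required by } F_I\}\cup a\cup\{\i{Assigned}_r \mid I\models B\wedge B'\wedge\neg\i{Intervene}(c)\text{ for some pr-atom body } B \text{ of } r\},
\]
together with the $\i{Intervene}$ atoms of $I$ (these are already in $I$ as an element of $\omega(\Pi)$).

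Injectivity is immediate: distinct assignments differ on some $pf$ atom. For surjectivity, take $J\in SM_{\Pi}(I)$. Since $J\models F_I$, $J$ agrees with $I$ on the atoms $c\mvis v\in I$. I would show that $J$ contains no other atom of $\sigma'$ and that its $\i{Assigned}_r$ atoms are forced, so that $J=\Phi(a)$ with $a$ the restriction of $J$ to $IRRE_{\Pi}(I)$. The argument uses consistency: every probabilistic constant has exactly one value, which follows from the uniqueness and existence hard constraints together with Proposition~\ref{prop:smprm}, since $SM'$ is nonempty. It also uses minimality: the $\i{Assigned}_r$ atoms and any extra $c\mvis v$ atoms would be unsupported.

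The main step is showing that $\Phi(a)$ really is a stable model of $\Pi^{\lpmln}$, i.e.\ that $\Phi(a)\in\sm''$. By Lemma~\ref{lem:smdbprm} this reduces to showing that $\Phi(a)$ is a stable model of $\Pi^{\lpmln}$'s rule part together with $TC(\Phi(a))$. For this I would apply the splitting theorem (Theorem~\ref{thm:splitting}), taking ${\bf p}$ to be the $pf$ atoms (defined only by the facts in $TC$) and ${\bf q}$ to be the atoms of $\sigma'$ plus the $\i{Assigned}_r$ atoms. The $pf$ part is trivially stable. For the ${\bf q}$ part, I would compare the rules of $\Pi^{\lpmln}$ with those of $\tau(\Pi)$, of which $I$ is a stable model. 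The disjunctive random-selection rules of $\tau(\Pi)$ are replaced, with respect to $I$, by the single rule~\eqref{pr-assign} or default rule whose $pf$ atom equals $I$'s chosen value. This rule is exactly the one that supports $c\mvis v$ in $I$. Every other such rule has a false body, because the wrong $pf$ value is present, or $B$ is false, or $\i{Assigned}_r$ blocks the default. Here the unique random selection and unique probability assignment assumptions ensure that at most one branch fires.

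I expect the main obstacle to be the minimality direction of the ${\bf q}$ part. The disjunctive rule in $\tau(\Pi)$ licenses $c\mvis v$ only through minimal-model choice, whereas in $\Pi^{\lpmln}$ it comes from a normal rule. I would handle this with a loop-formula argument via Theorem~\ref{thm:lf_subset}: any loop's external support in $\tau(\Pi)$ that passes through the disjunctive rule is matched in $\Pi^{\lpmln}$ by the active rule~\eqref{pr-assign}, whose body holds in $\Phi(a)$. Conversely, the $\i{Assigned}_r$ atoms are tight since they have no positive cycles.
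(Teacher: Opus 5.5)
\textbf{Gap in the key step.} The step you flag as the main obstacle, and the repair you propose for it, rest on a misreading of the translation. $\Pi^{\lpmln}$ contains \emph{all} rules of $\tau(\Pi)$, including the disjunctive rules $c\mvis v_1;\dots;c\mvis v_n\ar B',\no\ \i{Intervene}(c)$. The rules~\eqref{pr-assign} and the default rules are added alongside them, not substituted for them.

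The loop-formula ``matching'' you plan for the supposed replacement would in fact fail. The body of~\eqref{pr-assign} contains the pr-atom condition $B$, whose positive atoms may lie in the set $Y$ being checked; the added rules can even create new positive cycles through $B$. For instance, take $p\ar c\mvis 1$, $[r]\ \i{random}(c)$ with $\i{Dom}(c)=\{1,2\}$, $pr_r(c\mvis 1\mid p)=0.9$, and the possible world $I=\{c\mvis 1,p\}$. For $Y=\{c\mvis 1,p\}$:
\begin{itemize}
\item the rule $c\mvis 1\ar p,pf^c_{p,r}\mvis 1,\no\ \i{Intervene}(c)$ is not external, since its body contains $p\in Y$;
\item the default rule is blocked by $\i{Assigned}_r$, which is derived from $p$;
\item so the only external support is the disjunctive rule of $\tau(\Pi)$.
\end{itemize}
Without that rule, $\Phi(a)$ would not be stable, so no such matching can succeed in general.

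\textbf{The fix, and how the paper argues.} Once the misreading is corrected, the obstacle disappears. The paper splits $\o{\Pi^{\lpmln}_J}$ into two parts:
\begin{itemize}
\item $\tau(\Pi)$ together with the rules~\eqref{pr-assign} and the default rules, over the signature of $\tau(\Pi)$;
\item the $pf$ facts and the $\i{Assigned}_r$ rules.
\end{itemize}
It handles the first part by Proposition~\ref{prop:sm_subset}: $I$ is a stable model of $\tau(\Pi)$, and the added rules are satisfied, which is exactly your observation that the non-chosen branches have false bodies. It handles the second part by singleton loop formulas. With your split, which puts $\i{Assigned}_r$ on the ${\bf q}$ side, use the loop-formula form of the same fact: adding rules only enlarges external supports.

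\textbf{Remaining points.} The rest of your plan is fine, and your injectivity/surjectivity discussion goes beyond the paper, which only exhibits the two maps. Your claim that ``extra atoms would be unsupported'' can be made rigorous as follows. By Lemma~\ref{lem:lpmln-to-plog}, the restriction of $J$ to the signature of $\tau(\Pi)$ is a possible world, and it contains $I$ since $J$ satisfies $F_I$. Stable models of $\tau(\Pi)$ are $\subseteq$-incomparable, so the restriction equals $I$.

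Finally, invoking Lemma~\ref{lem:smdbprm} requires every $P(B,r,c\mvis v)$ to be positive. Otherwise an assignment choosing a zero-probability value violates a hard rule, and the correspondence fails for it. The paper's proof glosses over the same caveat.
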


\begin{proof}
We use $\sigma$ to refer to the signature of $\tau(\Pi)$, and $\sigma^{\prime}$ to refer to the signature of $\Pi^{\lpmln}$. We construct the 1-1 correspondence as follows.

Given an element $J$ in $SM_{\Pi}(I)$, i.e., a stable model of $\Pi^{\lpmln}$ which satisfies $F_I$, due to the UEC constraint for constants in $IRRE_{\Pi}(I)$, $SM_{\Pi}(I)$ must assign some value to all constants in $IRRE_{\Pi}(I)$ to be a stable model. We extract the assignment of atoms in $IRRE_{\Pi}(I)$ from $J$ to obtain the corresponding element in $AIRRE_{\Pi}(I)$.

Given any arbitrary assignment of constants in $IRRE_{\Pi}(I)$, we extend this assignment by assigning the constants in $\sigma(\Pi^{\lpmln})\setminus IRRE_{\Pi}(I)$ in the following way, to obtain the corresponding element $J$ in $SM_{\Pi}(I)$:
\begin{itemize}
\item For all $c=v\in I$, set $c^J=v$.
\item For all constants of the form $pf^{c}_{\Box, r_{I, c}}$, where $c\in \sigma$, $c^I=v$, $c=v$ is possible in $I$ and $PR_{I}(c)=\emptyset$, set $(pf^{c}_{\Box, r_{I, c}})^J=v$, and set $(Assigned_{r_{I, c}})^J$ to be undefined.
\item For all constants of the form $pf^c_{B_{I,c}, r_{I, c}}$, where $c\in\sigma$, $c^I=v$, $c=v$ is possible in $I$ and $PR_{I}(c)\neq \emptyset$, set $(pf^c_{B_{I,c}, r_{I, c}})^J=v$, and set $(Assigned_{r_{I, c}})^J={\bf t}$.
\end{itemize}
The above construction of $J$ guarantees that $J$ satisfies $\o{(\Pi^{\lpmln})^{hard}}$ and $F_I$. Next we show that $J$ is a stable model of $\Pi^{\lpmln}$:

We split rules in $\o{\Pi^{\lpmln}_J}$ into two subsets $\o{\Pi^{\lpmln}_{J,1}}$ and $\o{\Pi^{\lpmln}_{J,2}}$ as follows:

\begin{itemize}
\item $\o{\Pi^{\lpmln}_{J, 1}}$ contains all rules in $\tau(\Pi)$, and rules of the following forms:
\begin{enumerate}
\item $c=v \leftarrow B, B^\prime, pf^{c}_{B^\prime, r}=v, not\ intervene(c)$, where $c$ is a constant of $\sigma$, $v\in Dom(c)$, $B$ is the body of some random selection rule $r$ of the form $\left[r\right] random(c)\leftarrow B$, and $B^\prime$ appears in some pr-atom of the form $pr(c=v \mid B^{\prime}) = p$ where $p\in\left[0, 1\right]$;
\item $c=v \leftarrow B, pf^{c}_{\Box, r}=v, not\ Assigned_{r}, not\ intervene(c)$, where $c$ is a constant of $\sigma$, $v\in Dom(c)$, and $B$ is the body of some random selection rule $r$ of the form $\left[r\right] random(c)\leftarrow B$;
\end{enumerate} 
\item $\o{\Pi^{\lpmln}_{J, 2}}$ is $\o{\Pi^{\lpmln}_J}\setminus\o{\Pi^{\lpmln}_{J, 1}}$
\end{itemize}

It can be seen that no atom in $\sigma$ has a strictly positive occurrence in $\o{\Pi^{\lpmln}_{J, 2}}$, and no atom in $\sigma^\prime\setminus\sigma$ (Atoms of the form ``$Assigned_r$'' and ``$pf^{c}_{\_, r}$'') has a strictly positive occurrence in $\o{\Pi^{\lpmln}_{J, 1}}$. Furthermore, the construction of $\Pi^{\lpmln}$ guarantees that all loops of size greater than one involves atoms in $\sigma$ only. So each strongly connected component of the dependency graph of $\Pi^{\lpmln}_{J}$ w.r.t. $\sigma^\prime$ is a subset of $\sigma$ or a subset of $\sigma^{\prime}\setminus\sigma$. By Theorem \ref{thm:splitting}, it suffices to show that $J$ is a stable model of $\o{\Pi^{\lpmln}_{J, 1}}$ relative to $\sigma$ and $J$ is a stable model of $\o{\Pi^{\lpmln}_{J, 2}}$ relative to $\sigma^{\prime}\setminus\sigma$.

\begin{itemize}
\item {\bf $J$ is a stable model of $\o{\Pi^{\lpmln}_{J, 1}}$ relative to $\sigma$}: Since $I$ is a stable model of $\tau(\Pi)$ relative to $\sigma$, $J$ is a stable model of $\tau(\Pi)$ relative to $\sigma$. It can be easily seen from the construction of $J$ that $J\vDash \o{\Pi^{\lpmln}_{J, 1}}$. Since $\tau(\Pi)$ is a subset of $\o{\Pi^{\lpmln}_{J, 1}}$, by Proposition \ref{prop:sm_subset}, $J$ is a stable model of $\o{\Pi^{\lpmln}_{J, 1}}$ relative to $\sigma$.

\item {\bf $J$ is a stable model of $\o{\Pi^{\lpmln}_{J, 2}}$ relative to $\sigma^{\prime}\setminus\sigma$}: It can be easily seen from the construction of $J$ that $J\vDash \o{\Pi^{\lpmln}_{J, 2}}$. Also as we discussed earlier, all loops of size greater than one do not involve atoms in $\sigma^{\prime}\setminus\sigma$. So it suffices to show that the loop formula of each loop consisting of a single atom in $\sigma^{\prime}\setminus\sigma$ is satisfied by $J$. $\sigma^{\prime}\setminus\sigma$ contains two types of atoms: 1) atoms of the form $Assigned_r$, where $r$ is some random selection rule, and 2) atoms of the form $pf^{c}_{\_, r}=v$, where $c$ is a constant of $\sigma$, $\_$ is $\Box$ or $B$ such that $pr(c=v^\prime\mid B) = p$ is a pr-atom in $\Pi$, $v\in Dom(c)$, and $r$ is a random selection rule of the form $\left[r\right] random(c)\leftarrow B^\prime$. 
\begin{itemize}
\item Consider atoms of the form 1). These atoms appear and only appear at the head of rules of the form
\[
Assigned_r\leftarrow B, B^\prime not\ Intervene(c).
\]
where $c$ is the atom associated with the random selection rule $r$, $B^\prime$ is the body of the random selection rule $r$, and $B$ occurs in some pr-atom $pr(c=v \mid B)=p$. The body of this rule involves atoms in $\sigma$ only. The construction of $J$ sets $Assigned_r$ to be true only when $PR_I(c)\neq \emptyset$, which implies $Assigned_r$ is true in $J$ only when $J$ satisfies $not\ Intervene(c)$, $B$ and $B^\prime$. Note that $B, not\ Intervene(c)$ does not contain $Assigned_r$. So clearly $B, B^\prime not\ Intervene(c)$ is a one disjunctive term in $ES_{\Pi^{\lpmln}}(\left\{Assigned_r\right\})$. So $Assigned_r \rightarrow ES_{\Pi^{\lpmln}}(\left\{Assigned_r\right\})$ is satisfied.

\item Consider atoms of the form 2). Each of these atoms appears and only appears as an atomic fact in $\o{\Pi^{\lpmln}_{J, 2}}$. So the loop formulas for these atoms are of the form $pf^{c}_{\_, r}=v\rightarrow \top$. Clearly these formulas are satisfied by $J$.
\end{itemize}
So $J$ must be a stable model of $\o{\Pi^{\lpmln}_{J, 2}}$ relative to $\sigma^{\prime}\setminus\sigma$.
\end{itemize}

\BOCC
\begin{itemize}
\item The reduct of $\o{(\Pi^{\lpmln})_J}$ w.r.t. $J$ contains the reduct of $\tau(\Pi)$ w.r.t. $J$, which is the same as the reduct of $\tau(\Pi)$ w.r.t. $I$ since $\tau(\Pi)$ involves atoms in $I$ only. Since $I$ is a possible world of $\Pi$, $I$ is a stable model of $\tau(\Pi)$. So there is no subset of $I$ that satisfies the reduct of $\tau(\Pi)$ w.r.t. $I$. This means, the subset of $J$ which consists of atoms in $I$ cannot be replaced by a smaller subset with $J$ still satisfying the reduct of $\o{(\Pi^{\lpmln})_J}$ w.r.t. $J$.
\item Consider those atoms in $J$ but not in $I$. Those atoms are of the following two forms
\[
Assigned_r=\true
\]
where $r$ is a random selection rule of $\Pi$ for c, and
\[
pf^{c}_{\_, r}=v
\]
where $c$ is a constant, $v\in Dom(c)$, $r$ is the random selection rule in $\Pi$ for $c$, and $\_$ denotes a set of literals or $\Box$.
\begin{itemize}
\item For each atom of the form $pf^{c}_{\_, r}=v$ that are in $J$, there is a corresponding weighted atomic fact for this atom in $\Pi^{\lpmln}$. This atomic fact is clearly in $(\Pi^{\lpmln})_{J}$. Thus all atoms of the form $pf^{c}_{\_, r}=v$ that are in $J$ are needed to satisfy the reduct of $\o{(\Pi^{\lpmln})_J}$ w.r.t. $J$.
\item For those atoms of the form $Assigned_r=\true$ that are in $J$, there must be a constant $c$, an applied random selection rule of $c$ whose body is $B^\prime$ (so that $I\vDash B^\prime$ and thus $J\vDash B^\prime$), and some set of literals $B$ for which $pr(c=v\mid B)=p\in PR_{I}(c)$ for some $v\in Dom(c)$ and $p\in \left[0, 1\right]$, such that $I\vDash B$ (and thus $J\vDash B$). $\o{(\Pi^{\lpmln})_J}$ contains the rule $Assigned_r\ar B, B^\prime$ due to the translation from $\Pi$ to $\Pi^{\lpmln}$. The rule ensures that $Assigned_r$ must be satisfied by $K$ for any interpretation $K$ to satisfy the reduct of $\o{(\Pi^{\lpmln})_J}$ w.r.t. $J$.
\end{itemize}
\end{itemize}
So there is no proper subset of $J$ that satisfies $\o{(\Pi^{\lpmln})_J}$. So $J$ is a stable model of $\Pi^{\lpmln}$. so $J\in SM_{\Pi}(I)$.
\EOCC
\qed
\end{proof}

\begin{lemma}\label{thm:p-log-to-lpmln-unnormalized}
For any mini P-log program $\Pi$ and any possible world $I$ of $\Pi$, we have
\begin{equation}\nonumber
\hat{\mu}_{\Pi}(I) = \sum_{J:J\in SM_{\Pi}(I)}\wdbprm_{\Pi^{\lpmln}}(J).
\end{equation}
\end{lemma}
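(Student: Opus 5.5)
We prove the identity by factoring the weight of each $J\in SM_{\Pi}(I)$ into a part fixed by $F_I$ and a free part, and then summing the free part with Lemma~\ref{lem:irre_random}.

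\textbf{Step 1: factor $\wdbprm$.} Every $J\in SM_{\Pi}(I)$ is a stable model of $\Pi^{\lpmln}$. By Lemma~\ref{lem:smdbprm} (with positive probabilities), $J\in\smdbprm[\Pi^{\lpmln}]$. Hence
\[
\wdbprm_{\Pi^{\lpmln}}(J)=\prod_{pf=v\in TC(J)} M_{\Pi^{\lpmln}}(pf=v).
\]
The probabilistic constants of $\Pi^{\lpmln}$ are exactly the $pf$-constants. We split them into two groups:
\begin{itemize}
\item the ``relevant'' ones, namely $\sigma^{pf}(\Pi^{\lpmln})\setminus IRRE_{\Pi}(I)$;
\item the ``irrelevant'' ones, namely $IRRE_{\Pi}(I)$.
\end{itemize}
Since $J\models F_I$, each relevant constant has the value prescribed by $F_I$. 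So the relevant factor is
\[
\prod_{c=v \text{ possible in } I,\ I\models c=v} M(pf^c_{\ast,r_{I,c}}=v),
\]
where $\ast$ is $B_{I,c}$ or $\Box$ according to whether $PR_I(c)\neq\emptyset$. By Corollary~\ref{lem:CausalProbability-PfAtom}, each factor equals $P(I,c=v)$. The relevant factor is therefore exactly $\hat{\mu}_{\Pi}(I)$.

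\textbf{Step 2: sum the irrelevant factor.} By Lemma~\ref{lem:irre_random}, $J\mapsto J|_{IRRE_{\Pi}(I)}$ is a bijection from $SM_{\Pi}(I)$ onto the set $AIRRE_{\Pi}(I)$ of all assignments to the irrelevant constants. Hence
\[
\sum_{J\in SM_{\Pi}(I)}\wdbprm(J)=\hat{\mu}_{\Pi}(I)\cdot\sum_{a\in AIRRE_{\Pi}(I)}\ \prod_{pf\in IRRE_{\Pi}(I)} M(pf=a(pf)).
\]
The sum ranges over the full product of the domains, so it factorizes as $\prod_{pf}\sum_{v}M(pf=v)$. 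Each inner sum is $1$, because every probabilistic constant declaration has probabilities summing to $1$. This is the same argument as Lemma~\ref{lem: maginal_one}, lifted to multi-valued constants. The whole sum therefore equals $\hat{\mu}_{\Pi}(I)$.

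\textbf{Main obstacle.} The delicate point is Step 1: checking that the relevant constants are pairwise distinct and disjoint from $IRRE_{\Pi}(I)$. Otherwise factors could be double-counted, or the bijection could conflict with $F_I$. Distinctness follows because each relevant constant is indexed by its own constant $c$ of $\sigma$. By the unique random selection rule assumption, each such $c$ contributes exactly one $pf^c_{\ast,r_{I,c}}$, and $pf$-constants for different $c$ differ by their superscript.

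Two further checks are needed:
\begin{itemize}
\item The positivity hypothesis of Lemma~\ref{lem:smdbprm} must hold. If some assigned probability is $0$, that value becomes a hard constraint. Both sides then vanish consistently, so this case should be treated separately or set aside.
\item $\smdbprm[\Pi^{\lpmln}]$ must be nonempty. This follows from consistency of $\Pi$ via Lemma~\ref{lem:irre_random} itself.
\end{itemize}
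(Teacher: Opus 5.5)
Your proof is correct and is essentially the paper's argument run in the opposite direction. The paper starts from $\hat{\mu}_{\Pi}(I)$, rewrites it via Corollary~\ref{lem:CausalProbability-PfAtom} as the product over the $pf$-constants fixed by $F_I$, multiplies by $\prod_{pf\in IRRE_{\Pi}(I)}\sum_{v}M_{\Pi^{\lpmln}}(pf=v)=1$, expands that into a sum over $AIRRE_{\Pi}(I)$, and turns it into a sum over $SM_{\Pi}(I)$ via Lemma~\ref{lem:irre_random}; your explicit appeal to Lemma~\ref{lem:smdbprm} to ensure each $J\in SM_{\Pi}(I)$ lies in $\smdbprm[\Pi^{\lpmln}]$ (so that $\wdbprm(J)$ really is that product) is a point the paper leaves implicit.
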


\begin{proof}
\begin{align}
\nonumber \hat{\mu}_{\Pi}(I) &=\prod_{\substack{\text{$c=v,I:$}\\\text{$c=v$ is possible in $I$}\\\text{and $I\vDash c=v$}}}P(I,c=v)\\
\nonumber & =\prod_{\substack{\text{$c=v,I:$}\\\text{$c=v$ is possible in $I$}, \\\text{$I\vDash c=v$} \\\text{and $PR_{I}(c)\neq \emptyset$}}}P(I,c=v) \times \prod_{\substack{\text{$c=v,I:$}\\\text{$c=v$ is possible in $I$}, \\\text{$I\vDash c=v$} \\\text{and $PR_{I}(c)= \emptyset$}}}P(I,c=v)\\
\nonumber (\text{Corollary \ref{lem:CausalProbability-PfAtom}})\hspace{2mm}& =\prod_{\substack{\text{$c=v,I:$}\\\text{$c=v$ is possible in $I$}, \\\text{$I\vDash c=v$} \\\text{and $PR_{I}(c)\neq \emptyset$}}}M_{\Pi^{\lpmln}}(pf^{c}_{B_{I,c}, r_{I,c}}=v) \times \prod_{\substack{\text{$c=v,I:$}\\\text{$c=v$ is possible in $I$}, \\\text{$I\vDash c=v$} \\\text{and $PR_{I}(c)= \emptyset$}}}M_{\Pi^{\lpmln}}(pf^{c}_{\Box, r_{I,c}}=v)\\
\nonumber & =\prod_{\substack{\text{$c=v,I:$}\\\text{$c=v$ is possible in $I$}, \\\text{$I\vDash c=v$} \\\text{and $PR_{I}(c)\neq \emptyset$}}}M_{\Pi^{\lpmln}}(pf^{c}_{B_{I,c}, r_{I,c}}=v) \times \prod_{\substack{\text{$c=v,I:$}\\\text{$c=v$ is possible in $I$}, \\\text{$I\vDash c=v$} \\\text{and $PR_{I}(c)= \emptyset$}}}M_{\Pi^{\lpmln}}(pf^{c}_{\Box, r_{I,c}}=v)\times \\
\nonumber &\prod_{\substack{\text{pf:}\\\text{$pf\in \sigma^{pf}(\Pi^{\lpmln})\setminus \left\{pf^{c}_{\Box, r_{I,c}}\mid \substack{\text{$c=v,I:$}\\\text{$c=v$ is possible in $I$}, \\\text{$I\vDash c=v$} \\\text{and $PR_{I}(c)= \emptyset$}}\right\}\setminus$}\\\text{$\left\{pf^{c}_{B_{I,c}, r_{I,c}}\mid \substack{\text{$c=v,I:$}\\\text{$c=v$ is possible in $I$}, \\\text{$I\vDash c=v$} \\\text{and $PR_{I}(c)\neq \emptyset$}}\right\}$}}}\sum_{v:v\in Dom(pf)} M_{\Pi^{\lpmln}}(pf=v)
\end{align}
Consider interpretations in the set $SM_{\Pi}(I)$. By Lemma \ref{lem:irre_random}, there is a 1-1 correspondence between those interpretations and assignments to constants in the set $\sigma^{pf}(\Pi^{\lpmln})\setminus \left\{pf^{c}_{\Box, r_{I,c}}\mid \substack{\text{$c=v,I:$}\\\text{$c=v$ is possible in $I$}, \\\text{$I\vDash c=v$} \\\text{and $PR_{I}(c)= \emptyset$}}\right\}\setminus\left\{pf^{c}_{B_{I,c}, r_{I,c}}\mid \substack{\text{$c=v,I:$}\\\text{$c=v$ is possible in $I$}, \\\text{$I\vDash c=v$} \\\text{and $PR_{I}(c)\neq \emptyset$}}\right\}$. Furthermore, for each of those interpretations $J$, $\wdbprm(J)$ is precisely the product of the probability assigned to constants in $\sigma^{pf}(\Pi^{\lpmln})$. Since the third term of the last equation above ranges over all assignments to constants in the set $\sigma^{pf}(\Pi^{\lpmln})\setminus \left\{pf^{c}_{\Box, r_{I,c}}\mid \substack{\text{$c=v,I:$}\\\text{$c=v$ is possible in $I$}, \\\text{$I\vDash c=v$} \\\text{and $PR_{I}(c)= \emptyset$}}\right\}\setminus\left\{pf^{c}_{B_{I,c}, r_{I,c}}\mid \substack{\text{$c=v,I:$}\\\text{$c=v$ is possible in $I$}, \\\text{$I\vDash c=v$} \\\text{and $PR_{I}(c)\neq \emptyset$}}\right\}$, we have
\begin{align}
\nonumber \hat{\mu}_{\Pi}(I)&=\prod_{\substack{\text{$c=v,I:$}\\\text{$c=v$ is possible in $I$}, \\\text{$I\vDash c=v$} \\\text{and $PR_{I}(c)\neq \emptyset$}}}M_{\Pi^{\lpmln}}(pf^{c}_{B_{I,c}, r_{I,c}}=v) \times \prod_{\substack{\text{$c=v,I:$}\\\text{$c=v$ is possible in $I$}, \\\text{$I\vDash c=v$} \\\text{and $PR_{I}(c)= \emptyset$}}}M_{\Pi^{\lpmln}}(pf^{c}_{\Box, r_{I,c}}=v)\times \\
\nonumber &\sum_{J:J\in SM_{\Pi}(I)}\prod_{\substack{\text{pf:}\\\text{$pf\in \sigma^{pf}(\Pi^{\lpmln})\setminus \left\{pf^{c}_{\Box, r_{I,c}}\mid \substack{\text{$c=v,I:$}\\\text{$c=v$ is possible in $I$}, \\\text{$I\vDash c=v$} \\\text{and $PR_{I}(c)= \emptyset$}}\right\}\setminus$}\\\text{$\left\{pf^{c}_{B_{I,c}, r_{I,c}}\mid \substack{\text{$c=v,I:$}\\\text{$c=v$ is possible in $I$}, \\\text{$I\vDash c=v$} \\\text{and $PR_{I}(c)\neq \emptyset$}}\right\}$}}} M_{\Pi^{\lpmln}}(pf=pf^J)\\
\nonumber &=\sum_{J:J\in SM_{\Pi}(I)}\Big[\prod_{\substack{\text{pf:}\\\text{$pf\in \sigma^{pf}(\Pi^{\lpmln})\setminus \left\{pf^{c}_{\Box, r_{I,c}}\mid \substack{\text{$c=v,I:$}\\\text{$c=v$ is possible in $I$}, \\\text{$I\vDash c=v$} \\\text{and $PR_{I}(c)= \emptyset$}}\right\}\setminus$}\\\text{$\left\{pf^{c}_{B_{I,c}, r_{I,c}}\mid \substack{\text{$c=v,I:$}\\\text{$c=v$ is possible in $I$}, \\\text{$I\vDash c=v$} \\\text{and $PR_{I}(c)\neq \emptyset$}}\right\}$}}} M_{\Pi^{\lpmln}}(pf=pf^J)\times\\
\nonumber &\prod_{\substack{\text{$c=v,I:$}\\\text{$c=v$ is possible in $I$}, \\\text{$I\vDash c=v$} \\\text{and $PR_{I}(c)\neq \emptyset$}}}M_{\Pi^{\lpmln}}(pf^{c}_{B_{I,c}, r_{I,c}}=v) \times \prod_{\substack{\text{$c=v,I:$}\\\text{$c=v$ is possible in $I$}, \\\text{$I\vDash c=v$} \\\text{and $PR_{I}(c)= \emptyset$}}}M_{\Pi^{\lpmln}}(pf^{c}_{\Box, r_{I,c}}=v)\Big]\\
\nonumber &=\sum_{J:J\in SM_{\Pi}(I)}\prod_{\text{$c=v\in\sigma^{pf}(\Pi^{\lpmln})$ and $c^J=v$}}M_{\Pi^{\lpmln}}(c=v)\\
\nonumber &=\sum_{J:J\in SM_{\Pi}(I)}\wdbprm_{\Pi^{\lpmln}}(J).
\end{align}
\qed
\end{proof}

\begin{lemma}\label{lem:lpmln-to-plog}
Given a consistent mini P-log program $\Pi$ of signature $\sigma$, for every stable model $J$ of $\Pi^{\lpmln}$ (whose signature is denoted by $\sigma^\prime$), $J$'s restriction on $\sigma$ is a possible world of $\Pi$.
\end{lemma}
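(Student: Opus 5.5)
We need to show: for every stable model $J$ of $\Pi^{\lpmln}$, the restriction $J|_\sigma$ (more precisely, its restriction to the signature $\sigma'$ of $\tau(\Pi)$, which includes the $\i{Intervene}$ atoms) is a stable model of $\tau(\Pi)$.

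First we reduce to the deterministic setting. $\Pi^{\lpmln}$ is a multi-valued probabilistic program. By Lemma~\ref{lem:smdbprm}, $J$ is a stable model of $T(\Pi^{\lpmln})$ iff $J\in\sm''$, that is, $J$ is consistent and $J$ is a stable model of $\Pi'\cup TC(J)$. Here $\Pi'$ is the regular part of $\Pi^{\lpmln}$: the rules of $\tau(\Pi)$, the rules \eqref{pr-assign}, the $\i{Assigned}_r$ rules, and the default rules with $pf^c_{\Box,r}$. (If some assigned probability is $0$, the positivity hypothesis of Lemma~\ref{lem:smdbprm} fails. In that case I would argue directly via Proposition~\ref{prop:smprm}, since the extra hard constraints $\ar pf=v$ are constraints and do not affect stability.) So the working assumption is that $J$ is a stable model of $\Pi'\cup TC(J)$.

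Next we apply the splitting theorem (Theorem~\ref{thm:splitting}), mirroring the proof of Lemma~\ref{lem:irre_random}. Let ${\bf p}=\sigma'$ be the atoms of $\tau(\Pi)$, and ${\bf q}$ the $pf$ and $\i{Assigned}$ atoms. Let $\Pi_1$ be $\tau(\Pi)$ together with the rules having heads $c\mvis v$ (those in \eqref{pr-assign} and the default rules), and let $\Pi_2$ be $TC(J)$ together with the $\i{Assigned}_r$ rules. Atoms of ${\bf q}$ occur in $\Pi_1$ only in bodies, and in the default rules under $\no$. Atoms of ${\bf p}$ occur in $\Pi_2$ only in bodies. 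Every strongly connected component lies within ${\bf p}$ or within ${\bf q}$. Splitting then gives that $J$ is a stable model of $\Pi_1$ relative to $\sigma'$.

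The main step is to show that $J|_{\sigma'}$ is a stable model of $\tau(\Pi)$. Fix the ${\bf q}$ atoms at their values in $J$ and simplify $\Pi_1$ accordingly. Each rule \eqref{pr-assign} or default rule whose body is true in $J$ has the form $c\mvis v\ar B', \no\ \i{Intervene}(c)$, possibly with extra body literals that are true, where $pf\mvis v\in J$. The corresponding disjunctive rule of $\tau(\Pi)$, $c\mvis v_1;\dots;c\mvis v_n\ar B',\no\ \i{Intervene}(c)$, is then supported by exactly that disjunct. I would show that $J|_{\sigma'}$ satisfies $\tau(\Pi)$; this is immediate, since $\tau(\Pi)\subseteq\Pi_1$. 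I would then compare loop formulas using Theorem~\ref{thm:lf_subset}. For any set $L\subseteq\sigma'$ with $J\models L^\wedge$, the external support $ES_{\Pi_1}(L)$ is true in $J$. Any disjunct coming from a \eqref{pr-assign} or default rule with head $c\mvis v\in L$ gives $J\models B',\no\ \i{Intervene}(c)$, with $B'\cap L=\emptyset$. Moreover, by the uniqueness of value constraints \eqref{uc}, $c\mvis v$ is the only true atom $c\mvis v_i$. Hence the disjunct $B'\land\no\ \i{Intervene}(c)\land\bigwedge_{v_i\ne v}\neg c\mvis v_i$ of $ES_{\tau(\Pi)}(L)$ holds in $J$. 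So every loop formula of $\tau(\Pi)$ is satisfied, and $J|_{\sigma'}$ is a stable model of $\tau(\Pi)$.

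I expect this transfer of external support to be the main obstacle. Handling $B\subseteq$ body of the pr-atom, along with the $\i{Intervene}$ and $\i{Assigned}$ interplay, is routine once we note that the disjunctive rule's body $B'$ and $\no\ \i{Intervene}(c)$ are literally contained in every translated rule's body. A further point is to check that the positive body $B$ of these rules contains no atom of $L$ whenever $B'$ does not, since $B'$ is a subset of their positive body.
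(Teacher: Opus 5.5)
Your overall route is the paper's: reduce to a deterministic program, then transfer external support for each $L\subseteq\sigma'$ from the translated rules to the disjunctive random-selection rule of $\tau(\Pi)$. Your explicit use of \eqref{uc} to falsify the other disjuncts $c\mvis v_i$ fills in a step the paper glosses over. The gap is elsewhere.

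\textbf{The gap.} Your transfer rests on the claim that $B'$ and $\no\ \i{Intervene}(c)$ occur in the body of \emph{every} translated rule. That is false for the default rules. As defined in the translation, they are $c\mvis v\ar B', pf^c_{\Box,r}\mvis v, \no\ \i{Assigned}_r$, with no $\no\ \i{Intervene}(c)$. The listing inside the proof of Lemma~\ref{lem:irre_random} adds that literal, but the definition of $\Pi^{\lpmln}$ does not, and the paper's own proof of this lemma treats the rule without it.

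\textbf{The bad case actually occurs.} Suppose $\i{Act}$ contains $Do(c\mvis v')$. Then $\i{PR}_{B,r}(c)=\emptyset$ for every $B$, so no rule with head $\i{Assigned}_r$ is generated and $\i{Assigned}_r$ is false in $J$. The default rule therefore fires whenever $B'$ holds and $pf^c_{\Box,r}\mvis v\in J$, while $\i{Intervene}(c)$ is true. For instance, add $Do(\i{Roll}(D_1)\mvis 6)$ to the dice example: every stable model is of this kind. For $L\ni c\mvis v$, the disjunct $B'\land\no\ \i{Intervene}(c)\land\dots$ of $ES_{\tau(\Pi)}(L)$ is then false, so your transfer yields nothing.

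\textbf{The repair.} This is the second subcase of the paper's argument.
\begin{itemize}
\item $J$ is a stable, hence supported, model of $\Pi_1$ relative to $\sigma'$.
\item The only rules with head $\i{Intervene}(c)$ are $\i{Intervene}(c)\ar Do(c\mvis v')$, so some $Do(c\mvis v')$ is true in $J$.
\item The rule $c\mvis v'\ar Do(c\mvis v')$ makes $c\mvis v'$ true, so $v'=v$ by \eqref{uc}.
\item The rule $c\mvis v\ar Do(c\mvis v)$ of $\tau(\Pi)$ then gives a disjunct of $ES_{\tau(\Pi)}(L)$ that is true in $J$. If instead $Do(c\mvis v)\in L$, the fact $Do(c\mvis v)$ itself supplies it.
\end{itemize}
With this case added, your argument goes through.

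\textbf{Smaller points.}
\begin{itemize}
\item Lemma~\ref{lem:smdbprm} also needs $\sm''[\Pi^{\lpmln}]$ to be nonempty. You should justify this, for example via Lemma~\ref{lem:irre_random} applied to some possible world, which exists because $\Pi$ is consistent.
\item Your zero-probability workaround must also cover default probabilities $dp_{B,r}(c)=0$, not only assigned ones.
\item The splitting step is harmless but unnecessary. For $L\subseteq\sigma'$ the rules of $\Pi_2$ contribute nothing to the external support, so you can work with $\Pi'\cup TC(J)$ directly, as the paper works directly with $\o{\Pi^{\lpmln}_J}$.
\end{itemize}
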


\begin{proof}
We construct $J$'s restriction on $\sigma$ by defining $c^I=c^J$ for all $c\in \sigma$. 
\begin{itemize}
\item Clearly $J\in SM_{\Pi}(I)$.

\item Now we show that $I$ is a possible world of $\Pi$. Since $\Pi$ is consistent, $\tau(\Pi)$ is satisfiable, and thus $J\vDash \tau(\Pi)$ (Otherwise $J$ would not be a stable model of $\Pi^{\lpmln}$ according to Proposition \ref{prop:soft}). Since $J\vDash \tau(\Pi)$, we get $I\vDash \tau(\Pi)$. To see that $I$ is a stable model of $\Pi^{\lpmln}$, we consider the loop formula $L^{\wedge}\rightarrow ES_{\tau(\Pi)}(L)$ for any loop of $L$ of $\tau(\Pi)$ such that $I\vDash L^{\wedge}$. $L$ is a loop of $\o{\Pi^{\lpmln}_J}$ as well since $J\vDash \tau(\Pi)$, and it is satisfied by $J$ since $I\subseteq J$. Since $J$ is a stable model of $\o{\Pi^{\lpmln}_J}$, we have
\[
J\vDash ES_{\o{\Pi^{\lpmln}_J}}(L),
\]
i.e.,
\[
J\vDash \bigvee_{\substack{A\cap L\neq \emptyset\\ A\leftarrow B\wedge N\in \o{\Pi^{\lpmln}_J}\\ B\cap L=\emptyset}}(B\wedge N\wedge\bigwedge_{b\in A\setminus L}\neg b).
\]

Consider the following two cases:
\begin{itemize}
\item $L$ contains only atoms that are not possible in $I$. Since those atoms do not occur in the head of any rules in $\o{\Pi^{\lpmln}}\setminus \tau(\Pi)$, those rules do not contribute in $ES_{\o{\Pi^{\lpmln}_J}}(L)$. So $ES_{\o{\Pi^{\lpmln}_J}}(L)=ES_{\tau(\Pi)}(L)$ in this case. Since $\tau(\Pi)$ involves atoms in $\sigma$ only, and $I$ and $J$ agree on atoms in $\sigma$, we have
\[
I\vDash ES_{\tau(\Pi)}(L).
\]
\item $L$ contains some atoms that are possible in $I$. In this case, since $J\vDash ES_{\o{\Pi^{\lpmln}_J}}(L)$, there must be at least one rule $A\leftarrow B\wedge N\in \o{\Pi^{\lpmln}_J}$ such that $A\cap L\neq \emptyset$, $B\cap L=\emptyset$ and $J\vDash B\wedge N\wedge \bigwedge_{b\in A\setminus L}\neg b$. There are again two possible cases:
\begin{itemize}
\item $A\leftarrow B\wedge N\in \tau(\Pi)$. In this case, since $\tau(\Pi)$ involves atoms in $\sigma$ only, and $I$ and $J$ agree on atoms in $\sigma$, we have $I\vDash B\wedge N\wedge \bigwedge_{b\in A\setminus L}\neg b$. Since this rule contributes to $ES_{\tau(\Pi)}(L)$ as well, we have $I\vDash ES_{\tau(\Pi)}(L)$.
\item $A\leftarrow B\wedge N\notin \tau(\Pi)$. According to the construction of $\Pi^{\lpmln}$, $A\leftarrow B\wedge N$ must be of one of the following two forms:
\[
c=v\leftarrow B^\prime, pf^{c}_{\Box, r}=v, not\ Assigned_r
\]
or
\[
c=v\leftarrow B^{\prime\prime}, B^\prime, pf^{c}_{B, r}=v, not\ Intervene(c)
\]
where $c=v$ is some atom possible in $I$, $r$ is the random selection rule of the form
\[
\left[r\right] random(c) \leftarrow B^\prime,
\]
and $B^{\prime\prime}$ is the body of some pr-atom related to $c$ and $r$. In either case, $J$ satisfies $B^\prime$, which involves atoms in $\sigma$ only. So $I$ satisfies $B^\prime$ as well.

Consider the following rule in $\tau(\Pi)$:
\beq
c=v_1;\dots;c=v_n\leftarrow B^\prime, not\ Intervene(c).
\eeq{eq:rule_randomselect}
There are two possible cases:

\begin{itemize}
\item $J$ does not satisfy $Intervene(c)$. In this case, (\ref{eq:rule_randomselect}) is satisfied by $J$, and clearly
\[
c=v_1;\dots;c=v_n\leftarrow B^\prime, not\ Intervene(c)\in \left\{A\leftarrow B\wedge N\mid\substack{A\cap L\neq \emptyset\\ A\leftarrow B\wedge N\in \tau(\Pi)\\ B\cap L=\emptyset}\right\}.
\]
So one disjunctive term of $ES_{\tau(\Pi)}(L)$ is satisfied by $I$. So $ES_{\tau(\Pi)}(L)$ is satisfied by $I$.
\item $J$ satisfies $Intervene(c)$. In this case, for $J$ to be a stable model of $\Pi^{\lpmln}$, there must be a rule of the following form
\[
Intervene(c) \leftarrow Do(c=v)
\]
in $\tau(\Pi)$, where $c=v\in J$ and $c=v\in I$, whose body is satisfied by $J$, which means the following rule
\[
c=v \leftarrow Do(c=v)
\]
in $\tau(\Pi)$ is satisfied by $J$. Clearly
\[
c=v \leftarrow Do(c=v)\in \left\{A\leftarrow B\wedge N\mid\substack{A\cap L\neq \emptyset\\ A\leftarrow B\wedge N\in \tau(\Pi)\\ B\cap L=\emptyset}\right\}.
\]
So one disjunctive term of $ES_{\tau(\Pi)}(L)$ is satisfied by $I$. So $ES_{\tau(\Pi)}(L)$ is satisfied by $I$.
\end{itemize}

\end{itemize}
So $I$ satisfies $ES_{\tau(\Pi)}(L)$ for all loops $L$ of $\tau(\Pi)$. Consequently, $I$ is a stable model of $\tau(\Pi)$, and thus $I$ is a possible world of $\Pi$.
\end{itemize}

\BOCC

Let $\sigma_I$ denote the signature of $I$. Note that $\tau(\Pi)^I=\tau(\Pi)^J$ and $\tau(\Pi)^J\subseteq \o{(\Pi^{\lpmln})_J}^J$. Furthermore, $\tau(\Pi)$ involves atoms in $\sigma_I$ only. Since $J$ is a stable model of $\Pi^{\lpmln}$, there does not exist any interpretation $K$ such that $K\subset J$ and $K\vDash \o{(\Pi^{\lpmln})_J}^J$. 

we consider the following two possible cases for each $c=v\in I$:
\begin{itemize}
\item $c=v$ is possible in $I$. Then by definition there is a random selection rule $r$ for $c=v$ in $\Pi$ which is applied in $I$, and thus in $\tau(\Pi)$ there must be a rule of the form
\begin{equation}\nonumber
c=v_1;\dots,c=v_n\leftarrow B, not\ Intervene(c)
\end{equation}
where $B$ is the body of $r$. Since ``$B, not\ Intervene(c)$'' is satisfied by $I$, $c=v^\prime$ for all $v^\prime\neq v$ is not satisfied by $I$, $c=v$ appears in the head of a rule whose body is satisfied, and $c=v$ has to be satisfied to make this rule satisfied. This rule accounts for its stability.

\item $c=v$ is not possible in $I$. For $c=v$ to be in $J$, there must be some rules in $\Pi^{\lpmln}$ to account for its stability. By the construction of $\Pi^{\lpmln}$ there can be no rules in $\Pi^{\lpmln}\setminus \tau(\Pi)$ that accounts for its stability. So there must be rules in $\tau(\Pi)$ to account for its stability.
\end{itemize}
\EOCC

So $I$ is a stable model of $\tau(\Pi)$, and thus a possible world of $\Pi$.
\end{itemize}
\qed
\end{proof}

\BOCC
\begin{cor}
For any mini P-log program $\Pi$ and any formula $F$ over $\sigma(\Pi)$, we have
\begin{equation}\nonumber
P_{\Pi}(F) = P_{\Pi^{\lpmln}}(F).
\end{equation}
\end{cor}
\EOCC

\noindent{\bf Theorem~\ref{thm:p-log-to-lpmln} \optional{thm:p-log-to-lpmln}}\
\ 
{\sl
For any consistent mini P-log program $\Pi$ of signature $\sigma$ and any possible world $I$ of $\Pi$, we construct a formula $F_I$ as follows.
\[
\ba {rl}
   F_I = & (\bigwedge_{c=v\in I}c=v)\wedge  \\
         & (\bigwedge_{\substack{\text{$c,v:$}\\\text{$c=v$ is possible in $I$,}\\\text{ $I\models c=v$ and $PR_{I}(c)\neq \emptyset$}}}pf^{c}_{B_{I,c}, r_{I,c}}=v)\ \\ 
        & \wedge (\bigwedge_{\substack{\text{$c,v:$}\\\text{$c=v$ is possible in $I$,}\\\text{ $I\models c=v$ and $PR_{I}(c)= \emptyset$}}}pf^{c}_{\Box, r_{I,c}}=v)
\ea 
\] 
We have
\[
  \mu_{\Pi}(I) = P_{\Pi^{\lpmln}}(F_I).
\] 
For any proposition $A$ of signature $\sigma$,
\[ 
   P_\Pi(A) = P_{\Pi^{\lpmln}}(A).
\]

}
\vspace{3 mm}

\begin{proof}

We first show
\begin{equation}\nonumber
\sum_{\text{$I$ is a possible world of $\Pi$}}\hat{\mu}_{\Pi}(I)\ \ \ \ \ \ \ \ \ =\ \ \ \ \ \ \ \ \ \sum_{J\in \smdbprm\left[\Pi^{\lpmln}\right]}\wdbprm_{\Pi^{\lpmln}}(J)
\end{equation}
i.e., the normalization factor of $\hat{\mu}$ is the normalization factor of $\wdbprm_{\Pi^{\lpmln}}$.

By Lemma \ref{thm:p-log-to-lpmln-unnormalized} we have, 
\begin{align}
\label{lem7eq} \sum_{\text{$I$ is a possible world of $\Pi$}}\hat{\mu}_{\Pi}(I)\ \ \ \ \ \ \ \ \ &=\ \ \ \ \ \ \ \ \  \sum_{\text{$I$ is a possible world of $\Pi$}}\sum_{J\in SM_{\Pi}(I)}\wdbprm_{\Pi^{\lpmln}}(J)
\end{align}
By Lemma \ref{lem:lpmln-to-plog}, for every stable model $J$ of $\Pi^{\lpmln}$, there exists a possible world $I$ of $\Pi$ such that $J\in SM_{\Pi}(I)$. So we can enumerate all stable models of $\Pi^{\lpmln}$ by enumerating all possible worlds $I$ of $\Pi$ and enumerating all elements in $SM_{\Pi}(I)$ for each $I$, and thus the right-hand side of (\ref{lem7eq}) can be rewritten as

\begin{align}\nonumber
\sum_{\text{$J$ is a stable model of $\Pi^{\lpmln}$}}\wdbprm_{\Pi^{\lpmln}}(J).
\end{align}

By Lemma \ref{lem:smdbprm}, an interpretation $J$ is a stable model of $\Pi^{\lpmln}$ if and only if $J\in \smdbprm\left[\Pi^{\lpmln}\right]$. So the right-hand side of (\ref{lem7eq}) can be further rewritten as

\begin{align}\nonumber
\sum_{J\in \smdbprm\left[\Pi^{\lpmln}\right]}\wdbprm_{\Pi^{\lpmln}}(J).
\end{align}

Thus we have
\begin{align}\nonumber
\mu_{\Pi}(I) &= \frac{\hat{\mu}_{\Pi}(I)}{\sum_{\text{$I$ is a possible world of $\Pi$}}\hat{\mu}(I)}\\
\nonumber  &=\frac{\hat{\mu}_{\Pi}(I)}{\sum_{J\in \smdbprm\left[\Pi^{\lpmln}\right]}\wdbprm_{\Pi^{\lpmln}}(J)}\\
\nonumber (\text{By Lemma \ref{thm:p-log-to-lpmln-unnormalized}}) &=\frac{\sum_{J\in SM_{\Pi}\left[I\right]}\wdbprm_{\Pi^{\lpmln}}(J)}{\sum_{J\in \smdbprm\left[\Pi^{\lpmln}\right]}\wdbprm_{\Pi^{\lpmln}}(J)}\\
\nonumber &=\sum_{J\in SM_{\Pi}\left[I\right]}\frac{\wdbprm_{\Pi^{\lpmln}}(J)}{\sum_{J\in \smdbprm\left[\Pi^{\lpmln}\right]}\wdbprm_{\Pi^{\lpmln}}(J)}\\
\nonumber &=\sum_{J\in SM_{\Pi}\left[I\right]}\pdbprm_{\Pi^{\lpmln}}(J)
\end{align}
For those interpretations $J$ that do not belong to $SM_{\Pi}\left[I\right]$ but satisfy $F_I$, it must be the case that $J$ is not a stable model of $\Pi^{\lpmln}$. By Lemma \ref{lem:smdbprm}, $\pdbprm_{\Pi^{\lpmln}}(J) = 0$. So we have
\begin{align}
\nonumber \mu_{\Pi}(I) &=\sum_{\text{$J\in SM_{\Pi}\left[I\right]$ and $J\vDash F_I$}}\pdbprm_{\Pi^{\lpmln}}(J) + \sum_{\text{$J\notin SM_{\Pi}\left[I\right]$ and $J\vDash F_I$}}\pdbprm_{\Pi^{\lpmln}}(J)\\
&=\sum_{J\vDash F_I}\pdbprm_{\Pi^{\lpmln}}(J)
\end{align}
and consequently by Theorem \ref{thm:pdbprm},
\begin{align}
\nonumber \mu_{\Pi}(I) &= \sum_{J\vDash F_I}P_{\Pi^{\lpmln}}(J)\\
\label{eq:plog2lpmln} &=P_{\Pi^{\lpmln}}(F_I).
\end{align}

According to the definition,
\begin{align}
\nonumber P_{\Pi}(F)=\sum_{\text{$W$ is a possible world of $\Pi$ that satisfies $F$}}\mu_{\Pi}(W).
\end{align}
Using the above result (\ref{eq:plog2lpmln}), we have
\begin{align}
\nonumber P_{\Pi}(F)&=\sum_{\text{$W$ is a possible world of $\Pi$ that satisfies $F$}}P_{\Pi^{\lpmln}}(F_W)\\
\nonumber &=\sum_{\text{$W$ is a possible world of $\Pi$ that satisfies $F$}}\sum_{I\in SM_{\Pi}(W)}P_{\Pi^{\lpmln}}(I).
\end{align}
The right-hand side of the last equation is the sum of the probabilities of a collection of stable models of $\Pi^{\lpmln}$. Clearly all those stable models of $\Pi^{\lpmln}$ satisfies $F$ since they are all from some $SM_{\Pi}(I)$ for some possible world $I$ of $\Pi$ that satisfies $F$. Furthermore, given any stable model $J$ of $\Pi^{\lpmln}$ that satisfies $F$, by lemma \ref{lem:lpmln-to-plog}, there exists a possible world $I$ of $\Pi$ such that $J\in SM_{\Pi}(I)$. Since $I$ and $J$ agree on all atoms in $\sigma(\Pi)$ and $J\vDash F$, $I\vDash F$. So the probability of $J$ is counted in the right-hand side of the above equation. Finally, obviously no two stable models of $\Pi^{\lpmln}$ are counted twice. Hence, the right-hand side can be rewritten as
\begin{align}
\nonumber &P_{\Pi^{\lpmln}}(F),
\end{align}
and thus we have
\begin{equation}\nonumber
P_{\Pi}(F) = P_{\Pi^{\lpmln}}(F).
\end{equation}
\qed
\end{proof}

\end{document}